\def\1{\bm{1}}
\newcommand{\train}{\mathcal{D}_{\mathrm{train}}}
\newcommand{\valid}{\mathcal{D}_{\mathrm{valid}}}
\newcommand{\test}{\mathcal{D}_{\mathrm{test}}}
\def\vu{{\bm{u}}}
\def\vv{{\bm{v}}}
\def\vw{{\bm{w}}}
\def\vx{{\bm{x}}}
\DeclareMathAlphabet{\mathsfit}{\encodingdefault}{\sfdefault}{m}{sl}
\SetMathAlphabet{\mathsfit}{bold}{\encodingdefault}{\sfdefault}{bx}{n}
\def\gC{{\mathcal{C}}}
\def\gE{{\mathcal{E}}}
\def\gG{{\mathcal{G}}}
\def\gK{{\mathcal{K}}}
\def\gN{{\mathcal{N}}}
\def\gO{{\mathcal{O}}}
\def\gS{{\mathcal{S}}}
\def\gV{{\mathcal{V}}}
\def\gX{{\mathcal{X}}}
\def\gY{{\mathcal{Y}}}
\def\sR{{\mathbb{R}}}
\newcommand{\E}{\mathbb{E}}
\title{FedCF: Fair Federated Conformal Prediction}
\newcommand*\circled[1]{\tikz[baseline=(char.base)]{
            \node[shape=circle,draw,inner sep=1pt] (char) {#1};}}
\theoremstyle{plain}
\newtheorem{theorem}{Theorem}[section]
\newtheorem{lemma}[theorem]{Lemma}
\theoremstyle{definition}
\newcommand{\calib}{\mathcal{D}_{\mathrm{calib}}}
\newcommand{\predict}{\mathcal{C}_\lambda(\vx)}
\newcommand\swapifbranches[3]{#1{#3}{#2}}
\patchcmd{\DeclarePairedDelimiter}{\@ifstar}{\swapifbranches\@ifstar}{}{}
\DeclarePairedDelimiter{\ceil}{\lceil}{\rceil}
\DeclarePairedDelimiter{\parens}{(}{)}
\DeclarePairedDelimiter{\brackets}{[}{]}
\DeclarePairedDelimiter{\braces}{\{}{\}}
\DeclarePairedDelimiter{\abs}{\lvert}{\rvert}
\def\expandafter\normalsize\expandafter{%
    \normalsize%
    \setlength\abovedisplayskip{4pt}%
    \setlength\belowdisplayskip{4pt}%
    \setlength\abovedisplayshortskip{-8pt}%
    \setlength\belowdisplayshortskip{4pt}%
}
\newcommand\blfootnote[1]{%
  \begin{NoHyper}
  \begingroup
  \renewcommand\thefootnote{}\footnote{#1}%
  \addtocounter{footnote}{-1}%
  \endgroup
  \end{NoHyper}
}
\author{Anutam Srinivasan~\textsuperscript{1,*$\dagger$}, Aditya Vadlamani~\textsuperscript{2,*}, \textbf{Amin Meghrazi~\textsuperscript{2}, Srinivasan Parthasarathy~\textsuperscript{2}}\\ \textsuperscript{1}Georgia Institute of Technology\quad\textsuperscript{2}The Ohio State University \\\texttt{asrinivasan350@gatech.edu, vadlamani.12@osu.edu,}\\\texttt{meghrazi.1@osu.edu, srini@cse.ohio-state.edu} }
\begin{document}
\addtocontents{toc}{\protect\setcounter{tocdepth}{0}}
\maketitle

\begin{abstract}
  Conformal Prediction (CP) is a widely used technique for quantifying uncertainty in machine learning models. In its standard form, CP offers probabilistic guarantees on the coverage of the true label, but it is agnostic to sensitive attributes in the dataset. Several recent works have sought to incorporate fairness into CP by ensuring conditional coverage guarantees across different subgroups. One such method is Conformal Fairness (CF). In this work, we extend the CF framework to the Federated Learning setting and discuss how we can audit a federated model for fairness by analyzing the fairness-related gaps for different demographic groups. We empirically validate our framework by conducting experiments on several datasets spanning multiple domains, fully leveraging the exchangeability assumption.\blfootnote{$^*$Equal Contribution, \textsuperscript{$\dagger$}This work was done while the author was a student at The Ohio State University.}
\end{abstract}
\begin{figure}[H]
    \centering
    \includegraphics[width=\linewidth]{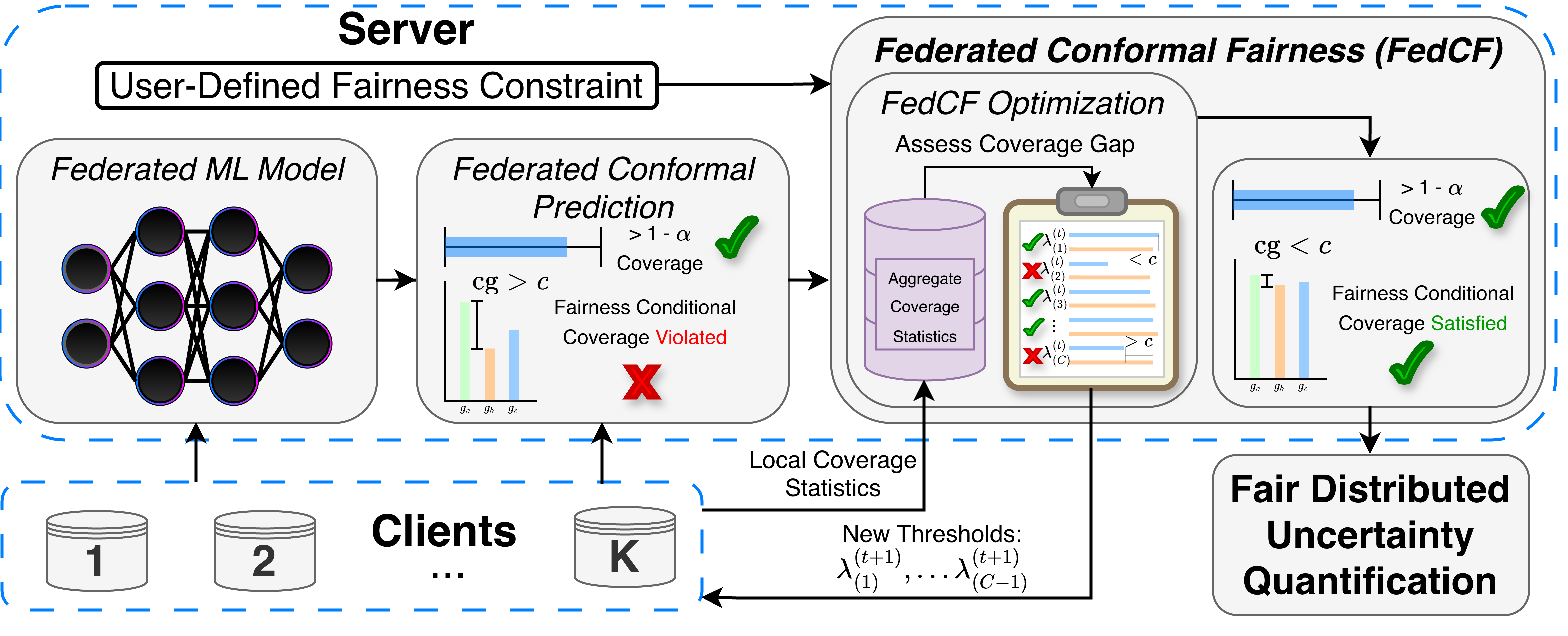}
    \caption{End-to-End FedCF Pipeline.}
    \label{fig:fed_cf:pipeline}
    \vspace{-2mm}
\end{figure}

\section{Introduction}
Ensuring model fairness is an important construct for trustworthy machine learning (ML). ML models, when not calibrated for fairness, are prone to developing biases at each stage of an ML pipeline, as reflected by their predictions \citep{mehrabi2021survey}. We define bias as disparate performance (e.g., classification accuracy) across different sub-populations. During data collection, measurement bias may arise from disproportionate sampling across subpopulations, whereas representation bias may stem from insufficient training data for specific strata. During training, these biases are inductively learned by the model--leading to incorrect predictions in safety-critical tasks. These models are also susceptible to algorithmic bias from regularization and optimization techniques during model training, leading to incorrect generalization for marginalized groups. To mitigate these risks, many ML models must comply with regulations issued by local governing bodies \citep{hirsch2023business}. Towards model compliance \citet{komala2024fair, agrawal2024no, jones2025rethinking} proposed approaches to enhance model fairness in tasks, including federated graph learning and representation learning.

Developing trustworthy ML frameworks with mathematically rigorous guarantees is an important requirement 
for regulated safety-critical tasks. In this context, researchers have explored Conformal Prediction (CP), an uncertainty quantification (UQ) technique that only assumes statistical exchangeability, to develop trustworthy models \citep{vovk2005algorithmic}. 
Practitioners have adopted CP due to its model-free assumption and post-hoc application \citep{cherian2020washington}. Additionally, users can apply CP to structured data (e.g., graphs), which cannot be used with traditional IID-based methods \citep{maneriker2025conformal}. However, vanilla CP is not calibrated for fairness~\citep{cresswell2025conformal}.

Several approaches have been proposed at the intersection of fairness and CP.  
\citet{romano2020malice} developed a CP approach for the regression setting to ensure equalized coverage across protected groups. \citet{lu2022fair} considers equalized coverage in a classification task 
for medical imaging.   
\citet{zhou2024conformal} extend \citet{romano2020malice} and provide an algorithm adaptive to sensitive groups to increase the predictive power of the CP sets when several sensitive attributes are present, and focus on the classification task. 
Lastly, \citet{vadlamani2025a} provides a framework that ensures fair coverage of positive outcomes without requiring protected attributes at inference time, unlike prior work. 
Orthogonally, CP has been used to enhance the fairness of other tasks. To mitigate bias in LLM-based recommender systems, \citet{fayyazi2025facter} explores iteratively using fairness-aware CP. 

The above fair CP methods rely on centralized data, a challenge in domains such as healthcare and finance, where data is often decentralized and data privacy regulations apply.  Federated learning offers a solution by enabling collaborative model training without transferring raw data off local clients.  However, while recent work on federated CP exists~\citep{lu2023federated, humbert2023oneshot}, the shift toward privacy-preserving decentralized training often exacerbates unfairness~\citep{kanaparthy2022, lo2025} in ML models.
Simultaneously balancing fairness and federated requirements in CP is non-trivially challenging due to variability in client requirements and capabilities, cross-client distribution of sensitive attributes, and efficiency.  



\textbf{Key Contributions}: 
We present Federated Conformal Fairness (FedCF), see Figure~\ref{fig:fed_cf:pipeline}, which adapts Conformal Fairness (CF)~\citep{vadlamani2025a} to a federated setting, addressing the above non-trivial challenges while maintaining CF's theoretical guarantees in a distributed setting.

{\noindent \textbf{Extending CF Theory to FL.}} We discuss how to bound conditional coverage according to a user-specified fairness notion when data is decentralized. To facilitate this, we develop a novel \textit{mixture-of-clients formalization}, in which we decompose the conditional coverage into a sufficient set of terms that each client can compute using local data, determine how the server should aggregate these terms to bound the conditional coverage, and theoretically prove the validity of our approach. 

{\noindent \textbf{Descent-Based CF Formulation.}} We revise the original CF algorithm to reduce the number of communication rounds required. We do this by reformulating the original iterative approach presented in~\cite{vadlamani2025a} into a descent-based optimization framework. This allows FedCF to be embedded directly into an FL algorithm and to integrate naturally within existing FL systems.

{\noindent \textbf{Flexible Aggregation Protocols.}} We consider the client-server communication overhead and its tradeoff with preserving data privacy. Specifically, we propose two approaches, with one having less communication overhead and the other being more privacy-preserving of client data. 

{\noindent \textbf{Real-World Empirical Validation.}} We evaluate FedCF on several datasets with naturally induced \textbf{\textit{data heterogeneity}}, across different modalities and  popular fairness metrics. We observe that FedCF can control for a particular coverage gap level while maintaining the original CP guarantee.
\vspace{-5mm}
\section{Background}
\vspace{-3mm}
\subsection{Conformal Prediction}
\label{subsec:cp_background}
Conformal Prediction (CP)~\citep{vovk2005algorithmic} is a widely used framework for quantifying predictive uncertainty in ML. CP provides rigorous statistical guarantees without imposing assumptions on the model, requiring only that the data is \textit{exchangeable}--a broader condition than IID and compatible with non-IID or structured settings (e.g., graphs).

We focus on split (inductive) CP in the classification setting. Let $\bm{x}_i\in\gX = \sR^d$ and $y_i\in\gY = \braces{0, \dots, C - 1}$ denote features and labels. Given a calibration dataset, $\calib = \braces{(\bm{x}_i, y_i)}_{i = 1}^n$, {our goal is} to construct a {set-valued predictor} $\gC$ such that, for an exchangeable test point $(\bm{x}_{\text{test}}, y_{\text{test}})$, 
\begin{equation}\textstyle
    1 - \alpha \leq \Pr\brackets{y_{\text{test}}\in \gC(\vx_{\text{test}})} \leq 1 - \alpha + \frac{1}{\abs{\calib} + 1},
    \label{eq:cp}
\end{equation}
where $1 - \alpha\in (0, 1)$ is the target \textit{coverage level}. We refer to Equation \ref{eq:cp} as the \textit{coverage guarantee}. Concretely, given a non-conformity score $s: \gX\times\gY\to\sR$, define the \textit{conformal quantile} as
{\small
\begin{equation}\label{eq:quantile}
\hat{q}(\alpha) = \text{Quantile}\parens{\frac{\ceil{(n+1)(1-\alpha)}}{n}; \{s(\bm{x}_i, y_i)\}_{i=1}^{n}}.
\end{equation}
}
{The resulting prediction set} $\gC_{\hat{q}(\alpha)}\parens{\bm{x}_{\text{test}}} = \{y \in \gY: s(\bm{x}_{\text{test}}, y) \leq  \hat{q}(\alpha)\,\}$ {satisfies the guarantee in} \ref{eq:cp}.

\noindent{\bf Evaluating CP}: {Two standard metrics are considered (averaged over the test set):} (1) \textit{Coverage}, {the estimated test-time probability}, $\Pr\brackets{y_{\text{test}}\in \gC_{\hat{q}(\alpha)}(\bm{x}_{\text{test}})}$; and (2) \textit{Efficiency}, the prediction set size, $\abs{\gC_{\hat{q}(\alpha)}(\bm{x}_{\text{test}})}$. {These are typically in tension as a higher desired coverage leads to larger sets.}

\subsection{Federated Learning}
A key contributor to developing strong deep learning models is providing a large amount of quality training data~\citep{kaplan2020}. However, in domains including healthcare and finance, collecting large amounts of data may be prohibitive due to privacy concerns. Federated learning (FL)~\cite{mcmahan2017} is a framework for collaborative learning that keeps training data decentralized and private. Given $K$ clients that participate in training, each client has its own training data that it wants to keep private. The goal is to optimize a global loss function $L$ that is the weighted average of local risk functions $\ell_k$. Formally, FL finds weights $\theta^*$ s.t.
{$
    \theta^{*} =
    \underset{\theta}{\text{argmin}} \braces{L(\theta) = \sum_{k = 1}^K w_k \E_{({x^{(k)}, y^{(k)})}\sim P_k} \brackets{\ell_k\parens{\theta; x^{(k)}, y^{(k)}}}},
$}
where $P_k$ is client $k$'s local distribution and $\vw\in\Delta^K$ are weights~\citep{lu2023federated}. 

\subsection{Federated Conformal Prediction (FCP)}
\label{subsec:federated_cp}
\noindent{\bf Setting:}
In FL, the development and calibration datasets are partitioned over
$K$ clients. Meaning, each client $k\in\braces{1, \dots, K} = \gK$ retains a private calibration set
{\small $\calib^{(k)}=\braces{\parens{\vx^{(k)}_i,y^{(k)}_i}}_{i=1}^{n_k}$} drawn from an unknown local
distribution $P_k$. The goal is to still construct a prediction set function $\gC$ such that for any test point $(\vx_{\text{test}},y_{\text{test}})\sim Q_{\text{test}}$, where $Q_{\mathrm{test}}=\sum_{k=1}^{K}\gamma_k P_k$ is the mixture distribution with weights
$\gamma_k\propto(n_k+1)$~\citep{lu2023federated}, Equation \ref{eq:cp} is satisfied. This is done while respecting the communication and privacy constraints of FL.



\noindent{\bf Partial exchangeability and the FCP algorithm.}
FCP~\citep{lu2023federated} relies on the notion of \emph{partial exchangeability}~\citep{deFinetti1971-DEFOTC}, by requiring that within each client, with probability $\gamma_k$, \(
\{{s({\vx^{(k)}_1,y^{(k)}_1}),\dots,s({\vx^{(k)}_{n_k},y^{(k)}_{n_k}}),
  s\parens{\vx_{\mathrm{test}},y_{\mathrm{test}}}}\}
\) is exchangeable. This assumption helps account for real-world label skew and data heterogeneity amongst clients.
Under this assumption, FCP aggregates all non-conformity scores and
selects the $\parens{(1-\alpha)(N+K)}$-th statistic as follows
{\small
\begin{equation}
\hat{q}(\alpha) = \text{Quantile}{\parens{\frac{\ceil{(N + K)(1-\alpha)}}{N}; \braces{\parens{\vx^{(k)}_i,y^{(k)}_i}}_{k,i}}}
\label{eq:fedcf:fedcp_quantile}
\end{equation}
}
where $N=\sum_{k=1}^{K}n_k$. The prediction set $C_{\alpha}(\vx)=\{y:s(\vx,y)\le\hat q_{\alpha}\}$ then
satisfies,
\begin{equation}\textstyle
1-\alpha
\;\le\;
\Pr\brackets{y_{\mathrm{test}}\in C_{\alpha}(\vx_{\mathrm{test}})}
\;\le\;
1-\alpha+\frac{K}{N+K}.
\label{eq:fed_cp_guarantee}
\end{equation}
\noindent{\bf Communication-efficient quantile sketches.}
To preserve privacy, instead of transmitting all $N$ scores to the server, each client can send a mergeable sketch (e.g., T-Digest~\citep{dunning2021}, DDSketch~\citep{masson2019}). Doing so loosens the guarantee given in Eq.~\eqref{eq:fed_cp_guarantee}~\citep{lu2023federated}.


\subsection{Conformal Fairness}
\label{subsec:cf_background}
The \emph{Conformal Fairness (CF)} framework \citep{vadlamani2025a} formalizes the notion of fair prediction sets by considering the disparity in \emph{conditional} coverage between sensitive groups and improves upon the marginal coverage guarantees of traditional CP, where different groups face disparate coverage.
Formally, let $M$ denote a fairness metric (e.g. Equal Opportunity) and defines the filter function $F_M:\mathcal{X}\times\mathcal{Y}\times\mathcal{G}\times\mathcal{Y}^{+}\to \{0, 1\}$, where $\mathcal{G}$ is a set of sensitive attributes, and $\mathcal{Y}^+ \subseteq \mathcal{Y}$ is the set of advantaged outcomes, to assess conditional coverages. 
For example, $F_M$ for Equal Opportunity is defined as $F_M(\vx_\mathrm{test}, y_\mathrm{test}, g, y):=\mathbf{1}[\vx_\mathrm{test} \in g \wedge y_\mathrm{test}=y]$. Then, CF finds class-wise thresholds, $\lambda_\text{opt} \in \mathbb{R}^{|\mathcal{Y}|}$ and defines $\gC_{\lambda_\text{opt}}\parens{\bm{x}_{\text{test}}} = \{y \in \gY: s(\bm{x}_{\text{test}}, y) \leq  \lambda_{\text{opt};(y)}\,\}$ where ${\lambda_{\text{opt};(y)}}$ is the threshold for class $y$, such that, $\forall\, g_a,g_b\in\mathcal{G},\ \tilde y\in\gY^+$:
{\small
\begin{equation}\label{eq:cf_guarantee}
\left|\Pr\!\big[\tilde y\in \gC_{\lambda_\text{opt}}(\bm{x}_\text{test})\mid F_M(\bm{x}_\text{test}, y_\text{test}, g_a, \tilde y) = 1\big]
\hspace{-1.2mm}-\hspace{-1.2mm}
\Pr\!\big[\tilde y\in \gC_{\lambda_\text{opt}}(\bm{x}_\text{test})\mid F_M(\bm{x}_\text{test}, y_\text{test}, g_b, \tilde y) = 1\big]\right|
\;\hspace{-1.5mm}\le\;\hspace{-1.5mm} c,
\end{equation}
}

where $c$ is a user-specified closeness criterion dictating how strictly fairness must be satisfied. CF searches over $\Lambda = [\lambda_0, \lambda_\text{max}]$ to minimize ${\lambda_{\text{opt};(y)}}$ (thereby the prediction set size) while satisfying Equation \ref{eq:cf_guarantee}. CF sets $\lambda_0 = \hat{q}(\alpha)$ to retain the coverage guarantee $\Pr\!\big[ y\in \gC_{\lambda_\text{opt}}(\bm{x}_\text{test})\big] \ge 1-\alpha$, while $\lambda_\text{max}$ is the largest value the non-conformity score may take.
To determine if a threshold $\lambda$ satisfies \eqref{eq:cf_guarantee} under exchangeability, CF finds tight upper and lower bounds $U_\text{cov}(\lambda, F_M, g, \tilde{y})$ and $L_\text{cov}(\lambda, F_M, g, \tilde{y})$ such that $L_\text{cov}(\lambda, F_M, g, \tilde{y}) \le \Pr\!\big[\tilde y\in \gC_{\lambda_\text{opt}}(\vx_\mathrm{test})\mid F_M(\vx_\mathrm{test}, y_\mathrm{test}, g, \tilde y) = 1\big] \le U_\text{cov}(\lambda, F_M, g, \tilde{y})$ and formally computes the coverage gap, 
\begin{equation}\label{eq:cg_form_1}
    \textstyle\text{cg}(\lambda, F_M, \tilde y, \mathcal{G}):= \max\limits_{g_a \in \mathcal{G}}\braces{U_\text{cov}(\lambda, F_M, g_a, \tilde{y})}-\min\limits_{g_b \in \mathcal{G}}\braces{L_\text{cov}(\lambda, F_M, g_b, \tilde{y})}.
\end{equation}
For a given positive label $\tilde y$, CF returns the minimum $\lambda$ that satisfies $\text{cg}(\lambda, F_M, \tilde y, \mathcal{G}) \le c$ as ${\lambda_{\text{opt};(\tilde y)}}$. For labels $y \notin \mathcal{Y}^{+}$, CF returns ${\lambda_{\text{opt};( y)}} = \lambda_0$. With this procedure, CF offers a theoretically grounded way to bound fairness disparities (Eq. \eqref{eq:cf_guarantee}) without sacrificing CP’s finite-sample coverage. Note that CF does not aim to equalize coverage across groups or classes (like other works~\citep{romano2020malice}), but instead controls the \textit{coverage gap} across groups for different labels. More details on Conformal Fairness can be found in Appendix~\ref{app:fed_cf:descent_analysis}.

\section{FedCF Theory and Methodology}\label{sec:methods}
In this section, we begin by establishing the theoretical and methodological foundations. We introduce a descent-based reformulation of the CF Framework (\ref{subsec:cf_descent}) that is conducive to the FL setting. Next, we extend the notion of the \textit{coverage gap} (\ref{subsec:fed_cov_gap}) and clarify our assumptions and the theoretical guarantees we can derive. Finally, we present the \emph{Fed}erated \emph{C}onformal \emph{F}airness (FedCF) Framework.

\subsection{Revisiting the Conformal Fairness Algorithm}
\label{subsec:cf_descent}
One drawback of the original CF algorithm is that it performs a grid search over a \textit{discretized space} to find the minimal $\lambda$ satisfying the fairness specification. This requires computing the coverage gap for each positive label every iteration, which is inefficient in the FL setting due to client-server communication. We introduce a \textit{descent-based}\footnote{We use the term descent-based since we are adaptively searching for a minimal $\lambda$ that satisfies the fairness constraint. The term descent-based does not imply the use of gradients~\citep{liu2020descent, Golovin2020Gradientless}.} CF algorithm also seen in Algorithm~\ref{alg:descent_cf_algo_extended}.

\noindent{\bf Input:} The core optimization step requires a positive label, $\tilde{y}\in\gY^+$, the clients' calibration data, $\{\calib^{(k)}\}_{k\in\gK}$ and a filter function, $F_M$ to compute the coverage gap across the set of groups $\gG$. For fairness evaluation, we specify a closeness criterion, $c$. For the optimization hyperparameters, we specify the number of optimization rounds, initial learning rate $\eta$, and momentum $\mu$. We initialize $\lambda_0 = \hat{q}(\alpha)$ as defined in \eqref{eq:fedcf:fedcp_quantile} to ensure the original coverage guarantee~\eqref{eq:fed_cp_guarantee} is satisfied.

\noindent{\bf Problem:} Given the inputs, let $\lambda\in\Lambda$ be a threshold and $\text{cg}(\lambda, F_M, \tilde y, \mathcal{G})$ be the coverage gap evaluated at $\lambda$. The optimization problem for CF can be formulated as
\begin{equation}\label{eq:classwise_formulation}
    \forall\tilde{y} \in \mathcal{Y}^{+}:  {\lambda_{\text{opt};(\tilde y)}} = \min_{\lambda\in\Lambda} \lambda\quad \text{ subject to } \text{cg}(\lambda, F_M, \tilde y, \mathcal{G}) - c \leq 0.
\end{equation}
\noindent{\bf Procedure:} In round $t$, for each $\tilde y\in\gY^+$, given the current threshold {\small$\lambda^{(t)}_{(\tilde y)}$} and current optimal (and satisfactory) threshold {\small$ \bar\lambda_{\text{opt};(\tilde y)}^{(t)}$}, we can compute the coverage gap {\small$\text{cg}_t\coloneqq\text{cg}(\lambda^{(t)}_{(\tilde y)}, F_M, \tilde y, \mathcal{G})$} and evaluate whether it adheres to our fairness constraint. The update rule for {\small$\lambda^{(t)}_{(\tilde{y})}$} is,
{\small
\begin{equation}
\lambda^{(t+1)}_{(\tilde y)}
= \lambda^{(t)}_{(\tilde y)} + \eta\,2^{-p_t}\, b_{t+1},
\quad
b_{t+1} = \mu b_t + (\mathrm{cg}_t - c),
\quad
p_t = \max\!\left\{\left\lceil \log_2 ({\eta}/{a_{\max}}) \right\rceil, 0\right\},
\end{equation}
\begin{equation}
a_{\max} = \min\!\left\{\eta,\; \frac{\Delta\lambda}{|b_{t+1}|+\epsilon}\right\},
\quad
\Delta\lambda =
\begin{cases}
\bar{\lambda}^{(t)}_{\mathrm{opt};(\tilde y)} - \lambda^{(t)}_{(\tilde y)}, & b_{t+1} \ge 0,\\
\lambda^{(t)}_{(\tilde y)} - \lambda_0, & b_{t+1} < 0,
\end{cases}
\end{equation}
}
Our approach does not stop once a satisfactory $\lambda$ is found. We continue exploring smaller values (unlike in a discretized grid). In the case where our modified step size is $b_{t+1} > 0$, meaning the update rule takes us to a higher $\lambda_{t+1}$, and satisfies the fairness constraint, we perform a random restart 
to promote further exploration and escape local minima, since $\text{cg}(\lambda, F_M, \tilde{y}, \mathcal{G})$ is piecewise-monotonic \cite{milnor2006iterated}. For more details, see Appendix~\ref{app:fed_cf:descent_analysis} and Algorithm~\ref{alg:descent_cf_algo_extended} (lines~\ref{alg_step:random_restart_start} -\ref{alg_step:random_restart_end}). We note that this algorithm directly applies to the federated setting, with one important consideration: the coverage gap computation.
\subsection{Extending Coverage Gap to the Federated Setting}
\begin{table}[htbp!]
    \vspace{-5mm}
    \small
    \centering
    \caption{Important notation used for coverage gap calculation.}
    \label{tab:fed_cf:notation_table_short}
    \begin{tabular}{rp{3.25cm}|rp{6cm}}
        \toprule
        \textbf{Notation} & \textbf{Definition} & \textbf{Notation} & \textbf{Definition} \\
        \midrule
        $n_k$ & $\abs{\calib^{(k)}}$
        & $\calib^{(k)}$ & Client $k$'s calibration dataset. \\
        
        $n_k^{(g,\tilde y)}$ & $\abs{\gS_{k}^{(g,\tilde y)}}$
        & $\gS_{k}^{(g,\tilde y)}$ & $\scriptsize \braces{\textstyle (\vx_i, y_i) \in \calib^{(k)} \mid F_M(\vx_i, y_i, g, \tilde{y})=1}$ \\
        
        $\gamma_k$ & $\Pr[E_k]$
        & $E_k$ & The event $\bm x_\text{test}$ is exchangeable with $\calib^{(k)}$. \\
        
        $\pi^{(g, \tilde{y})}$ & \multirow{2}{*}{\parbox{\linewidth}{Point estimate for\\Term {\scriptsize \circled{IV}} in Equation \ref{eq:dissected_fair_cov}.}}
        & $L^{(g, \tilde{y})}, U^{(g, \tilde{y})}$ & Bounds for Term {\scriptsize \circled{IV}} in Equation \ref{eq:dissected_fair_cov}.\\
        
        &  & $\alpha_k^{(g,\tilde y); \lambda}$ & $\sum_{(\vx_i, \_) \in \gS_{k}^{(g,\tilde y)} }\1\brackets{s(\vx_i,\tilde y) \leq \lambda}$
        \\
        \bottomrule
    \end{tabular}
\end{table}
\label{subsec:fed_cov_gap}
\noindent\textbf{Assumptions \& Notation:} 
To account for heterogeneity in client data in FL, the presented theory assumes \emph{partial exchangeability}. It requires only that one instance of a group-label pair be observed across all clients (cf. Remark \ref{rem:data_req}). No further distributional assumptions are made here. For clarity, Table \ref{tab:fed_cf:notation_table_short} summarizes the notation used in our main theorem, with a complete table in Appendix \ref{app:fed_cf:notation}.

\noindent{\textbf{Mixture-of-Clients Coverage Gap Formulation.}} In the federated setting, since the data is decentralized, we cannot directly estimate the terms in \eqref{eq:cf_guarantee}. Thus, we rewrite the quantity as: 
\begin{small}
    \begin{align}
    &\underbrace{\Pr\brackets{s(\vx_{\text{test}},\tilde{y})\leq \lambda\mid F_M(\vx_{\text{test}}, y_{\text{test}}, g, \tilde{y})=1}}_{\Pi_{cov}}=\sum\limits_{k=1}^{K} \Big(\underbrace{\Pr\brackets{s(\vx_{\text{test}},\tilde{y})\leq \lambda\mid F_M(\vx_{\text{test}}, y_{\text{test}}, g, \tilde{y})=1, E_k}}_\text{\circled{I}}\nonumber\\&\textstyle\qquad\cdot\underbrace{\Pr\brackets{F_M(\vx_{\text{test}}, y_{\text{test}}, g, \tilde{y}) = 1\mid  E_k}}_\text{\circled{II}}\cdot\underbrace{\Pr\brackets{E_k}}_\text{\circled{III}}\Big)\cdot \Big(\underbrace{\Pr\brackets{F_M(\vx_{\text{test}}, y_{\text{test}}, g, \tilde{y}) = 1}}_\text{\circled{IV}}\Big)^{-1}
    \label{eq:dissected_fair_cov}
    \end{align}
\end{small}
With this reformulation, we can estimate individual terms--either locally on each client or globally on the server. The following theorem presents bounds for the fairness-specific coverage level. Bounds for the individual terms are given in Lemmas \ref{lem:client_fair_cov}, \ref{lem:F_M_bound}, and \ref{lem:prior_term}. The corresponding proofs for the lemmas and theorems are provided in Appendix \ref{app:fed_cf:proofs}. 
\begin{restatable}[Interval Bounds]{theorem}{thmMainCovThm}\label{thm:main_cov_thm}
    The fairness-specific coverage level, $\Pi_\text{cov}$, given by Equation \ref{eq:dissected_fair_cov}, can be bounded as
    {$
        L_\text{cov}(\lambda, F_M, g, \tilde{y}) \leq \Pi_\text{cov} \leq U_\text{cov}(\lambda, F_M, g, \tilde{y}),
    $}
    where
    \begin{align}
        &L_\text{cov}(\lambda, F_M, g, \tilde{y}) = \sum_{k=1}^{K} \frac{\gamma_k\alpha_k^{(g,\tilde y); \lambda}}{(n_k+1)U^{(g,\tilde y)}}~\text{and}~U_\text{cov}(\lambda, F_M, g, \tilde{y}) = \sum_{k=1}^{K} \frac{\gamma_k(\alpha_k^{(g,\tilde y); \lambda}+1)}{(n_k+1)L^{(g,\tilde y)}}.
    \end{align}
    and $L^{(g,\tilde y)} = \sum_{k=1}^{K} \gamma_k\frac{n_k^{(g,\tilde y)}}{n_k+1}$ and $U^{(g,\tilde y)} =  \sum_{k=1}^{K} \gamma_k\frac{n_k^{(g,\tilde y)}+1}{n_k+1}$
\end{restatable}

Theorem \ref{thm:main_cov_thm} gives us bounds for the coverage level, which translate into bounds for the coverage gap between groups for fairness evaluation. We note that $\lambda_\text{opt} = \lambda_\text{max}$ is a degenerate solution satisfying the closeness criterion, and formally show that the condition to ensure entries of $\lambda_\text{opt} < \lambda_\text{max}$:


\begin{restatable}{theorem}{thmIntervalWidthBound}\label{thm:iw_bound}
Given a closeness criterion, $c$ and $\gamma_k = \frac{n_k+1}{N+K}$, in order to have non-vacuous prediction sets (i.e., $\lambda < \lambda_\textrm{max}$), we need $N^{(g,\tilde y)}:= \sum_{k=1}^{K}n_k^{(g,\tilde{y})} \geq \frac{2K}{c}$ for all $(g,\tilde y) \in \mathcal{G}\times \mathcal{Y}^{+}$.
\end{restatable}

\begin{restatable}{remark}{remDataReq}
\label{rem:data_req}
We only require a certain amount of data \textbf{across} the federation--not per-client.
\end{restatable}

\subsubsection{Approaches to Improve Prediction Set Size (Efficiency)}\label{subsubseq:improve_eff}
As seen in \citet{vadlamani2025a}, improving fairness inevitably comes at the cost of utility (efficiency). To address this tension, we explore two approaches.

\noindent\textbf{I. MLE:} FedCF relies on the broad assumption of partial exchangeability, similar to FCP, resulting in more conservative bounds. With stronger assumptions, we can get a less conservative bound. For example, suppose the data satisfies a notion of \textit{partial IID}, that is, for each client $k$,  with probability $\gamma_k$, \(
\braces{{s({\vx^{(k)}_1,y^{(k)}_1}),\dots,s({\vx^{(k)}_{n_k},y^{(k)}_{n_k}}),
  s\parens{\vx_{\mathrm{test}},y_{\mathrm{test}}}}}
\) is \textit{IID}. Note this is \textbf{not} the same as assuming IID across the federation. Then we can construct a point estimate of our coverage bound using maximum likelihood estimation (MLE). This gives us a tighter estimate, but may violate the earlier finite-sample guarantees. We formalize this in the following theorem with the proof in Appendix~\ref{app:fed_cf:proofs}. 
\begin{restatable}[Point (MLE) Estimate]{theorem}{thmMainCovThmMLE}\label{thm:main_cov_thm_mle}
    If we assume partial-IID, using MLE estimates for each term, we get the following estimate for the fairness-specific coverage level,
        $\Pi_\text{cov} =\sum_{k=1}^{K} \frac{\gamma_k\alpha_k^{(g,\tilde y); \lambda}}{n_k\pi^{(g,\tilde y)}}$.
\end{restatable}

\noindent\textbf{II. Using Group Information} \label{subsec:group_info} FedCF \emph{does not require group-information} at inference time. However, FedCF can be modified to use group information, such that $\lambda_\text{opt} \in \Lambda^{|\mathcal{Y}|\times|\mathcal{G}|}$. For a label $y$ the threshold vector is $\lambda_{\text{opt};(y)} \in \Lambda^{|\mathcal{G}|}$, the problem becomes,
\begin{equation}\label{eq:groupwise_cf}
    \min_{\lambda\in \Lambda^{|\mathcal{G}|}}f(\lambda) \quad \text{subject to }\quad \text{cg}(\lambda, F_M, \tilde y, \mathcal{G}) - c \leq 0,
\end{equation}
where $f$ is the scalar objective, and $\text{cg}$ is redefined to accommodate different thresholds for each group. In this work, we take $f(\lambda) = \lambda^{\top}1$ to be the elementwise sum and approximately solve \eqref{eq:groupwise_cf}. Theoretical details of problem \eqref{eq:groupwise_cf} and how we solve it are in Appendix~\ref{app:additional_theory}.
\vspace{-2mm}
\subsection{FedCF: Federated Conformal Fairness framework}

Having established the sufficient terms to compute the fairness-specific coverage gap, we now present the FedCF framework, which is depicted in Figure~\ref{fig:fed_cf:pipeline}. We discuss FedCF in the context of the interval-bounds estimates from Theorem \ref{thm:main_cov_thm}, noting the discussion also applies to the point-estimate case by setting $L_\text{cov} = U_\text{cov} = \Pi_\text{cov}$. Recall, the coverage gap is given by Equation \ref{eq:cg_form_1} or equivalently,
{\small
\begin{equation}
    \label{eq:cg_form_2}
    \textstyle
    \text{cg}(\lambda, F_M, \tilde y, \mathcal{G}) = \max\limits_{g_a, g_b \in \mathcal{G}}\braces{U_\text{cov}(\lambda, F_M, g_a, \tilde{y})-L_\text{cov}(\lambda, F_M, g_b, \tilde{y})}.
\end{equation}}
While Equations \ref{eq:cg_form_1} and \ref{eq:cg_form_2} are mathematically equivalent, their formulations lead to two different communication and aggregation strategies, demonstrating the tradeoff between \textbf{communication overhead} and \textbf{privacy}. We present the \textit{communication efficient} protocol in the main paper and the \textit{enhanced privacy} protocol in Appendix \ref{app:fed_cf:privacy}. In Appendix \ref{app:fed_cf:privacy}, we also present a hybrid protocol, where clients select whether to use the \textit{communication efficient} or \textit{enhanced privacy} protocol.
We include FedCF extensions concerning differential privacy in Appendix \ref{app:fedcf:diff_privacy}. 

Note that $U_\text{cov}$ and $L_\text{cov}$ depend on $L^{(g, \tilde y)}$ and $U^{(g, \tilde y)}$, respectively. Since $L^{(g, \tilde y)}$ and $U^{(g, \tilde y)}$ are also computed in a federated manner on the server, we compute them \textit{prior} to computing the coverage gap for any particular $\lambda$ value\footnote{The prior term is computed analogously to the coverage gap.}.
Given that these priors are available on the server, we can compute the fairness-specific coverage gap. From Theorem \ref{thm:main_cov_thm}, each client computes and sends two values for each $(g, \tilde{y})\in \gG\times\gY^+$ pair: $\frac{\alpha_k^{(g,\tilde y); \lambda}}{n_k + 1}$ and $\frac{\alpha_k^{(g,\tilde y); \lambda} + 1}{n_k+1}$. Once the server receives these pairs from each client, it aggregates these quantities to derive $L_\text{cov}$ and $U_\text{cov}$ for each $(g, \tilde{y})$ and computes the coverage gap (Equation \ref{eq:cg_form_1}). $U_\text{cov}$ is limited to $1$ to reconcile $\Pr[\,\cdot\,] \leq 1$ for any event.
\vspace{-2mm}
\paragraph{Communication Complexity and Privacy Implications.} Each client is responsible for sending messages of size totaling $\gO\parens{2\cdot\abs{\gG}\abs{\gY^{+}}}$ to the server per server round. While this is linear in terms of the number of $(g, \tilde{y})$ pairs, we note that with enough $\lambda$s, the server can learn the distribution of $\Pr\brackets{s(\bm{x}_{\text{test}},\tilde{y})\leq \lambda \mid F_M(\bm{x}_{\text{test}}, y_{\text{test}}, g, \tilde{y})=1, E_k}$. 
\setlength{\tabcolsep}{1mm}

\section{Experiments}\label{sec:experiments}
\subsection{Setup.}
\paragraph{Datasets.} We evaluate the FedCF framework on four multi-class datasets: (1, 2) ACSIncome and ACSEducation \citep{ding2021retiring}, (3) Pokec-\{n, z\} \citep{takac2012data}, (4) Fitzpatrick \cite{groh2021evaluating}. Since these datasets were not originally designed for FL, we partition them into clients. 
For ACS, we use U.S. state/territory data with \textbf{six} regional schemes, yielding 4 (small), 8 (large), or 51 (all) clients, plus continental-only variants ($4$, $8$, and $48$ clients, respectively). For Pokec-\{n, z\}, each graph is treated as a client, as they are from distinct partitions of the larger \textit{Pokec} social network. Finally, for Fitzpatrick, since there is no natural partitioning scheme, we apply a Dirichlet partitioner~\citep{yurochkin2019} with $\alpha = 0.5$ to create $K\in\braces{2, 4, 8}$ clients. We use a $30/20/25/25$ stratified split for the full $\train/\valid/\calib/\test$.

\paragraph{Base Models.} For the ACS datasets, we use XGBoost~\citep{chen2016}. For Pokec-{n,z}, we use GraphSAGE with GCN aggregation~\citep{hamilton2017}. For Fitzpatrick, we use ResNet-18~\citep{he2016}. GraphSAGE and ResNet are trained with FedAvg~\citep{mcmahan2017}, while XGBoost uses FedXgbBagging from Flower~\citep{beutel2020flower}.

\paragraph{Baseline.} We construct a federated (fairness-agnostic) conformal predictor targeting a coverage level of $1 - \alpha = 0.9$ using FCP with T-Digest. We set $\gamma_k\propto (n_k + 1)$. For the non-conformity score, we use APS~\citep{romano2020classification} and RAPS~\citep{angelopoulos2022uncertaintysetsimageclassifiers} for all datasets, and DAPS~\citep{zargarbashi23conformal}, a graph-specific method, for Pokec-\{n,z\}. Descriptions of the scores are in Appendix \ref{app:scores}. We then assess fairness using $\lambda = \hat{q}(\alpha)$ for three popular group-fairness metrics, reformulated in Table \ref{tab:fed_cf:conf_fairness_defs}--Demographic Parity, Equal Opportunity, and Predictive Equality.

\paragraph{Evaluation Metrics:} We report two key metrics: (1) \textit{efficiency}, and (2) \textit{worst-case fairness disparity}. The latter captures the largest difference in conditional coverage across groups under the chosen fairness metric. For example, under \textit{Demographic Parity}, we report:
\begin{align}
\max_{\tilde{y}\in\gY^+}\max\limits_{g_a,g_b\in\mathcal{G}}\big|&\Pr\!\big[\tilde y\in \gC_{\lambda}(\vx_{test})\mid \vx_{test}\in g_a\big]-\Pr\!\big[\tilde y\in \gC_{\lambda}(\vx_{test})\mid \vx_{test}\in g_b\big]\big|.
\end{align}
All methods achieve the desired $1 - \alpha$ coverage since FedCF only ever increases $\lambda$. Coverage results are included in Appendix~\ref{app:fedcf:more_results}. Additional details on datasets and experimental setup are in Appendix \ref{app:fed_cf:experimental_details}.
\vspace{-2mm}
\subsection{Results}
In each figure, we use a \textbf{solid} line to represent the \textit{average} efficiency of the \textbf{base federated conformal predictors} across different thresholds and a \textbf{dashed} line to represent the corresponding \textit{average} worst-case fairness disparity. The bar plot shows the efficiency and worst-case fairness disparity using FedCF, while the \textbf{dots} indicate the \textit{desired} fairness disparity. We report the average base performance for clarity and readability. In all experiments, FedCF meets the fairness disparity bound under the closeness criterion $c$ (up to minor finite-sample variation), unlike the baseline FCP.
\vspace{-1.5mm}
\paragraph{Preserves Key Characteristics of CF.} Two important characteristics of the CF framework are that it is (1) agnostic to the specific non-conformity score function and (2) supports intersectional fairness. We demonstrate that our FedCF framework preserves these two characteristics via the Pokec-\{n, z\} dataset. Pokec-\{n, z\} each have two sensitive attributes: \textit{region} and \textit{gender}. In addition to considering each attribute individually, we can treat each \textit{pair} of attributes as distinct and apply FedCF. Furthermore, Pokec-\{n, z\} is a graph dataset. Recently, several developments have been made in graph CP research on non-conformity scores that utilize the graph structure. In addition to two standard CP methods--APS and RAPS--we also provide results using DAPS. Figure \ref{fig:pokec_intersec} shows how the FedCF framework can achieve the desired fairness criterion with minimal cost to efficiency for different non-conformity scores and when considering multiple groups.
\begin{figure}[ht!]
    \centering
    \begin{subfigure}{\textwidth}
    \centering
        \includegraphics[width=0.75\linewidth]{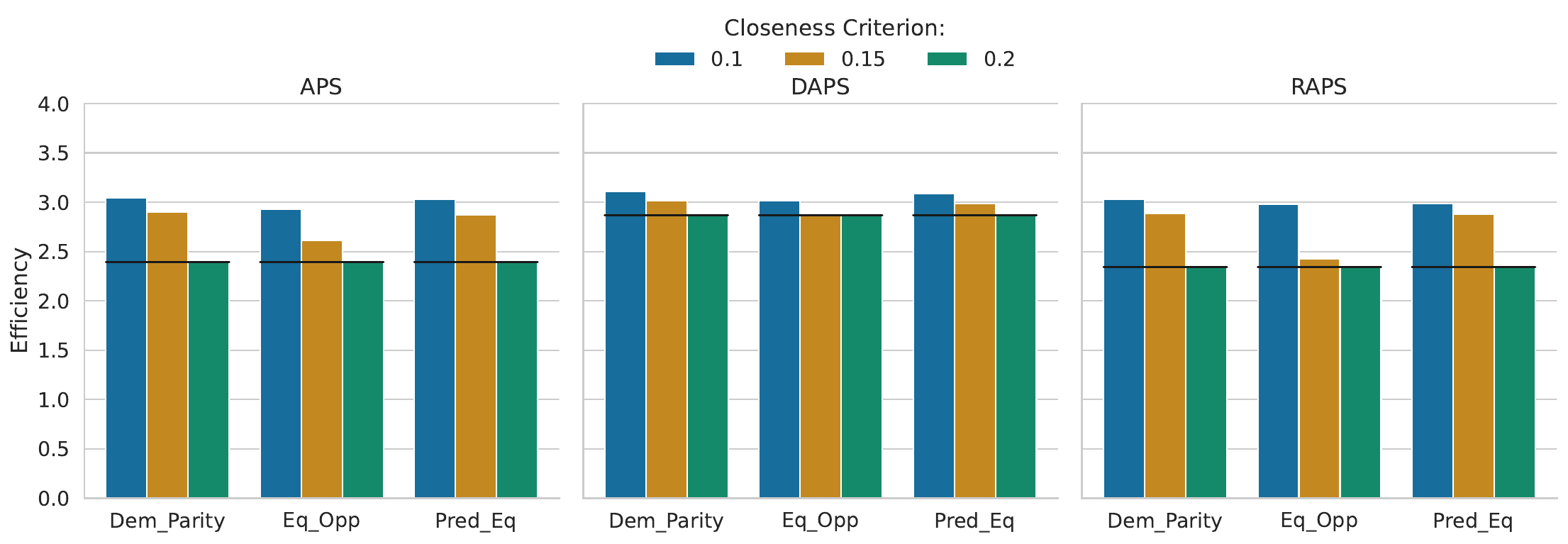}
    \end{subfigure}
    \begin{subfigure}{\textwidth}
    \centering
    \includegraphics[width=0.75\linewidth]{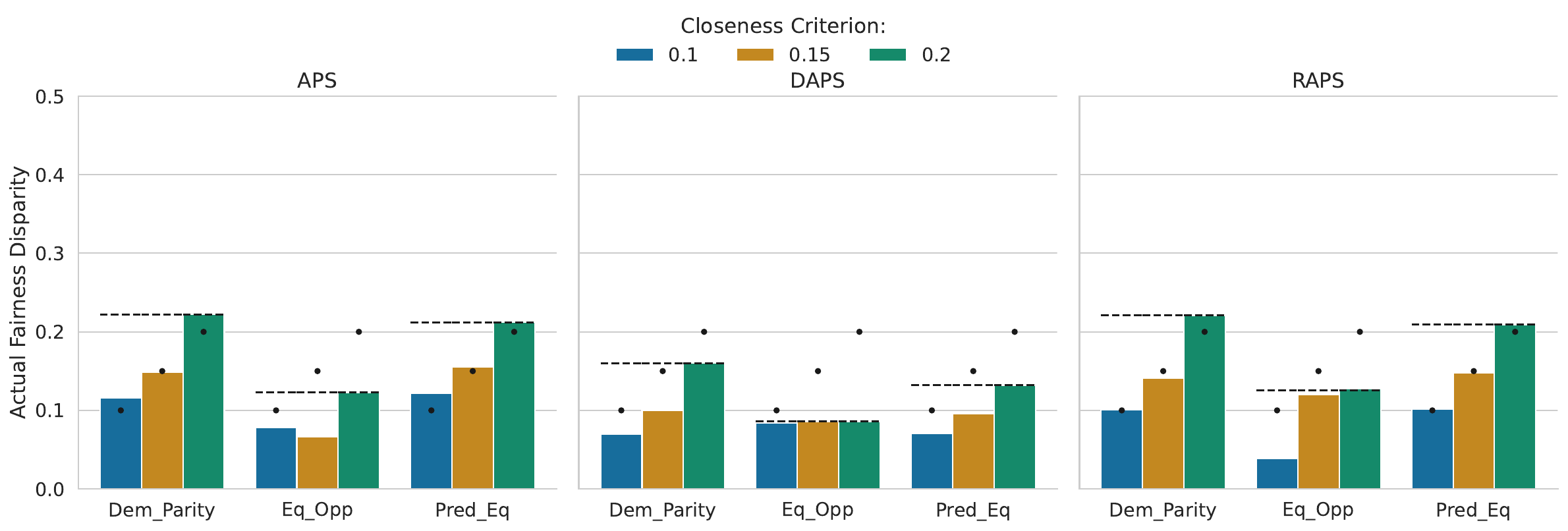}
    \end{subfigure}
    \captionsetup{font=small,labelfont=small}
    \caption{\textbf{Pokec-\{n, z\}} using \textbf{both} sensitive attributes. The top plots present the efficiency results, while the bottom plots are for the fairness disparities for (a) APS, (b) DAPS, and (c) RAPS. In all cases, FedCF achieves the desired closeness criteria better than the base federated conformal predictors.}
    \vspace{-5mm}
    \label{fig:pokec_intersec}
\end{figure}
\vspace{-1.5mm}
\paragraph{Robust Performance with Different Numbers of Clients.} An important trade-off in trustworthy FL is between predictive utility and maintaining fairness/privacy guarantees for each of its clients, which is increasingly challenging as the number of clients grows~\citep{wen2023survey}. To demonstrate how our framework can adapt to the number of clients, we use a Dirichlet partitioner with the Fitzpatrick dataset to evaluate performance with $K\in\braces{2, 4, 8}$ clients in addition to a centralized setup with a single client. In Table~\ref{tab:client_num_expt}, as the number of clients increases, the baseline performance worsens, but the FedCF framework can still control for the necessary closeness criterion. We omit Equal Opportunity for Fitzpatrick as it is not meaningful in the context of this dataset, which aims to predict a skin condition with the sensitive attribute being skin type. People with certain skin types are more likely to develop certain skin conditions, so the true positive rates of a classifier will typically not equalize, resulting in a degenerate (meaningless) solution. 
\begin{table*}[ht]
\centering
\scriptsize
\captionsetup{font=small,labelfont=small}
\caption{\textbf{Fitzpatrick using RAPS}. Each entry is of the form, \textbf{efficiency/fairness disparity}. We bold the lower fairness disparity value.  This table contains the results for $K\in\{1, 2, 4, 8\}$ clients. We observe that FedCF consistently matches or outperforms the base federated conformal predictor with disparity less than $c$.}
\label{tab:client_num_expt}
\vspace{-0.2cm}
\subfloat[$c = 0.1$]{%
\begin{tabular}{lcccccccc}
\toprule
 & \multicolumn{2}{c}{1 client} & \multicolumn{2}{c}{2 clients} & \multicolumn{2}{c}{4 clients} & \multicolumn{2}{c}{8 clients} \\
\cmidrule(lr){2-3} \cmidrule(lr){4-5} \cmidrule(lr){6-7} \cmidrule(lr){8-9}
Metric & Base & Ours & Base & Ours & Base & Ours & Base & Ours \\
\midrule
\textbf{Dem\_Parity} & 2.356 / 0.151 & 3.647 / \textbf{0.103} & 2.565 / 0.163 & 4.149 / \textbf{0.153} & 2.844 / 0.293 & 4.684 / \textbf{0.099} & 3.502 / 0.166 & 5.214 / \textbf{0.089} \\
\textbf{Pred\_Eq} & 2.356 / 0.111 & 3.647 / \textbf{0.109} & 2.543 / 0.177 & 4.140 / \textbf{0.177} & 2.837 / 0.287 & 4.564 / \textbf{0.102} & 3.502 / 0.171 & 5.293 / \textbf{0.083} \\
\bottomrule
\end{tabular}}

\vspace{0.1cm}

\subfloat[$c = 0.15$]{%
\begin{tabular}{lcccccccc}
\toprule
 & \multicolumn{2}{c}{1 client} & \multicolumn{2}{c}{2 clients} & \multicolumn{2}{c}{4 clients} & \multicolumn{2}{c}{8 clients} \\
\cmidrule(lr){2-3} \cmidrule(lr){4-5} \cmidrule(lr){6-7} \cmidrule(lr){8-9}
Metric & Base & Ours & Base & Ours & Base & Ours & Base & Ours \\
\midrule
\textbf{Dem\_Parity} & 2.356 / \textbf{0.151} & 2.356 / \textbf{0.151} & 2.565 / 0.163 & 2.793 / \textbf{0.163} & 2.837 / 0.291 & 3.347 / \textbf{0.114} & 3.498 / 0.166 & 3.859 / \textbf{0.088} \\
\textbf{Pred\_Eq} & 2.356 / \textbf{0.111} & 2.356 / \textbf{0.111} & 2.543 / 0.177 & 2.778 / \textbf{0.177} & 2.844 / 0.289 & 3.296 / \textbf{0.144} & 3.496 / 0.170 & 3.867 / \textbf{0.087} \\
\bottomrule
\end{tabular}}

\vspace{0.1cm}

\subfloat[$c = 0.2$]{%
\begin{tabular}{lcccccccc}
\toprule
 & \multicolumn{2}{c}{1 client} & \multicolumn{2}{c}{2 clients} & \multicolumn{2}{c}{4 clients} & \multicolumn{2}{c}{8 clients} \\
\cmidrule(lr){2-3} \cmidrule(lr){4-5} \cmidrule(lr){6-7} \cmidrule(lr){8-9}
Metric & Base & Ours & Base & Ours & Base & Ours & Base & Ours \\
\midrule
\textbf{Dem\_Parity} & 2.356 / \textbf{0.151} & 2.356 / \textbf{0.151} & 2.541 / \textbf{0.161} & 2.541 / \textbf{0.161} & 2.844 / 0.293 & 3.260 / \textbf{0.164} & 3.500 / 0.166 & 3.528 / \textbf{0.146} \\
\textbf{Pred\_Eq} & 2.356 / \textbf{0.111} & 2.356 / \textbf{0.111} & 2.541 / \textbf{0.177} & 2.541 / \textbf{0.177} & 2.837 / 0.287 & 3.256 / \textbf{0.167} & 3.501 / 0.171 & 3.524 / \textbf{0.150} \\
\bottomrule
\vspace{-1cm}
\end{tabular}}
\end{table*}
\vspace{-2mm}
\paragraph{Efficiency vs Fairness Trade-Off.} To make FedCF an actionable framework, it is essential to understand the utility trade-off when imposing fairness constraints. Using the interval-based approach to estimate the coverage gap gives a finite-sample guarantee for controlling fairness gaps. However, sometimes imposing fairness results in a severe cost to utility. For example, a degenerate conformal predictor (one with near-full efficiency) is ``fair,'' but completely impractical for use. By relaxing theoretical guarantees, we can improve efficiency by using tighter estimates of the coverage gap, such as MLE point estimates. Figure \ref{fig:mle_expt} compares the efficiency and fairness disparities when using interval bounds vs the MLE estimate on the ACSEducation dataset. We observe that with the interval bounds, we are always within the closeness criterion, but the efficiencies are quite high. Alternatively, using MLE estimates may exceed the desired closeness criterion, but be fairer than the baseline and not sacrifice as much efficiency. 
\begin{wraptable}{r}{0.4\textwidth}
\vspace{-1em}
    \centering
    \footnotesize
\captionsetup{font=scriptsize,labelfont=footnotesize}
    \caption{\textbf{ACSEducation (continental small, $4$ clients), \textbf{RAPS}} for \textbf{Demographic Parity} \textit{Average} Efficiency.}
    \label{tab:fedcf:classwise_and_groupwise}
    \vspace{-2mm}
    \begin{tabular}{c c c}
        \toprule
        $c$ & Classwise & (Group, Class)-wise \\
        \midrule
        0.1  & 5.006 & \textbf{4.791} \\
        0.15 & 4.823 & \textbf{4.391} \\
        0.2  & 3.975 & \textbf{3.613} \\
        \bottomrule
    \end{tabular}
    \vspace{-2mm}
\end{wraptable}
If available, using group information at inference time is another way to improve efficiency. Table~\ref{tab:fedcf:classwise_and_groupwise} compares the efficiency of FedCF using the classwise $\lambda_{\text{opt}; (y)}$ versus using a (group, class)-wise $\lambda_{\text{opt}; (y, g)}$. We observe that FedCF can flexibly use available group information, when available, to improve efficiency, though it can still provide guarantees even when such information is unavailable.
\begin{figure}[htbp!]
    \centering
    \begin{subfigure}{0.475\textwidth}
    \centering
        \includegraphics[width=\linewidth]{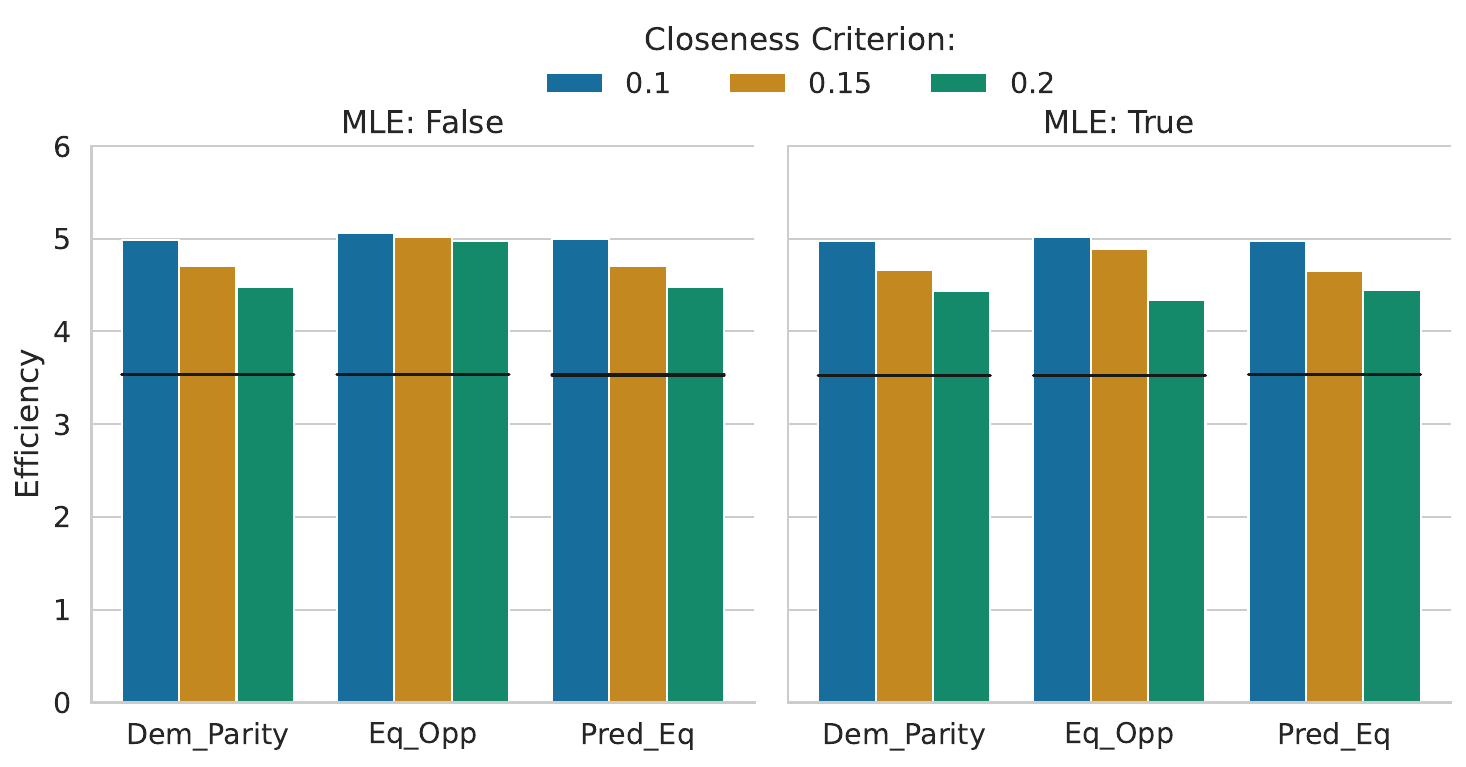}
    \end{subfigure}%
    \begin{subfigure}{0.5\textwidth}
    \centering
        \includegraphics[width=\linewidth]{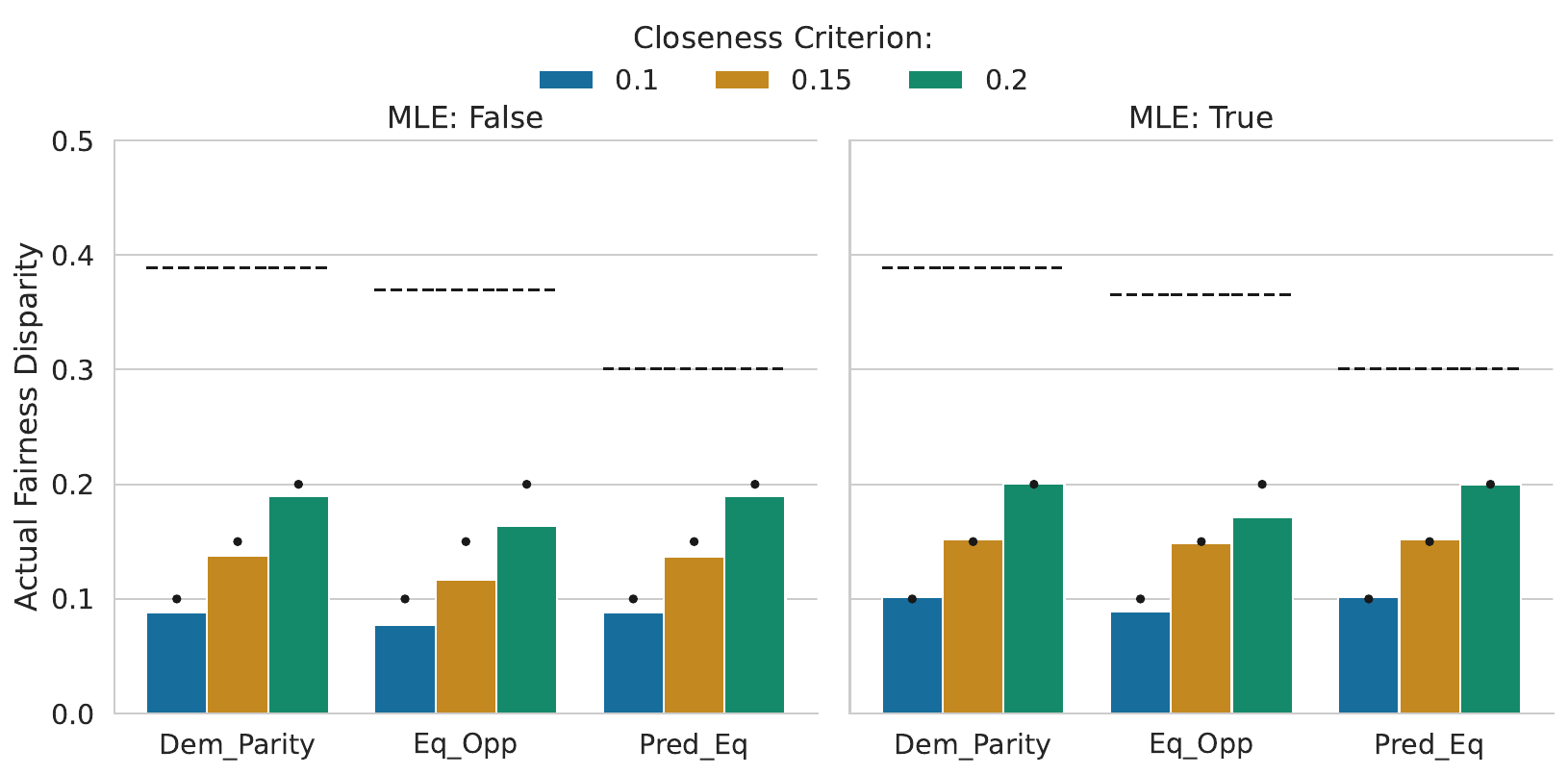}
    \end{subfigure}
    \caption{\textbf{ACSEducation (continental all, $48$ clients) using RAPS.} The left two plots show the efficiency for (a) using the interval bounds and (b) using the MLE estimate. Similarly, the two right plots show the fairness disparity (c) using the interval bounds and (d) using the MLE estimate. We observe that with the MLE estimates, FedCF achieves better efficiency at the cost of a higher worst-case fairness disparity. Both outperform the baseline in terms of fairness disparity.}
    \label{fig:mle_expt}
    \vspace{-2mm}
\end{figure}

\vspace{-3mm}
\section{Discussion}
\paragraph{On Data Heterogeneity.} In a practical federated setting, the data distribution varies across clients--resulting in data heterogeneity, which can affect performance at inference time~\citep{wen2023survey}. To address these concerns, we evaluate FedCF on varying partitioning schemes. For Fitzpatrick, we use a probabilistic partitioning scheme to ensure the data is distributed in a particular manner. For the ACS and Pokec-\{n, z\} datasets, the partitioning is \emph{naturally induced} by state and region properties, resulting in real-world data heterogeneity. Across all partitioning schemes, we show FedCF controls for the desired closeness criterion, $c$, and adapts to data heterogeneity.  

\vspace{-2mm}
\paragraph{On Data Requirements.} A limitation of CP is that, to achieve a desired coverage rate, practically, you require a large enough calibration dataset so that the interval width for the CP guarantee is tight enough. This is true in both the CF and FedCF framework as it requires sufficient calibration data for each group-positive label pair (across the federation for FedCF). For intersectional fairness, the multiplicative increase in the number of groups further increases the data requirements. 
\vspace{-2mm}
\paragraph{On Framework Extensibility.} In this work, we've presented the base FedCF framework and introduced several key axes along which FedCF can be extended. 
First, enforcing fairness can have utility (efficiency) costs, and FedCF can be extended to mitigate these costs using tighter MLE estimates instead of interval bounds or by leveraging group information at test time (Appendix~\ref{app:additional_theory}). 
Second, the data privacy can be improved in FedCF using the \textit{Enhanced Privacy} communication approach in Appendix \ref{app:fed_cf:privacy} or by adding statistical noise via Gaussian or Exponential mechanisms for formal differential privacy guarantees (Appendix \ref{app:fedcf:diff_privacy}).   
Third, we discuss how FedCF can assist with ML fairness auditing in Appendix~\ref{app:fed_cf:audit}, thereby closing the ML development loop.

\vspace{-2mm}
\section{Conclusion}
\vspace{-2mm}
In this work, we extended the Conformal Fairness framework to a federated setting, introducing the novel and comprehensive FedCF framework. We reformulated the CF framework to use a descent-based approach to make it more efficient for FL applications. Additionally, we developed theoretically grounded protocols to enable coverage gap calculations in a federated manner. The FedCF framework offers clients a choice of participation protocols, including \textit{communication efficient} and \textit{enhanced privacy} options. We conducted experiments on various non-conformity scores and datasets--including graph data, where we leverage the exchangeability assumption from CP. 

\vspace{-3mm}
\paragraph{Limitations.} This work assumes the notion of partial exchangeability, similar to FCP, which we build on top of; however, in real-world settings, this may be violated, for example, in settings where there is dynamic drift~\citep{jiang2022harmoflharmonizinglocalglobal,ramezanikebrya2023federatedlearningcovariateshifts,zec2025overcominglabelshifttargetaware}. To redress, one option we hope to explore is the use of non-exchangeable CP~\citep{Tibshirani2019,barber2023conformal} to develop fair and federated solutions~\citep{plassier2024efficientconformalpredictiondata}. 
\vspace{-3mm}
\paragraph{Future Work.}  We plan to explore using Robust FCP \citep{kang2024certifiably} in lieu of FCP and also apply the FedCF method post-hoc to work under adversarial conditions.
We will also explore how FedCF can be extended to split learning~\citep{gupta2018}.
Unlike FL, which trains full models locally and aggregates updates, split learning divides the model across clients and server, sharing only partial computations (enhanced privacy, reduced compute).

\bibliography{references}
\bibliographystyle{plainnat}
\clearpage
\appendix
\counterwithin{figure}{section}
\counterwithin{table}{section}
\counterwithin{algorithm}{section}
\renewcommand\thefigure{\thesection.\arabic{figure}}
\renewcommand\thetable{\thesection.\arabic{table}}
\renewcommand\thealgorithm{\thesection.\arabic{algorithm}}

\addtocontents{toc}{\protect\setcounter{tocdepth}{2}}

\renewcommand{\contentsname}{Appendix Table of Contents}

\begin{spacing}{0.8}
\tableofcontents
\end{spacing}
\newpage
\section{Impact Statement}\label{app:fedcf:impact_statement}
This work develops a trustworthy machine learning framework for fair uncertainty quantification and decision-making in federated settings. Using a principled mixture of clients' formulations in the federated, we can provide theoretical bounds and guarantees that translate into both rigorous fairness evaluation under uncertainty as well as guidance for ML fairness auditing. By accounting for the privacy constraints of FL, our method is suitable for use and will have positive impacts across several domains (e.g., finance and healthcare).
\section{Notation Table}
\label{app:fed_cf:notation}
\begin{table}[htbp]
    \centering
    \caption{Common notation used in FedCF.}
    \label{tab:fed_cf:notation_table}
    \begin{tabular}{cc}
        \toprule
        \textbf{Notation}  & \textbf{Definition} \\
        \midrule
         $\gG$ & The set of all demographic groups. \\
        $\gY, \gY^+$ & The set of labels and positive/advantaged labels, respectively.\\
        $g \text{ and } \tilde y$ & The group $g \in \mathcal{G}$ and positive label $\tilde y \in \mathcal{Y}^{+}$.\\ 
        $F_M$ & Filter function for fairness metric $M$.\\
        $c$ & Closeness criterion for a fairness specification.\\
        $\lambda$ & Threshold used for constructing test prediction sets.\\
        \midrule
        $\gK$ & The set of clients, $\braces{1,\dots, K}$.\\
        $\train^{(k)}/\valid^{(k)}/\calib^{(k)}$ & Client $k$'s train/validation/calibration dataset.\\
        $n_k$ and $N$ & $|\calib^{(k)}|$ and $\sum\limits_{k=1}^{K}n_k$, respectively. \\
        \midrule
        $\gS_k^{(g,\tilde y)}$ and $n_k^{(g,\tilde y)}$ & $\braces{(\vx_i, y_i) \in \calib^{(k)} \mid F_M(\vx_i, y_i, g, \tilde{y})=1}$ and $\abs{\gS_{k}^{(g,\tilde y)}}$ \\
        $\alpha_k^{(g,\tilde y); \lambda} $ & $\sum\limits_{(\vx_i, \_) \in \gS_{k}^{(g,\tilde y)} }\1\brackets{s(\vx_i,\tilde y) \leq \lambda}$ \\
        $E_k$ and $\gamma_k$ & The event $\bm x_\text{test}$ is exchangeable with data from client $k$ and $\Pr[E_k]$.\\
        \midrule
        $L^{(g, \tilde y)}, U^{(g, \tilde y)}$ & Bounds for prior (term {\scriptsize\circled{IV}}).\\
        $\pi^{(g, \tilde y)}$ & Point estimate for prior (term {\scriptsize{\circled{IV}}}).\\
        $L_\text{cov}, U_\text{cov}$ & Bounds for fairness-specific coverage level.\\
        $\Pi_\text{cov}$ & Point estimate for fairness-specific coverage level.\\
        \bottomrule
    \end{tabular}
\end{table}

\section{Conformal Fairness Background}\label{app:fed_cf:cf_background}

\noindent\textbf{From point predictions to set-based fairness.}
Let \(\gC_{\lambda}(\vx)=\{y\in\gY:\, s(\vx,y)\le \lambda\}\) denote the CP prediction set at score threshold \(\lambda\).
CF adapts classical group-fairness metrics by replacing point-prediction events (i.e., the predicted label equals the advantaged label: \(\tilde{y} = \hat{y}\)) with set-membership events (i.e., the advantaged label is in the prediction set: \(\tilde y\in \gC_{\lambda}(\vx)\)) and evaluating disparities across groups \(\mathcal{G}\) and, when appropriate, advantaged labels \( \gY^+ \). For example, a set-based Demographic Parity-style constraint can be written as,
{\small $
\left|\Pr\!\big[\tilde y\in \gC_{\lambda}(\vx)\mid \vx\in g_a\big]
-
\Pr\!\big[\tilde y\in \gC_{\lambda}(\vx)\mid \vx\in g_b\big]\right|
\;\le\; c$},  $\forall\, g_a,g_b\in\mathcal{G},\ \tilde y\in\gY^+$ with analogous set-based forms for other common group-fairness metrics as seen in Table~\ref{tab:fed_cf:conf_fairness_defs}.

\noindent\textbf{Conditional coverage as the fairness control knob.}
To evaluate a chosen fairness notion, CF filters the calibration data to the relevant subpopulation (e.g., a group or a group-and-label slice) via a filter function, $F_M$, and uses conditional coverage estimates under that filter. (For example, $F_M$ for Equal Opportunity is defined as $F_M(\vx_\text{test}, y_\text{test}, g, y):=\mathbf{1}[\vx_\text{test} \in g \cap y_\text{test}=y]$.) It then searches a threshold space \(\Lambda\) to identify \(\lambda_{\mathrm{opt}}\) that satisfies the closeness criterion across groups (and labels, if required) while maintaining CP validity. A key ingredient is that CP coverage holds when labels are fixed to a particular \(\tilde y\), which underpins group- and class-conditional control in CF.

\noindent\textbf{Guarantees and trade-offs.}
CF provides a theoretically grounded and empirically validated procedure to bound fairness disparities (Equation \eqref{eq:cf_guarantee}) without sacrificing CP’s finite-sample coverage guarantees. In practice, satisfying stricter fairness requirements (smaller closeness criterion \(c\)) increases the average prediction set size, reflecting the fairness–efficiency trade-off. 

\noindent\textbf{Practical advantages.} Unlike many conditional-CP methods that require group membership at inference time or are model-specific, CF’s set-based metrics and thresholding procedure do not require access to protected attributes at test time and apply to different non-conformity scores and data modalities, making it compatible with downstream deployment constraints.

\begin{table*}[htbp]
    \tiny
    \centering
    \caption{Formulations for Conformal Fairness Metrics. Definitions hold $\forall g_a, g_b\in\gG$ and $\forall \tilde y\in \gY^{+}$}
    \begin{tabular}{cc} 
        \toprule
        \textbf{Metric} & \textbf{Definition} \\ 
        \midrule
         Demographic (or Statistical) Parity & $\abs{\Pr\brackets{\tilde{y}\in \predict ~\Big| ~ \vx\in g_a} - \Pr\brackets{\tilde{y}\in \predict ~\Big| ~ \vx\in g_b}} < c$ \\ [2ex]
        Equal Opportunity & $\abs{\Pr\brackets{\tilde{y}\in \predict ~\Big|~ y = \tilde{y}, \vx\in g_a} - \Pr\brackets{\tilde{y}\in \predict ~\Big|~ y = \tilde{y}, \vx\in g_b}} < c$ \\ [2ex]
        Predictive Equality & $\abs{\Pr\brackets{\tilde{y}\in \predict ~\Big|~ y \neq \tilde{y}, \vx\in g_a} - \Pr\brackets{\tilde{y}\in \predict ~\Big|~ y\neq \tilde{y}, \vx\in g_b}} < c$ \\
        \bottomrule
    \end{tabular}
    \label{tab:fed_cf:conf_fairness_defs}
\end{table*} 

\section{CF Optimization Approach}\label{app:fed_cf:descent_analysis}

\begin{algorithm}[H]
\caption{Descent-Based\protect\footnotemark CF Optimization}
\label{alg:descent_cf_algo_extended}
\begin{algorithmic}[1]
\FUNCTION{\textsc{Fair\_Opt\_Descent}($\tilde y$, $\lambda_0$, $\mathcal{G}$, $c$, $F_M$, $\text{num\_rounds}$, $\eta$, $\mu$)}
    \STATE $\bar\lambda_{\text{opt};(\tilde y)} \gets 1$
    \STATE $b_0 \gets 0$
    \FOR{$t = 0$ to $\text{num\_rounds}-1$}
        \STATE $\text{cg}_t \gets \text{cg}(\lambda^{(t)}_{(\tilde y)}, F_M, \tilde y, \mathcal{G})$\label{alg:descent_cf_cg_calc}
        \STATE $u_t \gets \mathbf{1}\!\left[\text{cg}_t \le c \ \land\ \lambda^{(t)}_{(\tilde y)} < \bar\lambda_{\text{opt};(\tilde y)}\right]$
        
        \IF{$\text{cg}_t \le c$ \AND $t = 0$}
            \STATE \textbf{return} $\lambda_0$
        \ELSIF{$u_t$}
            \STATE $\bar\lambda_{\text{opt};(\tilde y)} \gets \lambda^{(t)}_{(\tilde y)}$
        \ENDIF
        
        \STATE $b_{t+1} \gets \mu\, b_t + (\text{cg}_t - c)$
        
        \IF{$u_t$ \AND $b_{t+1} > 0$}
        \label{alg_step:random_restart_start}
            \STATE $b_{t+1} \gets 0$
            \STATE $\lambda^{(t+1)}_{(\tilde y)} \sim \text{Uniform}(\lambda_0,\ \bar\lambda_{\text{opt};(\tilde y)})$
            \label{alg_step:random_restart_end}
        \ELSE
            \STATE $\Delta\lambda \gets (\bar\lambda_{\text{opt};(\tilde y)} - \lambda^{(t)}_{(\tilde y)})\mathbf{1}[b_{t+1} \ge 0] + (\lambda^{(t)}_{(\tilde y)} - \lambda_0)\mathbf{1}[b_{t+1} < 0]$ \label{alg_step:delta_lambda}
            \STATE $a_{\max} \gets \min\!\left(\eta,\; \frac{\Delta\lambda}{|b_{t+1}| + \epsilon}\right)$\label{alg_step:a_max}
            \STATE $p_t \gets \max\!\left\{\left\lceil \log_2\!\left(\frac{\eta}{a_{\max}}\right)\right\rceil,\ 0\right\}$
            \STATE $\eta_t \gets \min\!\left(\eta \cdot 2^{-p_t},\ a_{\max}\right)$
            \STATE $\lambda^{(t+1)}_{(\tilde y)} \gets \lambda^{(t)}_{(\tilde y)} + \eta_t\, b_{t+1}$ \label{alg_step:update}
        \ENDIF
    \ENDFOR
    \STATE \textbf{return} $\bar\lambda_{\text{opt};(\tilde y)}$
\ENDFUNCTION
\end{algorithmic}
\end{algorithm}
\footnotetext{We use the term descent-based since we are adaptively searching for a minimal $\lambda$ that satisfies the fairness constraint. The term descent-based does not imply the use of gradients~\citep{liu2020descent, Golovin2020Gradientless}.}

\subsection{Existence of a Solution}
By construction, our approach guarantees an admissible solution. We begin with the trivial solution where $\lambda$ is at its maximum (ensuring zero disparity), then use our coverage gap computation to verify if a new $\lambda$ satisfies the closeness criterion. To find the minimal satisfying $\lambda$, we employ a descent-based approach. Since the objective is non-convex, we mitigate local minima by performing random restarts within the admissible set whenever we reach the boundary of the admissible set (i.e., the current $\lambda_\text{opt}$). 

\subsection{Convergence Analysis}

For simplicity, suppose we remove the random restart step in Algorithm~\ref{alg:descent_cf_algo_extended}. We will show that our algorithm finds and halts at a local minimum if we remove the random restart (Lines~\ref{alg_step:random_restart_start}-\ref{alg_step:random_restart_end}). This would imply that with random restarts, the algorithm can escape and explore other local minima, (heuristically) improving performance.

\begin{theorem}
WLOG, fix some $\tilde y\in \gY^+$. Suppose at iteration $T$, the algorithm finds a valid upper-bound threshold $\bar\lambda_{\text{opt}; (\tilde y)}^{(T)}$ that satisfies $\text{cg}(\bar\lambda_{\text{opt}; (\tilde y)}^{(T)}) \leq c$. Suppose further that at a subsequent iteration $t > T$, the explored threshold $\lambda^{(t)}_{(\tilde y)} < \bar\lambda_{\text{opt}; (\tilde y)}^{(T)}$ falls into a local valley where $\text{cg}(\lambda^{(t)}_{(\tilde y)}) > c$. Then, without the random restart mechanism, the algorithm actively bounds the subsequent explored thresholds by $\bar\lambda_{\text{opt}; (\tilde y)}^{(T)}$, converging asymptotically, and permanently traps itself in the local valley.
\end{theorem}
\begin{proof}
For brevity, we will drop the $(\tilde y)$ subscript and use $\lambda_t \coloneqq \lambda^{(t)}_{(\tilde y)}$ and $\lambda_{opt} \coloneqq \bar\lambda_{\text{opt}; (\tilde y)}^{(T)}$. Because we are in a region where the constraint is violated, the error term is $e_t = \text{cg}(\lambda_t) - c > 0$. Per the update rule, the momentum $b_{t + 1} = \mu b_t + e_t$ accumulates this positive error term. Since $e_t > 0$ and $\mu \in (0, 1)$, $b_{t + 1}$ will eventually become strictly positive ($b_{t+1} > 0$). This positive momentum term indicates that the algorithm intends to step upward to satisfy the coverage gap constraint. 

However, in Lines~\ref{alg_step:delta_lambda}-\ref{alg_step:update}, the algorithm performs spatial clipping to prevent overshooting the known valid bound, enforcing $\lambda_{t+1} \leq \lambda_\text{opt}$. Let $D_t = \lambda_{opt} - \lambda_t$ be the available distance to the ceiling of our search space. In Line~\ref{alg_step:a_max}, we bound the maximum allowable step scale as:
\[
    a_{\max} = \min\braces{\eta, \frac{D_t}{b_{t+1} + \epsilon}},
\]
where $\epsilon > 0$ is a small stability term. To find the precise step scale $\eta_t$, the algorithm applies a base-2 logarithmic scaling to ensure it is the largest power of $1/2$ scaling of the base learning rate that remains less than or equal to $a_{\max}$. This discretization mathematically guarantees that:
\[
    \frac{1}{2}a_{\max} < \eta_t \leq a_{\max}.
\]
The parameter update rule is $\lambda_{t + 1} = \lambda_t + \eta_t b_{t + 1}$. Subtracting both sides from $\lambda_{opt}$, we express this in terms of the distance $D_t$:
\[
    D_{t + 1} = D_t - \eta_t b_{t + 1}.
\]
Substituting the lower bound for $\eta_t$ and the definition of $a_{\max}$, we obtain two cases: (1) $a_{\max} = \eta$ and (2) $a_{\max} = \frac{D_t}{b_{t+1} + \epsilon}$. In the first case, we get that 
\[
    D_{t + 1} = D_t - \eta\cdot b_{t + 1},
\]
meaning $\lambda_t\to\lambda_\text{opt}$ \textit{linearly}. In the second case, using the lower bound for $\eta_t$, we get
\[
    D_{t + 1} \leq D_t - \frac{1}{2}\parens{\frac{D_t}{b_{t + 1} + \epsilon}} b_{t + 1} = D_t \parens{1 - \frac{b_{t + 1}}{2(b_{t + 1} + \epsilon)}}.
\]
Let $\kappa = 1 - \frac{b_{t + 1}}{2(b_{t + 1} + \epsilon)}$. Since $b_{t + 1} > 0$ and $\epsilon > 0$, it strictly holds that $0 < \kappa < 1$. Therefore, $D_{t + 1} \leq \kappa D_t$. Thus, $\lambda_t\to \lambda_\text{opt}$ at a \textit{geometric} rate. Thus, without the random restart, in either case, the algorithm eventually gets stuck at some (local) valley.
\end{proof}

\subsection{Empirical Justification of Design Choice}
To justify the novel descent-based approach for the federated setting in terms of reducing the number of communications rounds, we define the discrete search space for the iterative method as $\Lambda = \texttt{linspace}(\hat{q}(\alpha), 2, \text{num\_rounds})$, where $\hat{q}(\alpha)$ is the minimal lambda to satisfy $1-\alpha$ coverage of standard federated CP, and 2 is the upper bound of the RAPS non-conformity score. Table~\ref{tab:acs_income_raps_opt_vs_iter} shows that the iterative (grid-search) approach requires more communication rounds to achieve performance comparable to our new descent-based approach. This is because discretizing the space limits the precision of the $\lambda$ values, so, by using fewer rounds, we find a less precise, and larger (more conservative) $\lambda$. This is not a limitation for our descent-based approach, which converges to a constraint-satisfying $\lambda$ using our adaptive update rule. The results in Table~\ref{tab:acs_income_raps_opt_vs_iter} show that the Iterative approach requires 1000 rounds to achieve sufficient granularity and match the performance of the Descent-Based approach, which only used 100 rounds. Thus, we demonstrate approximately $10\times$ speedup to achieve comparable performance, justifying the descent-based adaptation for the federated learning setting.

\begin{table}[htbp]
    \centering
    \caption{\textbf{ACSIncome (small, $4$ clients) using RAPS.} Each entry is of the form, \textbf{efficiency/fairness disparity}. We compare the Descent-Based (with 100 communication rounds) and the Iterative (grid-search) method (with 100 and 1000 communication rounds) across different closeness criteria ($c$).}
    \label{tab:acs_income_raps_opt_vs_iter}
    \begin{tabular}{lccc}
        \toprule
        \multirow{2}{*}{Method}& \multicolumn{1}{c}{$c = 0.1$} & \multicolumn{1}{c}{$c = 0.15$} & \multicolumn{1}{c}{$c = 0.2$} \\
        \cmidrule(lr){2-2} \cmidrule(lr){3-3} \cmidrule(lr){4-4}
         (Number of rounds) & Eff / Disp & Eff / Disp & Eff / Disp \\
        \midrule
        \textbf{Descent-Based} (100) & 3.127 / {0.096} & 2.977 / {0.179} & 2.816 /{0.257} \\
        \textbf{Iterative} (100)     & 3.239 / {0.119} & 3.137 / {0.139} & 2.890 / {0.213} \\
        \textbf{Iterative} (1000)    & 3.120 / {0.137} & 2.989 / {0.170} & 2.821 / {0.251} \\
        \bottomrule
    \end{tabular}
    \vspace{-1em}
\end{table}

\subsection{Example: Synthetic Adversarial Coverage Gap Landscape}
The coverage gap values that we encountered in our experiments demonstrated a piecewise-monotonic behavior~\citep{milnor2006iterated, guckenheimer1979sensitive, zhu2021piecewise}, meaning there are a small, but finite number of turning points (see Figure~\ref{fig:quasi_monotonic}). To demonstrate that our algorithm works in more adversarial settings, we provide an experiment with a synthetic, high-oscillation, coverage gap function and show that Algorithm~\ref{alg:descent_cf_algo_extended} finds a satisfactory solution (see Figure~\ref{fig:random_restart}).

\begin{figure}[htbp]
    \centering
        \includegraphics[width=\linewidth]{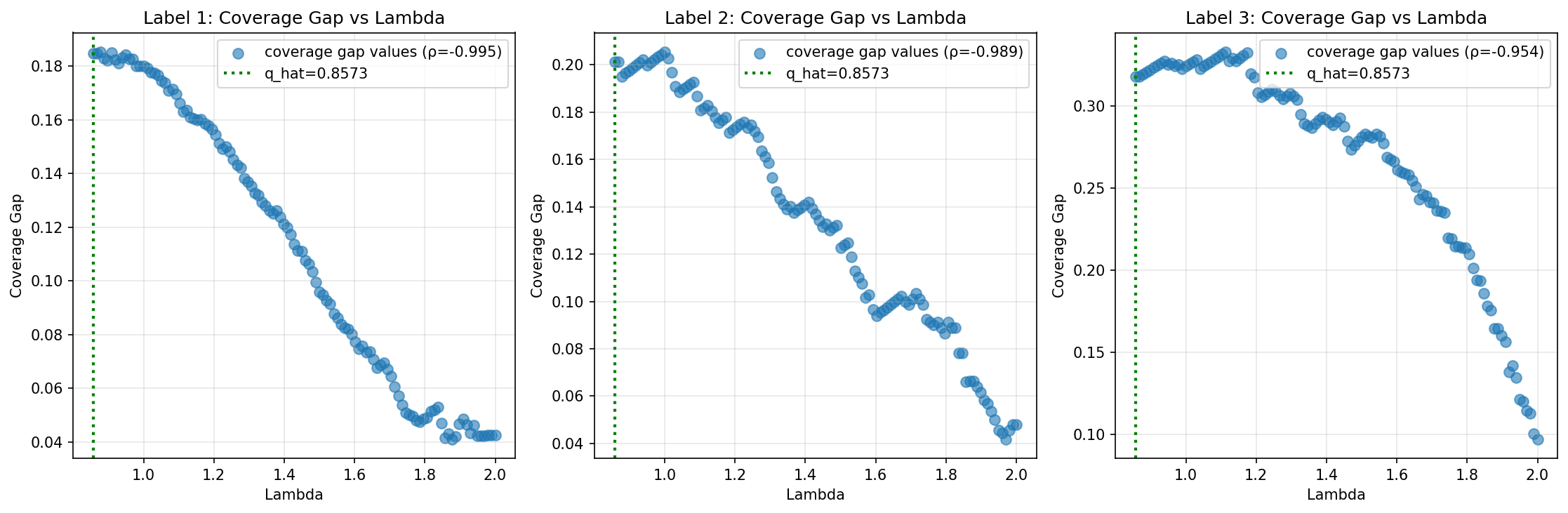}
        \caption{\textbf{ACSIncome (small, $4$ clients).} These plots show how the coverage gap (cg\_curr) varies as we increase $\lambda$ starting from $\hat{q}$. As we can see, the relationship is nearly monotonic with $|\rho| > 0.95$ for all labels.}
    \label{fig:quasi_monotonic}
\end{figure}


\begin{figure}[htbp!]
    \centering
    \includegraphics[width=\linewidth]{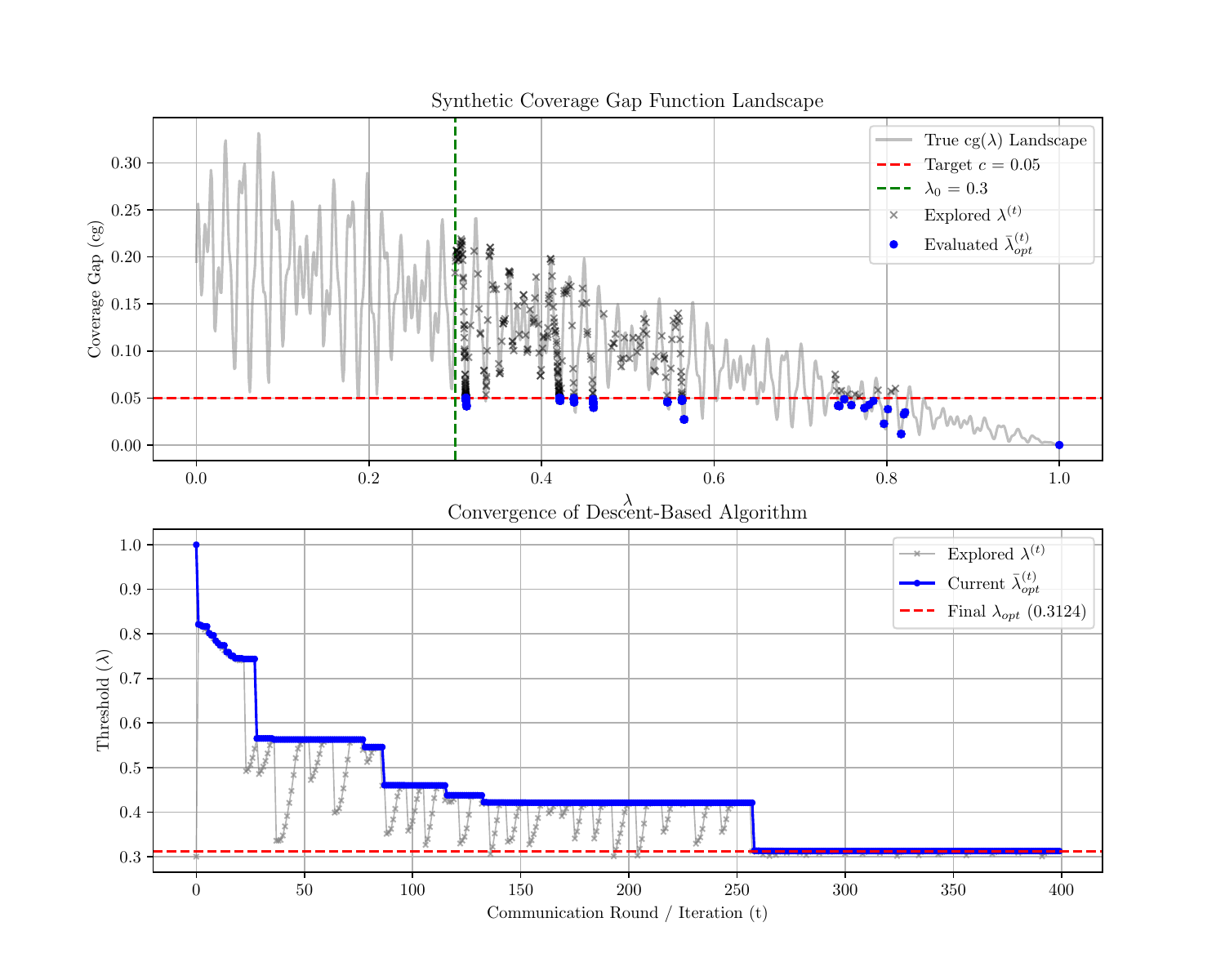}
    \caption{We run Algorithm~\ref{alg:descent_cf_algo_extended}, which includes \textit{random restarts}--the mechanism we use to continue searching once a new $\bar\lambda_{\text{opt}}$ is found--on a synthetic, jagged function, representing an adversarial coverage gap function. Specifically, we use $\mu = 0.9$, $\eta = 0.1$. \textbf{Note:} we use $\text{cg}(\lambda)$ for notational brevity instead of $\text{cg}(\lambda, F_M, \tilde{y}, \mathcal{G})$ as $F_M, \tilde{y},~\text{and}~ \mathcal{G}$ are implicitly fixed in the synthetic function.}
    \label{fig:random_restart}
\end{figure}
\clearpage
\section{Proofs}
\label{app:fed_cf:proofs}
\subsection{Proof of Theorem \ref{thm:main_cov_thm}}
Recall, since the data is distributed across clients in the federated setting, we reformulated the fairness-specific coverage level as Equation \ref{eq:dissected_fair_cov}. In doing so, the computation of the coverage level is split between the clients and the server. We present bounds and point estimates for each of the terms in Equation \ref{eq:dissected_fair_cov} across Lemmas \ref{lem:client_fair_cov}, \ref{lem:F_M_bound}, and \ref{lem:prior_term}, leading to a proof of Theorem \ref{thm:main_cov_thm}.



\subsubsection{Client-Side Estimates}
Since each client operates independently with its own dataset, we can derive interval bounds for terms $\circled{I}$ and $\circled{II}$ assuming \emph{partial exchangeability}. For the point estimates approach, we use maximum likelihood estimators (MLEs) for each term, providing the tightest estimates assuming \emph{partial IID}.

\begin{restatable}{lemma}{lemClientFairCov}\label{lem:client_fair_cov}
For each client $k$, group $g$, positive label $\tilde{y}$, and threshold $\lambda$, we get the following interval bounds:
    \begin{equation}\textstyle
        \frac{\alpha_k^{(g,\tilde y); \lambda}}{n_k^{(g,\tilde y)}+1} \leq \Pr\brackets{s(\vx_{\text{test}},\tilde{y})\leq \lambda~|~F_M(\vx_{\text{test}}, y_{\text{test}}, g, \tilde{y})=1,E_k} \leq \frac{\alpha_k^{(g,\tilde y); \lambda} + 1}{n_k^{(g,\tilde y)}+1}
    \end{equation}
If the data satisfy the partial-IID assumption, then we can use an MLE point estimate, given by the following:
    \begin{equation}\textstyle
        \Pr\brackets{s(\vx_{\text{test}},\tilde{y})\leq \lambda~|~F_M(\vx_{\text{test}}, y_{\text{test}}, g, \tilde{y})=1,E_k} = \frac{\alpha_k^{(g,\tilde y); \lambda}}{n_k^{(g,\tilde y)}}
    \end{equation}
\end{restatable}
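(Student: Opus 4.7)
\begin{proofsketch}
The plan is to reduce the claim to the classical split conformal quantile argument applied inside the filtered subpopulation. The only truly federated ingredient is conditioning on $E_k$, which by definition of partial exchangeability places the test point on equal footing with the $n_k$ calibration points of client $k$, so within the lemma we can proceed as if we had a single calibration set $\calib^{(k)}$ of size $n_k$ that is exchangeable with $(\vx_{\text{test}}, y_{\text{test}})$.

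First I would set up the filtered score collection. Under $E_k$, the $n_k+1$ tuples $(\vx^{(k)}_1,y^{(k)}_1),\dots,(\vx^{(k)}_{n_k},y^{(k)}_{n_k}),(\vx_{\text{test}},y_{\text{test}})$ are exchangeable. Since $F_M$ is a deterministic function of each tuple (together with the fixed $g,\tilde y$), the indicator variables $\{F_M(\vx^{(k)}_i,y^{(k)}_i,g,\tilde y)\}_i$ together with $F_M(\vx_{\text{test}},y_{\text{test}},g,\tilde y)$ are themselves exchangeable. Conditioning on the symmetric event $\{F_M(\vx_{\text{test}},y_{\text{test}},g,\tilde y)=1\}$ and on the (random) index set of calibration points with $F_M=1$, the tuples belonging to $\gS_k^{(g,\tilde y)}\cup\{(\vx_{\text{test}},y_{\text{test}})\}$ remain exchangeable among themselves — this is the standard fact that exchangeability is preserved under conditioning on permutation-invariant events.

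Next I would apply the usual quantile inversion. For fixed $\tilde y$ and $\lambda$, consider the scores $s_i := s(\vx^{(k)}_i,\tilde y)$ for $i$ with $(\vx^{(k)}_i,y^{(k)}_i)\in\gS_k^{(g,\tilde y)}$, and the test score $s_{\text{test}}:=s(\vx_{\text{test}},\tilde y)$. Exchangeability of these $n_k^{(g,\tilde y)}+1$ scores implies that the rank of $s_{\text{test}}$ is uniform on $\{1,\dots,n_k^{(g,\tilde y)}+1\}$ (up to ties handled in the standard way). The event $\{s_{\text{test}}\le\lambda\}$ corresponds to $s_{\text{test}}$ sitting at rank at most $\alpha_k^{(g,\tilde y);\lambda}+1$, which yields the upper bound $(\alpha_k^{(g,\tilde y);\lambda}+1)/(n_k^{(g,\tilde y)}+1)$, while the subset of ranks that strictly force coverage gives the matching lower bound $\alpha_k^{(g,\tilde y);\lambda}/(n_k^{(g,\tilde y)}+1)$; this is identical to Vovk's original finite-sample bounds for split CP, just carried out inside the filtered slice.

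For the MLE estimate under IID data, the filtered scores become $n_k^{(g,\tilde y)}$ i.i.d. Bernoulli$(p)$ draws with $p=\Pr[s(\vx,\tilde y)\le\lambda\mid F_M=1,E_k]$, and the unique MLE for a Bernoulli proportion is the empirical mean $\alpha_k^{(g,\tilde y);\lambda}/n_k^{(g,\tilde y)}$; this is a one-line observation once the IID reduction is stated. The only step I expect to need care is the conditioning argument in paragraph two: one must verify that simultaneously conditioning on $\{F_M(\vx_{\text{test}},y_{\text{test}},g,\tilde y)=1\}$ and on which calibration indices satisfy $F_M=1$ is a permutation-invariant event, so that exchangeability of the surviving tuples (and hence their scores) is preserved; this is where any sloppiness would invalidate the subsequent quantile argument.
\end{proofsketch}
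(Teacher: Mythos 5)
Your proposal is correct and follows essentially the same route as the paper: the paper likewise reduces conditioning on $E_k$ to exchangeability with client $k$'s local calibration set and then invokes the conditional coverage guarantee of the CF framework for the interval bounds, which is exactly the filtered-slice rank-uniformity argument you spell out explicitly; the Bernoulli MLE step is identical. The only difference is that you prove in-line (including the permutation-invariance of the filtering event) what the paper delegates to the citation of \citet{vadlamani2025a}.
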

The proof of Lemma \ref{lem:client_fair_cov} is as follows: 
\begin{proof}
    We first observe that,
    \begin{align*}
        &\Pr\brackets{s(\bm{x}_{\text{test}},\tilde{y})\leq \lambda~|~F_M(\bm{x}_{\text{test}}, y_{\text{test}}, g, \tilde{y})=1,~E_k} \\
        &= \underset{\bm{x}_{\text{test}} \sim {P}_k}{\Pr}\brackets{s(\bm{x}_{\text{test}},\tilde{y})\leq \lambda~|~F_M(\bm{x}_{\text{test}}, y_{\text{test}}, g, \tilde{y})=1}, 
    \end{align*}
    since exchangeability with the elements in $k$ is true iff $\bm{x}_{\text{test}}$ is sampled from $k$'s local distribution, $P_k$. The interval bounds follow from the conditional coverage guarantees given in CF~\citep{vadlamani2025a}.    


    For the point estimate, we can model the event that the predicted score $s(\vx_{test}, \tilde{y})$ falls below $\lambda$ as a Bernoulli random variable with success probability $p$. We can treat the $n_k^{(g,\tilde{y})}$ calibration points as individual Bernoulli trials, to then construct a maximum likelihood estimate (MLE) for $p$, which will be $\hat{p} = \frac{\alpha_k^{(g,\tilde{y}); \lambda}}{n_k^{(g,\tilde{y})}}$.
\end{proof}

Lemma \ref{lem:client_fair_cov} bounds the fair-conditional coverage for a particular group-label pair for the test covariate $(\bm x_\text{test}, y_\text{test})$. We next bound the coverage of the test covariate satisfying the Fairness Metric ($F_M$), conditioned on the test point being exchangeable with data from client $k$ using Lemma \ref{lem:F_M_bound}, and provide the proof below.

\subsubsection{Server-Side Estimates}
Terms {\scriptsize \circled{III}} and {\scriptsize \circled{IV}} require a global view of the clients' data, so they are handled on the server.

For term {\scriptsize \circled{III}}, we follow the setup by \citet{lu2023federated}, where given $n_k = \abs{\calib^{(k)}}$, $\gamma_k\coloneqq\Pr[E_k] \propto n_k + 1$ and $\sum\limits_{k = 1}^K \gamma_k = 1$. Finally, for term {\scriptsize \circled{IV}}, we have that
\[\textstyle
    \Pr\brackets{F_M(\vx_{\text{test}}, y_{\text{test}}, g, \tilde{y}) = 1} = \sum\limits_{k = 1}^K \Pr\brackets{F_M(\vx_{\text{test}}, y_{\text{test}}, g, \tilde{y}) = 1 \mid E_k}\cdot \Pr[E_k].
\]
Using Lemma \ref{lem:F_M_bound}, we can get an interval-bound and point-estimate as shown in the following lemma.

\begin{restatable}{lemma}{lemFMBound}\label{lem:F_M_bound}
For each client $k$, group $g$, and positive label $\tilde{y}$, we get the following interval bounds:
    \begin{equation}\textstyle
        \frac{n_k^{(g,\tilde y)}}{n_k+1} \leq \Pr\brackets{F_M(\vx_{\text{test}}, y_{\text{test}}, g, \tilde{y})=1\mid E_k} \leq \frac{n_k^{(g,\tilde y)}+1}{n_k+1}
    \end{equation}
If the data satisfy the partial-IID assumption, then we can use an MLE point estimate, given by the following:
    \begin{equation}\textstyle
        \Pr\brackets{F_M(\vx_{\text{test}}, y_{\text{test}}, g, \tilde{y})=1\mid E_k} = \frac{n_k^{(g,\tilde y)}}{n_k}
    \end{equation}
\end{restatable}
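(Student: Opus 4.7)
The plan is to apply the split CP coverage guarantee to a binary nonconformity score built from $F_M$, paralleling the proof of Lemma \ref{lem:client_fair_cov}; the MLE portion reduces to the standard Bernoulli sample-mean argument.

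First, I would set up exchangeability. Conditional on $E_k$, the test point $(\vx_{\text{test}}, y_{\text{test}})$ is exchangeable with the $n_k$ calibration samples in $\calib^{(k)}$, so the binary indicators $I_i := \1[F_M(\vx_i, y_i, g, \tilde y) = 1]$ for $i = 1, \ldots, n_k$ together with $I_{\text{test}}$ form an exchangeable binary sequence, with $n_k^{(g, \tilde y)} = \sum_{i=1}^{n_k} I_i$ by definition.

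Second, I would take $s'(\vx, y) = 1 - \1[F_M(\vx, y, g, \tilde y) = 1]$ as a binary nonconformity score, so that $\{s'_{\text{test}} \leq 0\}$ coincides with the event $\{F_M(\vx_{\text{test}}, y_{\text{test}}, g, \tilde y) = 1\}$. The calibration scores consist of exactly $n_k^{(g, \tilde y)}$ zeros and $n_k - n_k^{(g, \tilde y)}$ ones, so the empirical split-CP quantile $\hat q(\alpha)$ equals $0$ precisely when $\lceil (n_k + 1)(1 - \alpha)\rceil \leq n_k^{(g, \tilde y)}$. Choosing the boundary value $1 - \alpha = n_k^{(g, \tilde y)}/(n_k + 1)$ achieves $\hat q(\alpha) = 0$, and substituting into the two-sided coverage bound from Equation \ref{eq:cp}, namely $1 - \alpha \leq \Pr[s'_{\text{test}} \leq \hat q(\alpha) \mid E_k] \leq 1 - \alpha + 1/(n_k + 1)$, yields exactly the claimed interval bounds on $\Pr[F_M(\vx_{\text{test}}, y_{\text{test}}, g, \tilde y) = 1 \mid E_k]$. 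For the MLE estimate under IID data, each $I_i$ is Bernoulli with success probability $p = \Pr[F_M = 1 \mid E_k]$, and maximizing the Bernoulli likelihood over $n_k$ trials with $n_k^{(g, \tilde y)}$ successes gives the standard estimator $\hat p = n_k^{(g, \tilde y)}/n_k$.

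The main subtlety I would need to address is that my chosen $\alpha$ depends on the observed calibration count, which on its face violates the fix-$\alpha$-a-priori convention of split CP. I would resolve this by falling back on the underlying rank-exchangeability argument: because $(I_1, \ldots, I_{n_k}, I_{\text{test}})$ is exchangeable, the test point's rank in the combined binary sample is symmetric, and this symmetry delivers a two-sided bound at every admissible quantile level, including one chosen to sit at the zero/one boundary of the binary scores. This is the same mechanism that powers Lemma \ref{lem:client_fair_cov} in the continuous-score regime, so I would follow that template essentially verbatim, adapting only the ranking step to handle the ties inherent in binary scores.
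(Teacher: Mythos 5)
Your proposal lands on the right underlying mechanism---exchangeability of the $n_k+1$ binary indicators $\1\brackets{F_M(\cdot)=1}$ and the uniform position of the test point among them---and your MLE argument is identical to the paper's. However, the primary route you propose, substituting a data-dependent level into the two-sided guarantee of Equation~\ref{eq:cp}, does not go through as written, for two concrete reasons. First, as you yourself flag, Equation~\ref{eq:cp} is a marginal guarantee for a level $\alpha$ fixed in advance; setting $1-\alpha = n_k^{(g,\tilde y)}/(n_k+1)$ makes the level a function of the calibration draw, so the theorem cannot simply be ``substituted into,'' and your patch (``this symmetry delivers a two-sided bound at every admissible quantile level, including one chosen [from the data]'') is an overclaim---valid coverage at an arbitrary data-dependent level is not a consequence of rank symmetry. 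Second, and addressed only in a passing remark about ties, the upper bound $1-\alpha+1/(n+1)$ in Equation~\ref{eq:cp} requires the non-conformity scores to be almost surely distinct; your score $s'$ is binary, so ties occur with probability one and the upper half of the sandwich is simply not available as a black box.

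Both obstacles disappear only once you carry out the ``fallback'' you sketch at the end: condition on the multiset of the $n_k+1$ indicator values, note that by exchangeability the test point is equally likely to be any element of that bag so that $\Pr\brackets{F_M(\vx_{\text{test}}, y_{\text{test}}, g, \tilde{y})=1\mid E_k}$ equals the fraction of ones in the bag, and then bracket the unknown test contribution between $0$ and $1$ to obtain $n_k^{(g,\tilde y)}/(n_k+1)$ and $(n_k^{(g,\tilde y)}+1)/(n_k+1)$. That is precisely the paper's proof of Lemma~\ref{lem:F_M_bound}, so the detour through the split-CP quantile construction adds no leverage: all of the actual proof content lives in the step you defer. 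I would drop the quantile machinery and write the bag argument out directly.
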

\begin{proof}
    To demonstrate the finite sample guarantee, we note that $F_M(\vx, y, g, \tilde{y})=1$ for all $(\vx, y)\in\calib^{(k)}$ are all exchangeable Bernoulli trials. Observe that conditioning on $E_k$ implies ${\calib}_{+}^{(k)} \coloneqq \calib^{(k)}\cup\braces{\bm{x}_{\text{test}}}$ is an exchangeable sequence of length $n_k + 1$. Treating this as a finite `bag' of covariates, we have 
   $$\forall_{(\vx, y)\in {\calib}_+^{(k)}},~~\Pr\brackets{F_M(\bm{x}, y, g, \tilde{y})=1~|~ E_k} =\frac{\sum_{(\bm x_i, y_i)\in {\calib}_+^{(k)}}F_M(\bm{x_i}, y_i, g, \tilde{y})}{n_k+1}.$$
   In other words, we have defined the probability of randomly selecting a covariate with $F_M(\vx, y, g, \tilde{y})=1$. Since this applies to all points we know, 
    \begin{equation}\label{eq:intermediate_expression}
        \Pr\brackets{F_M(\bm{x}_{\text{test}}, y_{\text{test}}, g, \tilde{y})=1~|~ E_k} =\frac{ \sum\limits_{(\bm x_i, y_i)\in {\calib}_{+}^{(k)}}F_M(\bm{x}_i, y_i, g, \tilde{y})}{n_k+1}.
    \end{equation} 
    Since we implicitly condition on $F_M(\bm{x}, y, g, \tilde{y}),~~\forall (\bm{x}, y) \in k$ (effectively making them deterministic), we can calculate the following bounds,
    \begin{footnotesize}
    \begin{equation}
        \frac{\underset{(\bm x_i, y_i)\in \calib^{(k)}}{\sum}F_M(\bm{x_i}, y_i, g, \tilde{y})}{n_k+1} \leq \Pr\brackets{F_M(\bm{x}_{\text{test}}, y_{\text{test}}, g, \tilde{y})=1~|~ E_k} \leq \frac{\underset{(\bm x_i, y_i)\in \calib^{(k)}}{\sum}F_M(\bm{x_i}, y_i, g, \tilde{y}) + 1}{n_k+1}, 
    \end{equation}
    \end{footnotesize}

    where the $+1$ term comes from the unknown value of $F_M(\bm{x}_{\text{test}}, y_{\text{test}}, g, \tilde{y})$. Substituting the sums with $n_k^{(g,\tilde y)}$, proves the interval bounds.

    For the point estimate, the event that $F_M = 1$ can be modeled as a Bernoulli random variable with success probability $p$. We can use the full $n_k$ calibration points as $n_k$ Bernoulli trials to construct an MLE for $p$, which will be $\hat{p} = \frac{n_k^{(g,\tilde{y})}}{n_k}$.
\end{proof} 

Lastly, we use Lemma \ref{lem:prior_term}, to bound $\Pr\brackets{F_M(\vx_{\text{test}}, y_{\text{test}}, g, \tilde{y})=1}$. The proof of Lemma \ref{lem:prior_term} uses the result of Lemma \ref{lem:F_M_bound}.

\begin{restatable}{lemma}{lemPriorTerm}\label{lem:prior_term}
For each client $k$, group $g$, and positive label $\tilde{y}$, we get the following interval bounds:
    \begin{equation}\textstyle
        L^{(g,\tilde y)} =  \sum\limits_{k=1}^{K} \gamma_k\frac{n_k^{(g,\tilde y)}}{n_k+1} \leq \Pr\brackets{F_M(\vx_{\text{test}}, y_{\text{test}}, g, \tilde{y})=1} \leq \sum\limits_{k=1}^{K} \gamma_k\frac{n_k^{(g,\tilde y)} + 1}{n_k+1} = U^{(g,\tilde y)}.
    \end{equation}
If the data satisfy the partial-IID assumption, then we can use an MLE point estimate, given by the following:
    \begin{equation}\textstyle
        \Pr\brackets{F_M(\vx_{\text{test}}, y_{\text{test}}, g, \tilde{y})=1} = \sum\limits_{k=1}^{K} \gamma_k\frac{n_k^{(g,\tilde y)}}{n_k} = \pi^{(g,\tilde y)}.
    \end{equation}
\end{restatable}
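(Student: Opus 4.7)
The plan is to build Lemma \ref{lem:prior_term} directly on top of Lemma \ref{lem:F_M_bound} via the law of total probability over the exchangeability events $E_k$. First I would note that, by the setup in Section \ref{subsec:federated_cp} (following \citet{lu2023federated}), the events $\{E_k\}_{k=1}^K$ partition the probability space with $\Pr[E_k]=\gamma_k\propto n_k+1$ and $\sum_{k=1}^K\gamma_k=1$. Hence
\[\textstyle
\Pr\brackets{F_M(\vx_{\mathrm{test}},y_{\mathrm{test}},g,\tilde y)=1}
= \sum_{k=1}^K \gamma_k\,\Pr\brackets{F_M(\vx_{\mathrm{test}},y_{\mathrm{test}},g,\tilde y)=1 \mid E_k},
\]
which is exactly the term {\scriptsize\circled{IV}} decomposition stated just before Lemma \ref{lem:prior_term}.

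Next I would substitute the per-client bounds from Lemma \ref{lem:F_M_bound}, namely $\frac{n_k^{(g,\tilde y)}}{n_k+1}\le \Pr[F_M=1\mid E_k]\le \frac{n_k^{(g,\tilde y)}+1}{n_k+1}$, into the sum. Since each $\gamma_k\ge 0$, the inequalities are preserved termwise, and summing gives
\[\textstyle
\sum_{k=1}^K \gamma_k\frac{n_k^{(g,\tilde y)}}{n_k+1}
\le \Pr\brackets{F_M(\vx_{\mathrm{test}},y_{\mathrm{test}},g,\tilde y)=1}
\le \sum_{k=1}^K \gamma_k\frac{n_k^{(g,\tilde y)}+1}{n_k+1},
\]
which are precisely $L^{(g,\tilde y)}$ and $U^{(g,\tilde y)}$.

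For the MLE point estimate, I would plug in the IID estimate $\Pr[F_M=1\mid E_k]=\frac{n_k^{(g,\tilde y)}}{n_k}$ from the second part of Lemma \ref{lem:F_M_bound} into the same decomposition, yielding $\pi^{(g,\tilde y)}=\sum_{k=1}^K \gamma_k\frac{n_k^{(g,\tilde y)}}{n_k}$.

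Honestly, I do not expect a real obstacle here: the work was done in Lemma \ref{lem:F_M_bound}, and this result is a one-line aggregation via total probability plus monotonicity of convex combinations. The only subtlety worth stating explicitly is that $\{E_k\}$ is a valid partition (so the law of total probability applies) and that conditioning on $E_k$ produces the same within-client quantity that Lemma \ref{lem:F_M_bound} already controlled; beyond that, the proof is essentially mechanical.
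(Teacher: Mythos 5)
Your proposal is correct and follows exactly the paper's own argument: decompose term {\scriptsize\circled{IV}} via the law of total probability over the exchangeability events $E_k$, then substitute the per-client bounds (and, for the IID case, the MLE point estimate) from Lemma \ref{lem:F_M_bound}, using $\gamma_k=\Pr[E_k]\ge 0$ to preserve the inequalities termwise. Your explicit remarks that $\{E_k\}$ must form a valid partition and that nonnegativity of the weights is what justifies summing the bounds are slightly more careful than the paper's one-line proof, but the route is the same.
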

\begin{proof}
    To achieve this result, we first use the law of total probability to decompose $\Pr\brackets{F_M(\bm{x}_{\text{test}}, y_{\text{test}}, g, \tilde{y}) = 1}$ into terms known by the server and the client:  
    \begin{equation}\label{eq:lemma_intermediate_step}
    \Pr\brackets{F_M(\bm{x}_{\text{test}}, y_{\text{test}}, g, \tilde{y}) = 1} = \sum_{k=1}^K\Pr\brackets{F_M(\bm{x}_{\text{test}}, y_{\text{test}}, g, \tilde{y})=1~|~ E_k}\cdot \Pr\brackets{E_k}
    \end{equation} 

    Then, substituting the bounds for term {\scriptsize \circled{II}} (see Lemma \ref{lem:F_M_bound}), and $\gamma_k = \Pr\brackets{E_k}$ from term {\scriptsize \circled{III}} into Equation \ref{eq:lemma_intermediate_step}, we complete the proof. 
\end{proof}

On closer inspection, we observe that Terms {\scriptsize \circled{I}} and {\scriptsize \circled{II}} can be combined and bound together to provide a tighter lower bound. 

\begin{lemma}\label{lem:combined_1_2}
        Using the definitions of $\alpha_k^{(g,\tilde y); \lambda}$ and $n_k+1$ we have, 
        \begin{align}
            &\frac{\alpha_k^{(g,\tilde y); \lambda}}{n_k+1} \leq \Pr\brackets{s(\bm{x}_{\text{test}},\tilde{y})\leq \lambda~|~F_M(\bm{x}_{\text{test}}, y_{\text{test}}, g, \tilde{y})=1,~\bm{x}_{\text{test}} \overset{\text{exc.}}{\sim} k}\nonumber\\
            &~~~~~~~~~~~~~~~~~~~~~~~~~~~~~~~~\cdot\Pr\brackets{F_M(\bm{x}_{\text{test}}, y_{\text{test}}, g, \tilde{y})=1~|~ \bm{x}_{\text{test}} \overset{\text{exc.}}{\sim} k} \leq \frac{\alpha_k^{(g,\tilde y); \lambda}+1}{n_k+1}. \label{eq:term1_2_comb_bounds}
        \end{align}
\end{lemma}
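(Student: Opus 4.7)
The plan is to bypass the naive approach of multiplying the bounds from Lemmas \ref{lem:client_fair_cov} and \ref{lem:F_M_bound}, which would yield the strictly weaker lower bound $\frac{\alpha_k^{(g,\tilde y); \lambda}\, n_k^{(g,\tilde y)}}{(n_k^{(g,\tilde y)}+1)(n_k+1)}$ instead of the desired $\frac{\alpha_k^{(g,\tilde y); \lambda}}{n_k+1}$, because each factor independently absorbs its own copy of the ``unknown test-point'' slack. Instead, I would first collapse the product into a single joint conditional probability via the chain rule,
\[
\text{\circled{I}} \cdot \text{\circled{II}} = \Pr\bigl[s(\vx_{\text{test}},\tilde y) \leq \lambda,\ F_M(\vx_{\text{test}}, y_{\text{test}}, g, \tilde y) = 1 \,\big|\, E_k\bigr],
\]
and invoke exchangeability only once, on this joint event.

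The next step is to reuse the finite-bag argument from the proof of Lemma \ref{lem:F_M_bound}. Conditional on $E_k$, the augmented set ${\calib}^{(k)}_+ := \calib^{(k)} \cup \{(\vx_{\text{test}}, y_{\text{test}})\}$ is exchangeable of size $n_k+1$, so for any permutation-symmetric indicator the probability that the test point satisfies it equals the empirical frequency of that indicator over ${\calib}^{(k)}_+$. Applying this to the product indicator $\1[s(\vx,\tilde y)\leq\lambda]\cdot F_M(\vx,y,g,\tilde y)$ yields
\[
\Pr\bigl[\,\cdot\,\big|\,E_k\bigr] = \frac{\sum_{(\vx_i, y_i) \in {\calib}^{(k)}_+} \1[s(\vx_i, \tilde y) \leq \lambda]\, F_M(\vx_i, y_i, g, \tilde y)}{n_k + 1}.
\]
The sum over the $n_k$ known calibration points equals exactly $\alpha_k^{(g,\tilde y); \lambda}$ by the definition of that quantity, while the contribution from the unknown test-point indicator lies in $\{0,1\}$; taking its two extreme values delivers the claimed lower and upper bounds.

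I do not anticipate a serious obstacle. The only conceptual subtlety is recognizing that the product should be handled as a single joint event rather than as two independently-bounded factors, so the shared randomness in $(\vx_{\text{test}}, y_{\text{test}})$ is not double-counted. As a sanity check, the naive product of the two \emph{upper} bounds from Lemmas \ref{lem:client_fair_cov} and \ref{lem:F_M_bound} already equals $\frac{\alpha_k^{(g,\tilde y); \lambda}+1}{n_k+1}$ (the $n_k^{(g,\tilde y)}+1$ factors cancel), so only the lower bound genuinely improves---consistent with the intuition that the upper-bound slack corresponds to a ``real'' $+1$ contribution from the unknown test point, whereas the lower-bound slack in Lemma \ref{lem:client_fair_cov} is an artifact of normalizing by $n_k^{(g,\tilde y)}+1$ rather than $n_k^{(g,\tilde y)}$.
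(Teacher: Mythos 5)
Your proposal is correct and matches the paper's own proof essentially step for step: both collapse the product into the joint conditional probability $\Pr[s(\vx_{\text{test}},\tilde y)\le\lambda,\ F_M=1\mid E_k]$, treat the product indicator $\1[s(\cdot,\tilde y)\le\lambda]\cdot F_M(\cdot)$ as an exchangeable Bernoulli sequence over the augmented bag of size $n_k+1$, and rerun the Lemma \ref{lem:F_M_bound} argument with $\alpha_k^{(g,\tilde y);\lambda}$ in place of $n_k^{(g,\tilde y)}$. Your added observation that only the lower bound genuinely tightens (the naive product of upper bounds already telescopes to $\frac{\alpha_k^{(g,\tilde y);\lambda}+1}{n_k+1}$) is a nice sanity check consistent with the paper's corollary.
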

\begin{proof}
    First, observe that, 
    \begin{align*}
        &{\Pr\brackets{s(\bm{x}_{\text{test}},\tilde{y})\leq \lambda~|~F_M(\bm{x}_{\text{test}}, y_{\text{test}}, g, \tilde{y})=1,~E_k}\cdot\Pr\brackets{F_M(\bm{x}_{\text{test}}, y_{\text{test}}, g, \tilde{y})=1~|~E_k}} \\ &=\Pr\brackets{s(\bm{x}_{\text{test}},\tilde{y})\leq \lambda,~F_M(\bm{x}_{\text{test}}, y_{\text{test}}, g, \tilde{y})=1~|~E_k}
    \end{align*}
    Then consider the following Bernoulli random variables (R.V) $\1\brackets{s(\bm{x},y) \leq \lambda}\cdot F_M(\bm{x}, y, g, \tilde{y})$ for all $(\bm{x},y) \in k \cup \{(\bm{x}_{\text{test}},y_{\text{test}})\}$ which form an exchangeable sequence (using the assumption $\bm{x}_{\text{test}} \overset{\text{exc.}}{\sim} k$). Additionally, observe $\alpha_k^{(g,\tilde y); \lambda} = \sum_{(\bm{x}_i,y_i)\in k}\1\brackets{s(\bm{x},y) \leq \lambda}\cdot F_M(\bm{x}, y, g, \tilde{y})$ is an equivalent definition of $\alpha_k^{(g,\tilde y); \lambda}$. The rest of the proof follows from the proof of Lemma \ref{lem:F_M_bound} by using  $\1\brackets{s(\bm{x},y) \leq \lambda}\cdot F_M(\bm{x}, y, g, \tilde{y})$ as the Bernoulli R.V instead of $F_M(\bm{x}, y, g, \tilde{y})$ and $\alpha_k^{(g,\tilde y); \lambda}$ in place of $n_k^{(g,\tilde y)}$. 
\end{proof}

Having proved the Lemmas, we can move on to proving Theorems \ref{thm:main_cov_thm} and \ref{thm:main_cov_thm_mle}:
\thmMainCovThm*
\begin{proof}
Substituting the bounds for terms {\scriptsize \circled{I}} and {\scriptsize \circled{II}} via Lemma \ref{lem:combined_1_2} and {\scriptsize \circled{IV}} which were established via Lemmas \ref{lem:F_M_bound} and \ref{lem:prior_term}, respectively, and the definition of {\scriptsize \circled{III}} into Equation \ref{eq:dissected_fair_cov} completes the proof. 
\end{proof}

\thmMainCovThmMLE*
\begin{proof}
Substituting the estimates for terms {\scriptsize \circled{I}} and {\scriptsize \circled{II}} via Lemma \ref{lem:combined_1_2} and {\scriptsize \circled{IV}} which were established via Lemmas \ref{lem:F_M_bound} and \ref{lem:prior_term}, respectively, and the definition of {\scriptsize \circled{III}} into Equation \ref{eq:dissected_fair_cov} completes the proof. 
\end{proof}

\subsection{Sufficient Data Counts}
Below, we consider the sufficient number of data points to get a non-degenerate solution (i.e., $\lambda_{\text{opt}; {y}} = \lambda_\text{max}$).
\thmIntervalWidthBound*
\begin{proof}
Define the interval width for a particular group label pair as, 
\begin{align}
\label{eq:client_return_form_iw}
    \textrm{IW}(\lambda, F_M, g, \tilde{y})&\coloneqq U_\text{cov}(\lambda, F_M, g, \tilde{y})-L_\text{cov}(\lambda, F_M, g, \tilde{y})\\
    &= \sum\limits_{k=1}^{K} \gamma_k \left(\frac{(\alpha_k^{(g,\tilde y); \lambda}+1)}{(n_k+1)L^{(g,\tilde y)}} - \frac{\alpha_k^{(g,\tilde y); \lambda}}{(n_k+1)U^{(g,\tilde y)}}\right),
\end{align}
where $U_\text{cov}$ and $L_\text{cov}$ are from Theorem \ref{thm:main_cov_thm}. We note that we can achieve non-vacuous bounds for each $\tilde{y} \in \mathcal{Y}^+$ if for each $g \in \mathcal{G}$, $\textrm{IW}(\lambda, F_M, g, \tilde{y}) < c$ for some $\lambda <\lambda_{max}$. 
Observe that if the interval width is greater than $c$ for all $\lambda$, then our algorithm cannot find non-vacuous bounds. Also, recall by definition, $L^{(g,\tilde y)} = N^{(g,\tilde y)}$ and $U^{(g,\tilde y)} = N^{(g,\tilde y)} + K$
Then consider,
\begin{align}
    \textrm{IW}(\lambda, F_M, g, \tilde{y})&=\sum\limits_{k=1}^{K} \gamma_k \left(\frac{(\alpha_k^{(g,\tilde y); \lambda}+1)}{(n_k+1)L^{(g,\tilde y)}} - \frac{\alpha_k^{(g,\tilde y); \lambda}}{(n_k+1)U^{(g,\tilde y)}}\right) \\
    &=\frac{1}{N+K}\sum\limits_{k=1}^{K} \left(\frac{(\alpha_k^{(g,\tilde y); \lambda}+1)}{L^{(g,\tilde y)}} - \frac{\alpha_k^{(g,\tilde y); \lambda}}{U^{(g,\tilde y)}}\right)\\
    &=\sum\limits_{k=1}^{K} \left(\frac{(\alpha_k^{(g,\tilde y); \lambda}+1)}{N^{(g,\tilde y)}} - \frac{\alpha_k^{(g,\tilde y); \lambda}}{N^{(g,\tilde y)} +K}\right) \label{eq:interm_step}
\end{align}
Given $\sum_{k=1}^{K}\alpha_k^{(g,\tilde y)} \le \sum_{k=1}^{K}n_k^{(g,\tilde y)} = N^{(g,\tilde y)}$, we have
\begin{equation}
    \left(\frac{N^{(g,\tilde y)} - \sum_{k=1}^{K}\alpha_k^{(g,\tilde y);\lambda}}{N^{(g,\tilde y)}} - \frac{N^{(g,\tilde y)} - \sum_{k=1}^{K}\alpha_k^{(g,\tilde y);\lambda}}{N^{(g,\tilde y)}+K}\right) \ge 0.
\end{equation}
Thus,
\begin{align}
    \eqref{eq:interm_step}&\le\scriptstyle\left[\sum\limits_{k=1}^{K} \left(\frac{(\alpha_k^{(g,\tilde y); \lambda}+1)}{N^{(g,\tilde y)}} - \frac{\alpha_k^{(g,\tilde y); \lambda}}{N^{(g,\tilde y)} +K}\right) + \left(\frac{N^{(g,\tilde y)} - \sum_{k=1}^{K}\alpha_k^{(g,\tilde y);\lambda}}{N^{(g,\tilde y)}} - \frac{N^{(g,\tilde y)} - \sum_{k=1}^{K}\alpha_k^{(g,\tilde y);\lambda}}{N^{(g,\tilde y)}+K}\right)\right]\\
    &=\left(\frac{N^{(g,\tilde y)} + K}{N^{(g,\tilde y)}} - \frac{N^{(g,\tilde y)}}{N^{(g,\tilde y)}+K}\right)\\
    &=\frac{2N^{(g,\tilde y)}K + K^2}{{N^{(g,\tilde y)}}^2 + N^{(g,\tilde y)}K} \le c \label{eq:deriv_final_result_bound}
\end{align}

Thus, we have the interval width, and we want to bound it by $c$ as in $\eqref{eq:deriv_final_result_bound} < c$. Solving for the quadratic form and taking the positive root, we arrive at, 
\begin{align}\label{eq:prelim_valid_bounds}
    N^{(g,\tilde y)} &\ge \frac{K}{2c} \left(2 - c(N+K) + \sqrt{c^2(N+K)^2 + 4}\right)
\end{align}
While \eqref{eq:prelim_valid_bounds} is a sufficient quantity for the minimal count, a tight overbound of \eqref{eq:prelim_valid_bounds} (for more interpretable bounds) follows as, 
\begin{align}\label{eq:prelim_interp_bounds}
    &\frac{K}{2c} \left(2 - c(N+K) + \sqrt{c^2(N+K)^2 + 4}\right)\\
    &\le \frac{K}{2c} \left(2 - c(N+K) + \sqrt{c^2(N+K)^2 +4c(N+K) +4}\right)\\
    &= \frac{K}{2c} \left(2 - c(N+K) + 2 + c(N+K)\right)\\
    &=\frac{2K}{c} \le N^{(g,\tilde y)}
\end{align}
\end{proof}


\section{Additional Experiment Details}
\label{app:fed_cf:experimental_details}
\subsection{Datasets} \label{app:datasets}
We present a summary of common dataset statistics in Table \ref{tab:fed_cf:datasets} and go into more details on each dataset in the following sections. Additionally, the Folktables and Fitzpatrick datasets are licensed under CC BY 4.0. The Pokec dataset in the SNAP repository is licensed under the BSD license. 

\begin{table}[ht]
    \small
    \centering
    \caption{Dataset Statistics. T refers to Tabular, G refers to Graph, and V refers to vision.\\$^*$ACS datasets have six (6) groups if using the \textit{continental} split schemes (see Section \ref{app:subsec:folktables}).\\\^{} Number of inputs after removing those with unknown group information}
    \label{tab:fed_cf:datasets}
    \begin{tabular}{cccccc}
        \toprule
         \textbf{Name} & \textbf{Type} & \textbf{Size} & \textbf{\# Labeled} & \textbf{\# Groups} & \textbf{\# Classes}\\
         \midrule
         ACSIncome & T & $1,664,500$  & ALL & race$(9)^*$ & $4$\\
         ACSEducation & T & $1,664,500$ & ALL & race$(9)^*$ & $6$\\
         Fitzpatrick & V & $16,012$\^{} & ALL & skin type$(6)$ & $9$\\
         \midrule
         \textbf{Name} & \textbf{Type} & \bm{$(|\gV|,|\gE|)$} & \textbf{\# Labeled} & \textbf{\# Groups} & \textbf{\# Classes}\\
         \midrule
         Pokec-\{n, z\} & G & $(133,138,~1,458,258)$ & $17,594$ & region$(2)$,~gender$(2)$ & $4$\\
         \bottomrule
    \end{tabular}
\end{table}

\subsection{Folktables Datasets}
\label{app:subsec:folktables}
In the fairness space, the American Community Services (ACS) datasets from the \texttt{Folktables} library are a widely used set of tabular data \citep{ding2021retiring}. The data is taken across the 51 U.S. states and territories. For our federated setup and each dataset below, we consider the following 6 partitioning schemes:
\begin{enumerate}[leftmargin=1.1cm]
    \item[(1.)] \textbf{All:} We consider each U.S. state and territory to be its own client
    \item[(2.)] \textbf{Large:} We follow the Bureau of Economic Analysis's division of the U.S. into New England, the Mideast, the Great Lakes, the Plains, the Southeast, the Southwest, the Rocky Mountain, and the Far West.
    \item[(3.)] \textbf{Small:} We follow the U.S. Census Bureau's division of the U.S. into the Northeast, the Midwest, the South, and the West
    \item[(4-6.)] \textbf{Continental All, Continental Large, Continental Small:} The same as 1 to 3, but we only consider the \textit{continental} U.S.--removing Alaska, Hawaii, and Puerto Rico.
\end{enumerate}

All Folktable datasets have a race attribute. When we partition the data using all the states and territories, we use the full version of race, which has 9 groups. However, when partitioning just with \textit{continental} U.S., we combine some demographic groups--primarily those from Alaska, Hawaii, and Puerto Rico--into the appropriate `Other' categories, resulting in a total of 6 groups.

\paragraph{ACSIncome:}  We used the standard ACSIncome dataset from Folktables; however, we divided the targets into four classes by evenly splitting the income into 4 brackets. The sensitive attribute in this case is race, resulting in either $9$ or $6$ groups.

\paragraph{ACSEducation:} This is a custom dataset. We used the ACSTravelTime data and selected Education Level as the target. The education level was divided into $6$ groups: \{did not complete high school, has a high school diploma, has a GED, started an undergrad program, completed an undergrad program, and completed graduate or professional school\}. ACSEducation also uses race as a sensitive attribute.

\subsection{Non-Tabular Datasets}
\paragraph{Pokec-\{n,z\}:} The Pokec-\{n, z\} dataset \citep{takac2012data} is a social network graph dataset collected from Pokec, a popular social network in Slovakia. Since several rows in the dataset are missing features, two commonly used subgraphs are the Pokec-z and Pokec-n datasets. The graphs have four labels corresponding to the fieldwork and two sensitive attributes: gender ($2$ groups) and region ($2$ groups). Our experiments consider each attribute individually as well as intersectional fairness by creating an attribute with $4$ groups. For our federated setup, we use each subgraph as a single client, resulting in $2$ clients.

\paragraph{Fitzpatrick:} The Fitzpatrick dataset \citep{groh2021evaluating} contains clinical images classified based on the depicted skin condition. There are several levels of granularity regarding the skin condition label. We use a version with 9 skin conditions: \{inflammatory, malignant epidermal, genodermatoses, benign dermal, benign epidermal, malignant melanoma, benign melanocyte, malignant cutaneous lymphoma, malignant dermal\}. There are 6 demographic groups based on the Fitzpatrick skin type. For our federated setup, we use a Dirichlet partitioner to split the data into $K\in \{2, 4, 8\}$ clients.

\subsection{Hyperparameters and Implementation}
To promote reproducibility, the source code for FedCF is provided in the supplementary material, along with the configuration files containing the hyperparameters used. All experiments were conducted on at least three seeds. The error bars were all too small to plot.

The project was written using the Flower AI Federated Learning framework~\citep{beutel2020flower} for both base model training and the FedCF framework. 

\paragraph{Compute Resources:} \label{par:fed_cf:compute_resources}Model training with the ACS and Pokec datasets was run on 1 A6000 GPU, and FedCF was run on 8 AMD EPYC 7313 CPUs. For the Fitzpatrick experiments, model training was done on 1 V100 GPU, and FedCF was run on 8 Dual Intel Xeon 8268 CPUs.

\subsection{Non-Conformity Scores} \label{app:scores}
\paragraph{Adaptive Prediction Sets (APS)} The most popular CP method for classification problems is APS~\citep{romano2020classification}. The scoring function first sorts the softmax logits in descending order and accumulates the class probabilities until the correct class is included. For tighter prediction sets, randomization is introduced through a uniform random variable. 

Formally, let $\hat{\pi}$ be a trained classification model with softmaxed output. If $\hat{\pi}(\vx)_{(1)}\geq \hat{\pi}(\vx)_{(2)}\geq \dots \geq \hat{\pi}(\vx)_{(C)}$, $u\sim U(0, 1)$, and $r_y$ is the rank of the correct label, then
\[
    s(\vx, y) = \brackets{\sum_{i = 1}^{r_y} \hat{\pi}(\vx)_{(i)}} - u\hat{\pi}(\vx)_y.
\]

APS has two major drawbacks that have led to it being surpassed by other methods in recent CP literature. First, APS tends to produce large (less efficient) prediction sets. Second, it does not account for structure in its formulation. To address these issues, alternatives like RAPS and DAPS have emerged\footnote{RAPS and DAPS have hyperparameters typically tuned on separate held-out data, but we fix them \textit{a priori} to preserve data for calibration and evaluation as well as to be consistent with what prior federated conformal prediction works have done.}.

\paragraph{Regularized Adaptive Prediction Sets (RAPS)} \citet{angelopoulos2022uncertaintysetsimageclassifiers} introduces a regularization approach for APS. Given the same setup and notation as APS, define $o(\vx, y) = \abs{\braces{c\in\mathcal{Y} : \hat{\pi}(\vx)_c\geq \hat{\pi}(\vx)_y}}$. Then,

\[
    s(\vx, y) = \brackets{\sum_{i = 1}^{r_y} \hat{\pi}(\vx)_{(i)}} - u\hat{\pi}(\vx)_y + \nu\cdot\max\braces{\parens{o(\vx, y) - k_{reg}}, 0},
\]
where $\nu$ and $k_{reg}\geq 0$ are regularization hyperparameters.

\paragraph{Diffusion Adaptive Prediction Sets (DAPS)} Graphs are rich with neighborhood information, with nodes often exhibiting homophily. This suggests that the non-conformity scores of connected nodes are likely to be related. To leverage this insight, DAPS~\cite{zargarbashi23conformal} incorporates a one-step diffusion update on the non-conformity scores. Formally, if $s(\vx, y)$ is a point-wise score function (e.g., APS), then the diffusion step yields a new score function
\[
    \hat{s}(\vv, \vx_v, y) = (1 - \delta) s(\vx_v, y) + \frac{\delta}{|
    \gN_\vv|} \sum\limits_{\vu \in \gN_\vv} s(\vx_u, y),
\]
where $\delta\in[0, 1]$ is a diffusion hyperparamter and $\gN_\vv$ is the $1$-hop neighborhood of $\vv$.
\section{On Using Group Information}\label{app:additional_theory}

\noindent\textbf{(Group-Classwise)-Formulation} A user can alter FedCF to use group information, such that $\lambda_\text{opt} \in \mathbb{R}^{|\mathcal{Y}|\times|\mathcal{G}|}$. Recall from Section \ref{subsubseq:improve_eff}, for a given label $y$ the threshold vector is $\lambda_{\text{opt};(y)} \in \mathbb{R}^{|\mathcal{G}|}$ and $\Lambda^{|\mathcal{G}|}$ defined as $|\mathcal{G}|$ cartesian products of $\Lambda$, the problem becomes,
\begin{equation}\label{eq:groupwise_cf_restate}
    \min_{\lambda\in \Lambda^{|\mathcal{G}|}}f(\lambda) \quad \text{subject to }\quad \text{cg}(\lambda, F_M, \tilde y, \mathcal{G}) - c \leq 0,
\end{equation}
where $f$ is the scalar objective, and we redfine $\text{cg}$ as,
\begin{equation}\label{eq:cg_vector_form}
    \textstyle\text{cg}(\lambda, F_M, \tilde y, \mathcal{G}):= \max\limits_{g_a \in \mathcal{G}}\braces{U_\text{cov}(\lambda_{(g_a)}, F_M, g_a, \tilde{y})}-\min\limits_{g_b \in \mathcal{G}}\braces{L_\text{cov}(\lambda_{(g_b)}, F_M, g_b, \tilde{y})}.
\end{equation}
to accommodate different thresholds for each group, such that $\lambda_{(g)}$ corresponds to the threshold for group $g$ for the fixed label $\tilde y$. For the results presented in Table \ref{tab:fedcf:classwise_and_groupwise}, approximately solve problem \eqref{eq:groupwise_cf_restate} when using $f(\lambda) = \lambda^{\top}1$--the element-wise sum of the $\lambda$ vector (this is discussed below in Section~\ref{subsec:group_info}).

We first note that the feasible set for the classwise formulation (Equation \ref{eq:classwise_formulation}) is a subset of the feasible set for the group-classwise formulation in Equation \ref{eq:groupwise_cf_restate}. We then further assume $f$ is elementwise monotone and separable, e.g. $f(\lambda) := f_{g_{1}}(\lambda_{g_1})+ \cdots +f_{g_{|\mathcal{G}|}}(\lambda_{g_{|\mathcal{G}|}})$ for $g_i \in \mathcal{G}$ where $f_{g_{i}}$ is a non-decreasing function($f(\lambda) = \lambda^{\top}1$ trivially satisfies this assumption). Under this assumption, we observe that an optimizer for the classwise formulation will have an objective cost greater than or equal to the group-classwise formulation. Formally, 

\begin{restatable}{lemma}{thmOptimizerBounds}\label{thm:optimizer_bounds}
    For a fixed label $\tilde y$, fairness metric $F_m$, and set of groups $\mathcal{G}$, let $\lambda_\text{opt}^L \in \mathbb{R}$ be the optimizer of the classwise formulation in Equation \ref{eq:classwise_formulation} and $\lambda_\text{opt}^G \in \mathbb{R}^{|G|}$. Then,
    \begin{equation}
    f(\lambda_\text{opt}^G) \le f(\underbrace{[\lambda_\text{opt}^L, \dots, \lambda_\text{opt}^L]}_{|\mathcal{G}|})
    \end{equation}
\end{restatable}
\begin{proof} By definition, the optimizer of the objective $f$, $\lambda_\text{opt}^G$, must have a smaller objective value than any other feasible solution, i.e., $\underbrace{[\lambda_\text{opt}^L, \dots, \lambda_\text{opt}^L]}_{|\mathcal{G}|}$. 
\end{proof}
\begin{restatable}{remark}{remRestrcuturedOpt}\label{rem:restructure_group_classwise}
    Lemma \ref{thm:optimizer_bounds} does not necessarily imply $\lambda_{\text{opt};(g)}^G \le \lambda_\text{opt}^L$ (where $\lambda_{\text{opt};(g)}^G$ is the threshold for class $g$). However, for the class of problems we consider, this is a simplifying assumption that is often true since $\lambda_\text{opt}^L$ is increased to cover the worst-case group, $g$. Thus, we instead solve the following problem, 
    \begin{subequations}\label{eq:restricted_problem}
        \begin{align}
        \min_{\lambda\in \Lambda^{|\mathcal{G}|}}&f(\lambda)\\
        \textrm{s.t}\quad& \text{cg}(\lambda, F_M, \tilde y, \mathcal{G}) - c \leq 0\\
        & \lambda_{(g)} \le \lambda_\text{opt}^L\quad \forall g\in\mathcal{G}.
        \end{align}
    \end{subequations}
\end{restatable}

To solve the problem in Equation \ref{eq:restricted_problem} we first run the FedCF optimization (Algorithm \ref{alg:descent_cf_algo_extended}) described in to solve $\lambda_\text{opt}^{L}$, for $N$ rounds (num\_rounds in the Algorithm). Following this, we run the Algorithm for $R$ iterations, where we admit a new solution as optimal if $\lambda_{(g)}^{(t+1)} \le \lambda_{(g)}^{(t)}~\forall g\in\mathcal{G}$ (using the element-wise montone property of $f$) and $\lambda$ satisfies the coverage gap requirement. While this leads to local minima, we use random restarts to escape them and find a better solution. By taking the described approach, we can more efficiently search in a smaller space of $\Lambda$ for the first $N$ steps and then perform $M$ searches in the larger space $\Lambda^{|\mathcal{G}|}$. For our implementation, we set {$N$ to 50, $R$ to 50, and recall that $f(\lambda) = \lambda^{\top}1$}.

\subsection{Quantifying the Efficiency Improvement} 
In the following section, we bound the efficiency gap between the classwise and (group, class)-wise formulation when solving the problem in Equation \ref{eq:restricted_problem}:
\begin{restatable}{theorem}{thmBoundsThm}\label{thm:bounds_thm}
    Let $\lambda^{L} \in \mathbb{R}^{|\mathcal{Y}|}$ and $\lambda^{G} \in \mathbb{R}^{|\mathcal{Y}|\times|\mathcal{G}|}$ be optimizers for the class-wise and (group, class)-wise formulations of (Fed)CF, respectively. Then let $x_{test}$ represent an unseen covariate, and $\bm{g}(x_{\textrm{test}})$ map the covariate to its group. Lastly, let $J_{(y,g)}(\lambda) = \Pr[s(x_\textrm{test},y) \leq \lambda| \bm{g}(x_{test}) = g]$ be the CDF function of the conditional score distribution and Lipschitz continuous, with Lipschitz constant $M_{(y,g)}$ over the interval $[\lambda_0, \lambda_\textrm{max}]$---that is for all $v,w \in [\lambda_0, \lambda_\textrm{max}]$, $|J_{(y,g)}(v) - J_{(y,g)}(w)| \le M_{(y,g)}|v-w|$. Then, the efficiency gap between the two (Fed)CF formulations, $\Delta_{\mathrm{eff}}$, satisfies, 
    \begin{equation}
        \Delta_{\mathrm{eff}} \le \sum_{y \in \mathcal{Y}^{+}}\sum_{g \in \mathcal{G}}M_{(y,g)}\left|(\lambda^{L}_{(y)}-\lambda^{G}_{(y,g)})\right|\Pr[\bm{g}(x_{\textrm{test}}) = g].
    \end{equation}
\end{restatable}
\begin{proof}
    Observe from the analysis of~\citep{maneriker2025conformal}, the efficiency difference of two prediction sets is quantified as the difference in likelihood of a non-conformity score being less than the threshold for each label. That is,
    \begin{equation}
        \Delta_{\mathrm{eff}} = \left|\underset{x_\textrm{test}\sim\mathcal{X}}{\mathbb{E}}\left[\sum_{y \in \mathcal{Y}}\left(\mathbf{1}[s(x_\textrm{test},y) \le \lambda^L_{(y)}]-\mathbf{1}[s(x_\textrm{test},y) \le \lambda^{G}_{(y,\bm{g}_\textrm{test})}]\right)\right]\right|,
    \end{equation}
    where $\lambda^L_{(y)}$ is the component for class $y$ in $\lambda^L$ and $\lambda^G_{y, g}$ is the component for $(y, g)$ in $\lambda^G$. 
    For notational brevity, let $\bm{g}_\textrm{test} \coloneqq \bm{g}(x_{\textrm{test}})$. Then,
    \begin{align}
        \Delta_{\mathrm{eff}} &= \left|\sum_{y \in \mathcal{Y}}\left(\Pr[s(x_\textrm{test},y) \le \lambda^L_{(y)}]-\Pr[s(x_\textrm{test},y) \le \lambda^{G}_{(y,\bm{g}_\textrm{test})}]\right)\right|\\
        &=\text{\footnotesize$\left|\sum_{y \in \mathcal{Y}}\sum_{g \in \mathcal{G}}\left(\Pr[s(x_\textrm{test},y) \le \lambda^L_{(y)}\mid \bm{g}_\textrm{test} = g]-\Pr[s(x_\textrm{test},y) \le \lambda^{G}_{(y,\bm{g}_\textrm{test})}\mid\bm{g}_\textrm{test} = g]\right)\Pr[\bm{g}_\textrm{test} = g]\right|$}\\
        &=\text{\footnotesize$\left|\sum_{y \in \mathcal{Y}}\sum_{g \in \mathcal{G}}\left(\Pr[s(x_\textrm{test},y) \le \lambda^L_{(y)}\mid \bm{g}_\textrm{test} = g]-\Pr[s(x_\textrm{test},y) \le \lambda^{G}_{(y,\bm{g}_\textrm{test})}\mid\bm{g}_\textrm{test} = g]\right)\Pr[\bm{g}_\textrm{test} = g]\right|\label{eq:prelim_step_cdf_eff}$}\\
        &= \left|\sum_{y \in \mathcal{Y}}\sum_{g \in \mathcal{G}}\left(J_{(y,g)}(\lambda^{L}_{(y)})-J_{(y,g)} (\lambda^{G}_{(y,g)})\right)\Pr[\bm{g}_\textrm{test} = g]\right|\quad\quad\text{(Definition of CDF)}\\
        &\le \sum_{y \in \mathcal{Y}}\sum_{g \in \mathcal{G}}\left|J_{(y,g)}(\lambda^{L}_{(y)})-J_{(y,g)}(\lambda^{G}_{(y,g)})\right|\Pr[\bm{g}_\textrm{test} = g]\quad\quad\quad\text{(Triangle Inequality)}\\
        &\le \sum_{y \in \mathcal{Y}}\sum_{g \in \mathcal{G}}M_{(y,g)}\left|(\lambda^{L}_{(y)}-\lambda^{G}_{(y,g)})\right|\Pr[\bm{g}_\textrm{test} = g].\quad\quad\quad\quad\text{(Lipschitz Continuity of $J_{(y,g)}$)}
    \end{align}
    Lastly, we note that since $\lambda^L$ and $\lambda^G$ are optimizers to their respective (Fed)CF formulation, for all $y \notin \mathcal{Y^+}$, $\lambda^{L}_{(y)}=\lambda^{G}_{(y,\bm{g}_\textrm{test})} = \lambda_0$. Thus, the bound reduces to,
    \begin{align}\label{eq:bound_thm_final_result}
        \Delta_{\text{eff}}\le \sum_{y \in \mathcal{Y}^{+}}\sum_{g \in \mathcal{G}}M_{(y,g)}\left|(\lambda^{L}_{(y)}-\lambda^{G}_{(y,g)})\right|\Pr[\bm{g}_\textrm{test} = g]
    \end{align}
\end{proof}

\section{FedCF vs FairFed + FCP}\label{app:fed_cf:fairfed}
FedCF takes a (black-box) federated machine learning model and applies the ideas from Conformal Fairness to produce fair federated conformal predictors; however, it is natural to ask whether using an underlying fair ML model would suffice. We conduct an experiment comparing FedCF vs FCP with FairFed~\citep{ezzeldin2023}. We report the worst-case fairness violations using Fitzpatrick with $8$ clients for Demographic Parity. For consistency, we use a Demographic Parity-based weight-scheme for FairFed and set $\beta\in\{0.5, 2.0\}$ in Table~\ref{tab:fed_cf:fairfed}, where $\beta$ is a parameter that controls the fairness budget in the weight update. If $\beta = 0$, then FairFed is equivalent to FedAvg. We observe that using FCP with FairFed is insufficient to achieve the desired fairness guarantee, demonstrating the utility and need for FedCF.

\begin{table}[!ht]
\centering
\caption{\textbf{Fitzpatrick, 8 clients, RAPS} We report the \textbf{fairness disparity} for different $c$ values and observe that simply using a fair base model is insufficient for controlling the fairness-specific coverage gap between groups.}
\label{tab:fed_cf:fairfed}
\begin{tabular}{lcccccc}
\toprule
\multirow{2}{*}{$\beta$} & \multicolumn{2}{c}{$c = 0.1$} & \multicolumn{2}{c}{$c = 0.15$} & \multicolumn{2}{c}{$c = 0.2$} \\
\cmidrule(lr){2-3} \cmidrule(lr){4-5} \cmidrule(lr){6-7}
 & FCP & FedCF & FCP & FedCF & FCP & FedCF \\
\midrule
0.5 & 0.302 & \textbf{0.045} & 0.302 & \textbf{0.140} & 0.303 & \textbf{0.197} \\
2.0 & 0.247 & \textbf{0.024} & 0.247 & \textbf{0.122} & 0.247 & \textbf{0.122} \\
\bottomrule
\end{tabular}
\end{table}

\section{FedCF with Enhanced Privacy}
\label{app:fed_cf:privacy}
Preserving data privacy is a fundamental pillar of FL mechanisms, as they typically interact with sensitive client data. In this vein, we formulate an \textit{enhanced privacy} version of FedCF.

\subsection{Enhanced Privacy} To better preserve privacy (compared to the \textit{communication efficient} approach), we can offload more of the computation to the client-side, making it harder for the server-side to reverse-engineer or infer distributional information from the sent quantities. Expanding Equation \ref{eq:cg_form_2}, we get
\begin{equation}
\label{eq:client_return_form_2}
    \text{\footnotesize$U_\text{cov}(\lambda, F_M, g_a, \tilde{y})-L_\text{cov}(\lambda, F_M, g_b, \tilde{y})
    = \sum\limits_{k=1}^{K} \gamma_k \underbrace{\braces{\frac{(\alpha_k^{(g_a,\tilde y); \lambda}+1)}{(n_k+1)L^{(g_a,\tilde y)}} - \frac{\alpha_k^{(g_b,\tilde y); \lambda}}{(n_k+1)U^{(g_b,\tilde y)}}}}_\text{Returned by the Client}$}.
\end{equation}

In this formulation, the client sends back the summand for each group, positive label pair except for $g_a \ne g_b$, making the space complexity of the client's message $\gO\big(\abs{\gG}^2\abs{\gY^{+}}\big)$--quadratic with respect to the number of groups and linear with respect to positive labels. Observe that for $g_a = g_b$, we are computing the interval width (Equation \ref{eq:client_return_form_iw}) for which we have a tighter overbound from the left hand side of Equation \ref{eq:deriv_final_result_bound} which can be computed using the prior terms sent to compute $L^{(g,\tilde y)}$ and $U^{(g,\tilde y)}$. This allows us to mitigate reconstruction attacks, which would enable one to recover $\alpha_k^{(g_a,\tilde y)}$ if the pairwise gap between $g_a$ and $g_b$ were sent. While this can lead to an overapproximation of the coverage gap, the bounds in Equation \ref{eq:prelim_interp_bounds} establish when this estimate is less than $c$ (i.e., $\frac{2K}{c} \le N^{(g,\tilde{y})}$), thus not hurting our approaches' formulation to find the smallest valid $\lambda_{\text{opt};(y)}$ for each label $y$. 

The data privacy improves with this approach compared to the \textit{communication efficient} version, since the data sent to the server is the difference of client-level summary statistics, which obfuscates individual distribution information from the server. However, unlike the \textit{communication efficient} approach, the upper-coverage term ($U_\text{cov}$) is not separable from the aggregated sum, thus preventing us from enforcing $U_\text{cov} < 1$. In limited data settings, this results in more conservative coverage gap estimates, which increases the prediction set size when using the \textit{enhanced privacy} approach.

\paragraph{Unknown $\gamma_k$.} In our work, we take $\gamma_k\propto (n_k + 1)$; however, if $\gamma_k$ is unknown, we can still apply FedCF using the Enhanced Privacy variant just discussed. This is because we can factor our $\gamma_k$ in Equation~\ref{eq:client_return_form_2} and apply the analysis done in ~\citet[Appendix B]{lu2023federated}.

Below we provide a side-by-side comparison of the \textit{communication efficient} and \textit{enhanced privacy} algorithms for computing the federated coverage gap on the client side (Figure~\ref{fig:client_cg_algorithms}). A unified server-side algorithm can be found in Algorithm~\ref{alg:server_unified}. 

\begin{figure}[ht!]
    \centering
    \begin{minipage}{.49\textwidth}
      \input{algorithms/client_cg_low_overhead}
    \end{minipage}
    \hfill
    \begin{minipage}{.49\textwidth}
      \input{algorithms/client_cg_more_private}
    \end{minipage}
    \caption{\textbf{Pseudocode for the two client-side protocols to compute the coverage gap.} The \textit{enhanced privacy} version (on the right) includes the pairwise computation step, which results in a larger space complexity compared to the more \textit{communication efficient} version (on the left). The \texttt{global} is used to convey that the priors are precomputed and available when needed.}
    \label{fig:client_cg_algorithms}
\end{figure}

\begin{algorithm}[H]
\caption{Full Server-side Aggregation for Coverage Gap}
\label{alg:server_unified}

\begin{algorithmic}[1]
\small
\FUNCTION{\textsc{ServerCG}($\lambda_0, F_M, \tilde y, \mathcal{G}, \texttt{formulations}$)}

\STATE $n\_list = [0]_{\gK}$

    \STATE $l\_list = [0]_{\gK \times \mathcal{G}}$, $u\_list = [0]_{\gK \times \mathcal{G}}$ \COMMENT{Used for comm. efficient formulations}
    \STATE $pw\_cg\_list = [0]_{\gK \times \mathcal{G} \times \mathcal{G}}$ \COMMENT{Used for private formulations}

\FOR{client $k \in \gK$ \textbf{in parallel}}
    \IF{$\texttt{formulations} == \texttt{COMM\_EFFICIENT}$}
        \STATE Receive $(l_k, u_k, n_k) = \textsc{ClientCG\_Comm\_Efficient}(k, \lambda_0, F_M, \tilde y, \mathcal{G})$
        \STATE $l\_list[k] \gets l_k$, $u\_list[k] \gets u_k$
    \ELSE
        \STATE Receive $(pw\_cg_k, n_k) = \textsc{ClientCG\_Private}(k, \lambda_0, F_M, \tilde y, \mathcal{G})$
        \STATE $pw\_cg\_list[k] \gets pw\_cg_k$
    \ENDIF
    \STATE $n\_list[k] \gets n_k$
\ENDFOR

\item[]
\STATE // Initialize final coverage variables
\STATE $N = \sum_{k \in \gK} n\_list[k]$, $K = \abs{\gK}$, $U_\text{cov} = [0]_\mathcal{G}$, $L_\text{cov} = [0]_\mathcal{G}$, $PW_\text{cov} = [0]_{\mathcal{G} \times \mathcal{G}}$
\STATE $\texttt{all\_comm\_efficient} = \texttt{all}\parens{\texttt{formulations}[k] == \texttt{COMM\_EFFICIENT}}$
\FOR{client $k \in \gK$}
    \STATE $\gamma_k = \parens{(n\_list[k] + 1)/(N + K)}$
    \IF{\texttt{all\_comm\_efficient}}
        \IF{\texttt{use\_mle}}
            \STATE $U_\text{cov} $ += $ \parens{\gamma_k/\pi^{(g,\tilde y)}} \cdot u\_list[k]$ \COMMENT{Standard operations are element-wise}
            \STATE $L_\text{cov} $ += $ \parens{\gamma_k / \pi^{(g,\tilde y)}} \cdot l\_list[k]$
        \ELSE
            \STATE $U_\text{cov} $ += $ \parens{\gamma_k/L^{(g,\tilde y)}} \cdot u\_list[k]$ \COMMENT{Standard operations are element-wise}
            \STATE $L_\text{cov} $ += $ \parens{\gamma_k / U^{(g,\tilde y)}} \cdot l\_list[k]$
        \ENDIF
    \ELSE
        \IF{$\texttt{formulations}[k] == \texttt{COMM\_EFFICIENT}$}
            \STATE $PW_\text{cov} $ += $ \gamma_k \cdot \parens{u\_list[k]~\ominus~l\_list[k]^{\top}}$ \COMMENT{$\ominus$ is pairwise differences between two vectors.}
        \ELSE
            \STATE $PW_\text{cov} $ += $ \gamma_k \cdot pw\_cg\_list[k]$
        \ENDIF
    \ENDIF
\ENDFOR

\item[]
\IF{\texttt{all\_comm\_efficient}}
    \STATE $U_\text{cov} = \text{element\_wise\_min}(U_\text{cov},[1]_\mathcal{G})$ \COMMENT{Limit upper coverage prior to coverage gap calculation}
    \STATE $\text{cov\_gap} =  \max\limits_{g \in \mathcal{G}} U_\text{cov}[g] - \min\limits_{g \in \mathcal{G}} L_\text{cov}[g]$
\ELSE
    \STATE $\text{cov\_gap} = \min\left\{\max\limits_{g_a, g_b \in \mathcal{G}} PW_\text{cov}[g_a, g_b], 1 \right\}$ \COMMENT{Limit Coverage Gap to 1}
\ENDIF

\STATE \textbf{return} \text{cov\_gap}
\ENDFUNCTION
\end{algorithmic}
\end{algorithm}

\subsection{Hybrid} In real-world scenarios, clients often have varying privacy and communication requirements. For example, clients in resource-constrained areas may not have the network bandwidth to send the necessary packets to the centralized server. In our proposed \textit{hybrid} approach, a client may elect to be \textit{communication efficient}, without preventing the remaining clients from using the \textit{enhanced privacy} protocol. We present the full server-side algorithm, which combines the \textit{communication efficient}, \textit{enhanced privacy}, and \textit{hybrid} protocols for the federated coverage gap, in Algorithm \ref{alg:server_unified}.

\subsection{Empirical Comparison}
We conduct two experiments using the Fitzpatrick dataset and $8$ clients, as well as the larger ACSIncome dataset with the \textit{continental\_all} partition scheme--$48$ clients--to test the \textit{communication efficient}, \textit{enhanced privacy}, and \textit{hybrid} protocols. For the \textit{hybrid} protocol, we randomly assign half the clients to each protocol. From Table \ref{tab:form_expt}, we observe that all configurations control the fairness disparity within the closeness criterion; however, if all clients agree upon the \textit{communication efficient} protocol, FedCF achieves a better efficiency with a slightly worse fairness disparity, albeit still within the closeness criterion. Though with more data, we observe that the efficiency gaps are smaller as seen in Table \ref{tab:form_expt_acs}.


\paragraph{}\hspace{-3.5mm}With these empirical results, note that under the hybrid setting, clients that optimize for communication efficiency still benefit from the fact that they can operate over a limited bandwidth network connection. The required bandwidth for a particular client undergoes a factor of $\approx \frac{|\mathcal{G}|}{2}$ reduction--i.e. $\mathcal{O}(|\mathcal{G}|^2|\mathcal{Y}^+) \to \mathcal{O}(2\cdot |\mathcal{G}||\mathcal{Y}^+)$, when a client selects the \textit{communication efficient} protocol while ensuring the remaining clients benefit from the \textit{enhanced privacy} protocol. For Fitzpatrick, this results in the communication overhead (in bytes) being reduced by a factor of three.
($|\mathcal{G}|/2=3$, for Fitzpatrick). For the ACS datasets using the small, large, or all client assignments, this reduction corresponds to $|\mathcal{G}|/2=4.5$
\raggedbottom
\begin{center}
\begin{table}[ht!]
    \centering
    \small
    \caption{\textbf{Fitzpatrick, $\bm{8}$ clients, APS.} Each entry is of the form, \textbf{efficiency/fairness disparity}. We bold the lower fairness disparity value for each comparison. We observe that the \textit{communication efficient} approach produces the most efficient prediction sets, while having a similar or higher fairness disparity. The \textit{enhanced privacy} approach and \textit{hybrid} approach have similar performance (w.r.t efficiency and fairness disparity), with minor differences stemming from the stochasticity of FedCF, as they default to the same coverage-gap aggregation protocol (see Algorithm \ref{alg:server_unified}). All methods improve upon the baseline fairness disparity and control for the closeness criterion.}
    \label{tab:form_expt}

    \subfloat[Enhanced Privacy]{%
    \begin{tabular}{lcccccc}
    \toprule
    \multirow{2}{*}{Metric} & \multicolumn{2}{c}{$c = 0.1$} & \multicolumn{2}{c}{$c = 0.15$} & \multicolumn{2}{c}{$c = 0.2$} \\
    \cmidrule(lr){2-3} \cmidrule(lr){4-5} \cmidrule(lr){6-7}
     & Base & Ours & Base & Ours & Base & Ours \\
    \midrule
    \textbf{Dem\_Parity} & 3.671 / 0.136 & 7.041 / \textbf{0.047} & 3.671 / 0.136 & 4.978 / \textbf{0.101} & 3.672 / 0.136 & 3.940 / \textbf{0.111} \\
    \textbf{Pred\_Eq}    & 3.676 / 0.134 & 7.042 / \textbf{0.047} & 3.675 / 0.134 & 4.765 / \textbf{0.094} & 3.672 / 0.134 & 3.880 / \textbf{0.106} \\
    \bottomrule
    \end{tabular}}

    \vspace{0.1cm}

    \subfloat[Hybrid (50-50)]{%
    \begin{tabular}{lcccccc}
    \toprule
    \multirow{2}{*}{Metric} & \multicolumn{2}{c}{$c = 0.1$} & \multicolumn{2}{c}{$c = 0.15$} & \multicolumn{2}{c}{$c = 0.2$} \\
    \cmidrule(lr){2-3} \cmidrule(lr){4-5} \cmidrule(lr){6-7}
     & Base & Ours & Base & Ours & Base & Ours \\
    \midrule
    \textbf{Dem\_Parity} & 3.674 / 0.136 & 6.871 / \textbf{0.066} & 3.670 / 0.136 & 4.967 / \textbf{0.103} & 3.670 / 0.136 & 3.939 / \textbf{0.111} \\
    \textbf{Pred\_Eq}    & 3.671 / 0.134 & 7.041 / \textbf{0.047} & 3.673 / 0.134 & 5.123 / \textbf{0.094} & 3.671 / 0.134 & 3.919 / \textbf{0.107} \\
    \bottomrule
    \end{tabular}}

    \vspace{0.1cm}

    \subfloat[Communication Efficient]{%
    \begin{tabular}{lcccccc}
    \toprule
    \multirow{2}{*}{Metric} & \multicolumn{2}{c}{$c = 0.1$} & \multicolumn{2}{c}{$c = 0.15$} & \multicolumn{2}{c}{$c = 0.2$} \\
    \cmidrule(lr){2-3} \cmidrule(lr){4-5} \cmidrule(lr){6-7}
     & Base & Ours & Base & Ours & Base & Ours \\
    \midrule
    \textbf{Dem\_Parity} & 3.674 / 0.137 & 6.053 / \textbf{0.104} & 3.674 / 0.136 & 4.890 / \textbf{0.103} & 3.672 / 0.136 & 3.935 / \textbf{0.111} \\
    \textbf{Pred\_Eq}    & 3.671 / 0.134 & 6.308 / \textbf{0.109} & 3.674 / 0.134 & 4.931 / \textbf{0.094} & 3.674 / 0.134 & 3.876 / \textbf{0.106} \\
    \bottomrule
    \end{tabular}}
\end{table}
\end{center}
\begin{table}[H]
    \centering
    \small
    \caption{\textbf{ACSIncome, Continental All, RAPS.} Each entry is of the form, \textbf{efficiency/fairness disparity}. We observe that with sufficient data, each protocol performs at a similar efficiency, and they all decrease the baseline fairness disparity and control it within the closeness criterion. Our fairness disparity values are bolded.}
    \label{tab:form_expt_acs}

    \subfloat[Enhanced Privacy]{%
    \begin{tabular}{lcccccc}
    \toprule
    \multirow{2}{*}{Metric} & \multicolumn{2}{c}{$c = 0.1$} & \multicolumn{2}{c}{$c = 0.15$} & \multicolumn{2}{c}{$c = 0.2$} \\
    \cmidrule(lr){2-3} \cmidrule(lr){4-5} \cmidrule(lr){6-7}
     & Base & Ours & Base & Ours & Base & Ours \\
    \midrule
    \textbf{Dem\_Parity} & 2.609 / 0.148 & 3.037 / \textbf{0.086} & 2.610 / 0.148 & 2.634 / \textbf{0.138} & 2.613 / 0.148 & 2.613 / \textbf{0.148} \\
    \textbf{Pred\_Eq}    & 2.607 / 0.160 & 3.294 / \textbf{0.063} & 2.610 / 0.161 & 2.661 / \textbf{0.138} & 2.609 / 0.161 & 2.609 / \textbf{0.161} \\
    \bottomrule
    \end{tabular}}

    \vspace{0.1cm}

    \subfloat[Hybrid (50-50)]{%
    \begin{tabular}{lcccccc}
    \toprule
    \multirow{2}{*}{Metric} & \multicolumn{2}{c}{$c = 0.1$} & \multicolumn{2}{c}{$c = 0.15$} & \multicolumn{2}{c}{$c = 0.2$} \\
    \cmidrule(lr){2-3} \cmidrule(lr){4-5} \cmidrule(lr){6-7}
     & Base & Ours & Base & Ours & Base & Ours \\
    \midrule
    \textbf{Dem\_Parity} & 2.608 / 0.148 & 3.039 / \textbf{0.085} & 2.609 / 0.148 & 2.633 / \textbf{0.138} & 2.596 / 0.148 & 2.596 / \textbf{0.148} \\
    \textbf{Pred\_Eq}    & 2.606 / 0.160 & 3.277 / \textbf{0.079} & 2.606 / 0.160 & 2.657 / \textbf{0.138} & 2.595 / 0.161 & 2.595 / \textbf{0.161} \\
    \bottomrule
    \end{tabular}}

    \vspace{0.1cm}

    \subfloat[Communication Efficient]{%
    \begin{tabular}{lcccccc}
    \toprule
    \multirow{2}{*}{Metric} & \multicolumn{2}{c}{$c = 0.1$} & \multicolumn{2}{c}{$c = 0.15$} & \multicolumn{2}{c}{$c = 0.2$} \\
    \cmidrule(lr){2-3} \cmidrule(lr){4-5} \cmidrule(lr){6-7}
     & Base & Ours & Base & Ours & Base & Ours \\
    \midrule
    \textbf{Dem\_Parity} & 2.608 / 0.148 & 3.037 / \textbf{0.086} & 2.611 / 0.148 & 2.634 / \textbf{0.138} & 2.601 / 0.149 & 2.601 / \textbf{0.149} \\
    \textbf{Pred\_Eq}    & 2.610 / 0.161 & 3.300 / \textbf{0.071} & 2.607 / 0.160 & 2.658 / \textbf{0.138} & 2.609 / 0.161 & 2.609 / \textbf{0.161} \\
    \bottomrule
    \end{tabular}}
\end{table}
\clearpage
\section{Differential Privacy in FedCF}
\label{app:fedcf:diff_privacy} As both the \emph{Communication-Efficient} and \emph{Enhanced Privacy} approaches can be targeted by reconstruction attacks from a malicious server, we extend
FedCF to formally consider $(\epsilon,\delta)$-differential privacy (DP) to maintain robustness under adversarial conditions. DP is a mathematically rigorous framework for data privacy~\citep{dwork2006differential}, where $\delta$ is the probability that $\epsilon$-DP is violated. We can embed DP within our framework via client shuffling and additive noise approaches. Client shuffling is a global DP approach that is performed after the client sends data. Before the server receives the data, it goes through a trusted, centralized shuffler to anonymize which client has sent what data \citep{erlingsson2019amplification}. Our framework can accommodate client shuffling due to its parallelism with client-side computation and its additive aggregation approach.

For additive noise, we propose augmenting the values each client sends back with Gaussian noise~\citep{dwork2014algorithmic,dong2022gaussian}, such that a client returns, 
\begin{equation}\label{eq:g_func}
h = {\frac{(\alpha_k^{(g_a,\tilde y); \lambda}+1)}{(n_k+1)L^{(g_a,\tilde y)}} - \frac{\alpha_k^{(g_b,\tilde y); \lambda}}{(n_k+1)U^{(g_b,\tilde y)}}} + X,
\end{equation}
where $X$ is a Gaussian random variable (R.V). For the \textit{communication efficient} approach, one would add a Gaussian R.V. to the upper coverage and lower coverage terms returned by the client. To ensure $(\epsilon,\delta)$-DP, we make $X\sim \mathcal{N}\bigg(0, \frac{2\ln(1.25/\delta)(\Delta h)^2}{\epsilon^2}\bigg)$, where $\Delta h$ is the sensitivty of $h$--or how much $h$ can change if one of the points in the client's dataset changes. For FedCF, $h$ can be affected by data changes in the covariates (or non-conformity scores), labels, and group memberships.

\subsection{\textbf{Example:} Differential Privacy Bounds for Enhanced Privacy Protocol}
Observe using the \textit{enhanced privacy} approach, $\Delta h \leq \frac{1}{n_k}\big(\frac{1}{L^{(g_a,\tilde y)}} + \frac{1}{U^{(g_b,\tilde y)}}\big)$. For the \textit{communication efficient} approach $\Delta h \leq \frac{1}{n_k}$ for the components of the upper and lower coverage terms sent by the client. The server will know each client's sensitivity and their choice of $\epsilon$ and $\delta$.

To demonstrate how the server can estimate the coverage gap, we will consider an example using the \textit{enhanced privacy} approach. The result from server aggregation is, 
\begin{align}
    &\text{cov\_gap\_est}(\lambda, F_M,g_a,g_b,\tilde y)\nonumber\\
    &~~~\coloneqq\sum\limits_{k=1}^{K} \gamma_k \underbrace{\braces{\frac{(\alpha_k^{(g_a,\tilde y); \lambda}+1)}{(n_k+1)L^{(g_a,\tilde y)}} - \frac{\alpha_k^{(g_b,\tilde y); \lambda}}{(n_k+1)U^{(g_b,\tilde y)}}+X_k^{(g_a,g_b,\tilde y)}}}_\text{Returned by the Client},
\end{align}
where $X_k^{(g_a,g_b,\tilde y)} \sim \mathcal{N}(0, {\sigma^2}_{k;(g_a,g_b,\tilde y)})$ such that ${\sigma^2}_{k;(g_a,g_b,\tilde y)}$ provides $(\epsilon_k,\delta_k)$-DP for the client. Then observe, 

\begin{align}
    &\text{cov\_gap\_est}(\lambda, F_M,g_a,g_b,\tilde y)\nonumber\\
    &~~~~~~~~~~= \underbrace{\sum\limits_{k=1}^{K} \gamma_k \braces{\frac{(\alpha_k^{(g_a,\tilde y); \lambda}+1)}{(n_k+1)L^{(g_a,\tilde y)}} - \frac{\alpha_k^{(g_b,\tilde y); \lambda}}{(n_k+1)U^{(g_b,\tilde y)}}}}_\text{true coverage gap} + \underbrace{\sum\limits_{k=1}^{K} \gamma_kX_k^{(g_a,g_b,\tilde y)}}_\text{Guassian R.V}\\
    &~~~~~~~~~~=\text{cov\_gap}(\lambda, F_M,g_a,g_b,\tilde y) + X,~~X \sim \mathcal{N}\bigg(0,\sum\limits_{k=1}^{K} \gamma_k^2{\sigma^2}_{k;(g_a,g_b,\tilde y)}\bigg) 
\end{align}

Using a prespecified probability $\beta$ we can accept or reject the statement $\text{cov\_gap\_est}(\lambda, F_M,g_a,g_b,\tilde y) \leq c$. In other words, we can check whether,

\begin{align*}
&\text{cov\_gap}(\lambda, F_M,g_a,g_b,\tilde y) + X \leq c \implies X \leq c - \text{cov\_gap}(\lambda, F_M,g_a,g_b,\tilde y) \\
&\implies\boxed{\underbrace{\frac{X}{\sqrt{\sum\limits_{k=1}^{K} \gamma_kX_k^{(g_a,g_b,\tilde y)}}}}_{\text{Standard Normal RV}} \leq \frac{c - \text{cov\_gap}(\lambda, F_M,g_a,g_b,\tilde y)}{\sqrt{\sum\limits_{k=1}^{K} \gamma_kX_k^{(g_a,g_b,\tilde y)}}}}.
\end{align*}

Then, if $\Phi\parens{\frac{c - \text{cov\_gap}(\lambda, F_M,g_a,g_b,\tilde y)}{\sqrt{\sum\limits_{k=1}^{K} \gamma_kX_k^{(g_a,g_b,\tilde y)}}}} > \beta$, where $\Phi$ is the CDF of the standard normal distribution, we can accept the coverage gap as being less than $c$. In other words, with probability $\beta$, the closeness criterion is satisfied with $\lambda$. 

While using Gaussian noise results in a PAC-style guarantee, one could instead add strictly positive noise via an exponential mechanism \cite{dwork2014algorithmic}, where the noise $X\sim \text{exp}(\frac{\epsilon}{2\Delta h})$ is selected to satisfy $\epsilon$-DP, i.e., $(\epsilon,0)$-DP. This would result in an overestimate of the actual coverage gap. If the overestimate satisfies the closeness criterion, then the server would assert that the exact coverage gap also satisfies the closeness criterion--thus restoring the strict (non-PAC) guarantee in FedCF.   
\section{FedCF for Auditing}
\label{app:fed_cf:audit}
Auditing tools are vital for regulatory bodies to ensure ML models comply with fairness and safety standards \citep{maneriker2023}. In this regard, we present how FedCF can be used to determine if a federated conformal predictor is \textit{fair} according to the regulator's specification of fairness and closeness criterion, $c$~\citep{eeoc1979, fair_chance_act, aedt_nyc, euaiact}. 

To assess compliance, FedCF can use the global threshold ($\lambda$) values used by the previously trained conformal-predictor and provide it to each client. Then, the client should send the sufficient values calculated via Algorithm \ref{alg:client_form_1} (or Algorithm \ref{alg:client_form_2}) to compute the federated coverage gap. The server would aggregate these values using Algorithm \ref{alg:server_unified}. If the calculated coverage gap is below $c$, then the server can assert that the conformal predictor is fair. 

Our auditing approach does not require all clients to provide data for auditing. As discussed in Section \ref{subsec:fed_cov_gap}, our guarantees hold assuming that the test-point, $(\bm x_\text{test}, y_\text{test}) \sim \sum_{k=1}^{K}\gamma_k P_k$, is sampled from a mixture of client distributions where $\gamma_k$ is the probability the test point is sampled from $P_k$, or equivilantly is exchangeable with data from client $k$. Thus, if a subset of clients used to train the original federated conformal predictor provides auditing data, then the audit guarantees will hold assuming that $(\bm x_\text{test}, y_\text{test})$ are sampled from a mixture consisting of the subset of clients used for auditing. This result allows clients to \textit{independently} decide if they would like to submit data for auditing.

The auditing tool provided by FedCF can also be used to ascertain the \textit{marginal} fairness with respect to each client. Using the auditing procedure described above with data from one client, FedCF can determine if the global, federated conformal predictor maintains fairness with respect to data from a single client. If the computed coverage gap is less than $c$, then the fairness guarantees hold with regard to $(\bm x_\text{test},y_\text{test}) \sim P_k$, i.e., the client's marginal distribution.

\clearpage
\section{More Results}\label{app:fedcf:more_results}
Here, we provide additional results for the ACS and Pokec-\{n,z\} datasets. Recall, in each figure, we use a \textbf{solid} line to represent the \textit{average} efficiency of the \textbf{base federated conformal predictors} across different thresholds and a \textbf{dashed} line to represent the corresponding \textit{average} worst-case fairness disparity. The bar plot shows the efficiency and worst-case fairness disparity using FedCF, while the \textbf{dots} indicate the \textit{desired} fairness disparity. We report the average base performance for clarity and readability

\subsection{Coverage Results}\label{app:coverage_results}
A key property of FedCF is that it ensures the base coverage guarantee is satisfied by bounding the threshold search space by FCP's conformal quantile, $\hat{q}(\alpha)$ from \eqref{eq:fedcf:fedcp_quantile}. In every case, FedCF's coverage will be greater than or equal to FCP's coverage.
\begin{table}[htbp!]
    \centering
    \footnotesize
\captionsetup{font=footnotesize,labelfont=footnotesize}
    \caption{{Various datasets using \textbf{Demographic Parity}. We compare the \textbf{marginal coverage} of the Base (FCP) approach against FedCF over different closeness criteria ($c$).}}
    \label{tab:fedcf_demographic_parity}
    \begin{tabular}{l ccc ccc ccc}
        \toprule
        & \multicolumn{3}{c}{\textbf{Pokec-\{n, z\} (DAPS)}} & \multicolumn{3}{c}{\textbf{Fitzpatrick (RAPS)}} & \multicolumn{3}{c}{\textbf{ACSEducation (RAPS)}} \\
        \cmidrule(lr){2-4} \cmidrule(lr){5-7} \cmidrule(lr){8-10}
        $c$ & 0.1 & 0.15 & 0.2 & 0.1 & 0.15 & 0.2 & 0.1 & 0.15 & 0.2 \\
        \midrule
        Base (FedCP) & 0.894 & 0.894 & 0.894 & 0.895 & 0.895 & 0.895 & 0.932 & 0.932 & 0.932 \\
        FedCF        & {0.904} & {0.901} & 0.894 & {0.927} & {0.909} & {0.896} & {0.977} & {0.973} & {0.952} \\
        \bottomrule
    \end{tabular}
\end{table}


\raggedbottom
\subsection{Impact of Data Heterogeneity on ACSEducation: US vs Continental US}

\begin{figure}[ht!]
    \centering
    \begin{subfigure}{0.475\textwidth}
        \begin{subfigure}{\textwidth}
        \centering
            \includegraphics[width=\linewidth]{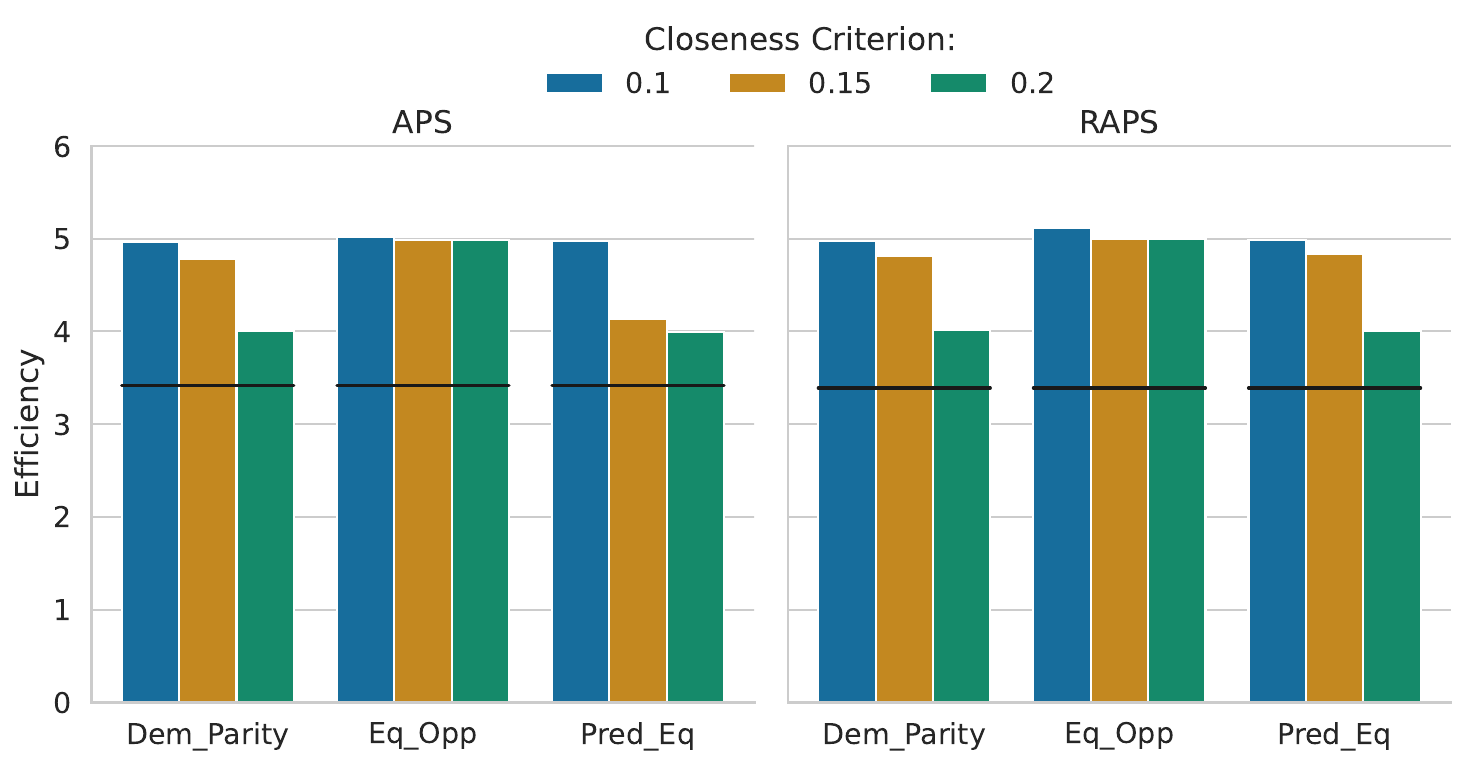}
        \end{subfigure}
        \begin{subfigure}{\textwidth}
            \centering
            \includegraphics[width=\linewidth]{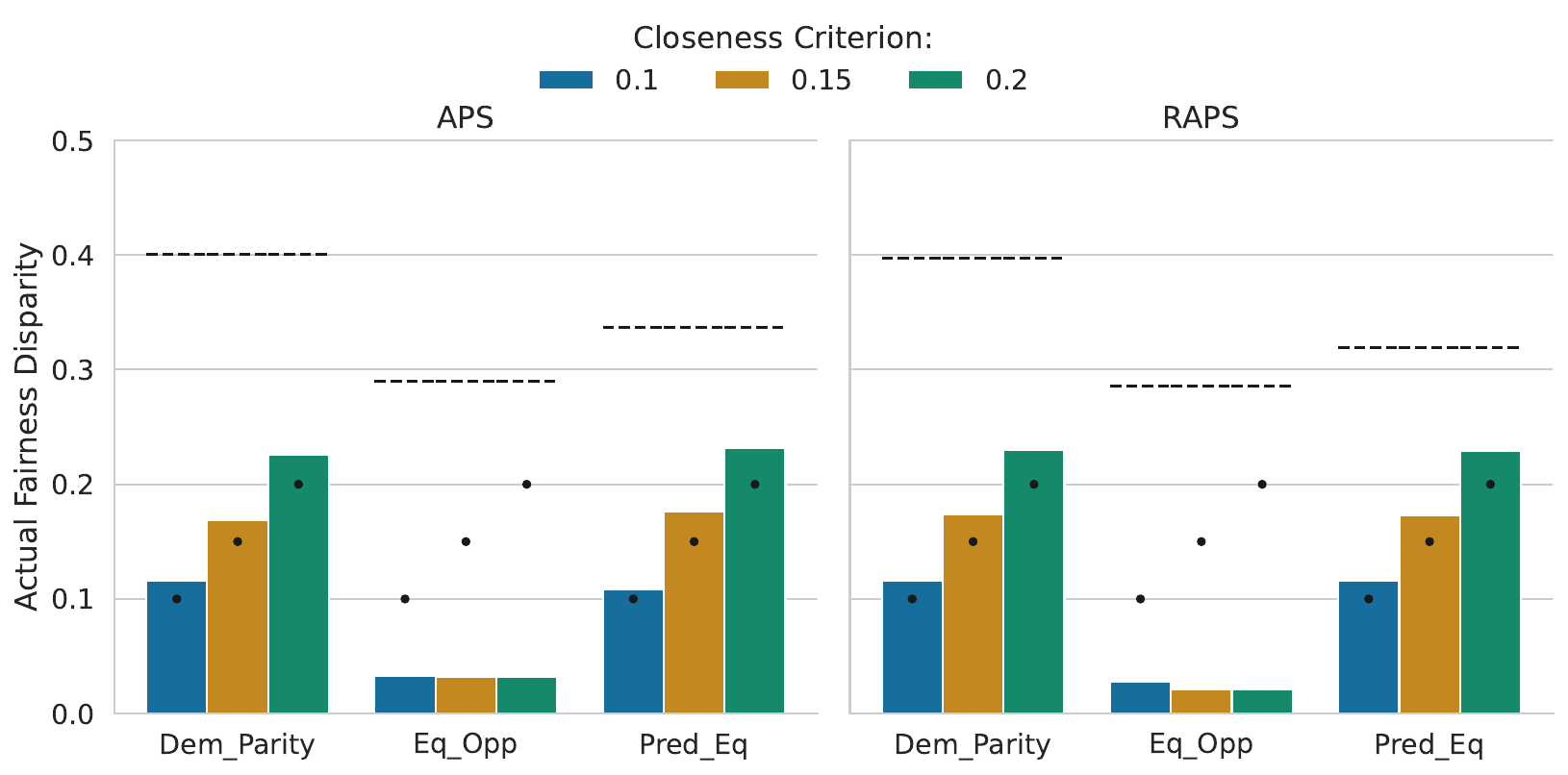}
        \end{subfigure}
        \caption{\textbf{Communication Efficient.}}
    \end{subfigure}%
    \hfill
    \begin{subfigure}{0.475\textwidth}
    \begin{subfigure}{\textwidth}
        \centering
        \includegraphics[width=\linewidth]{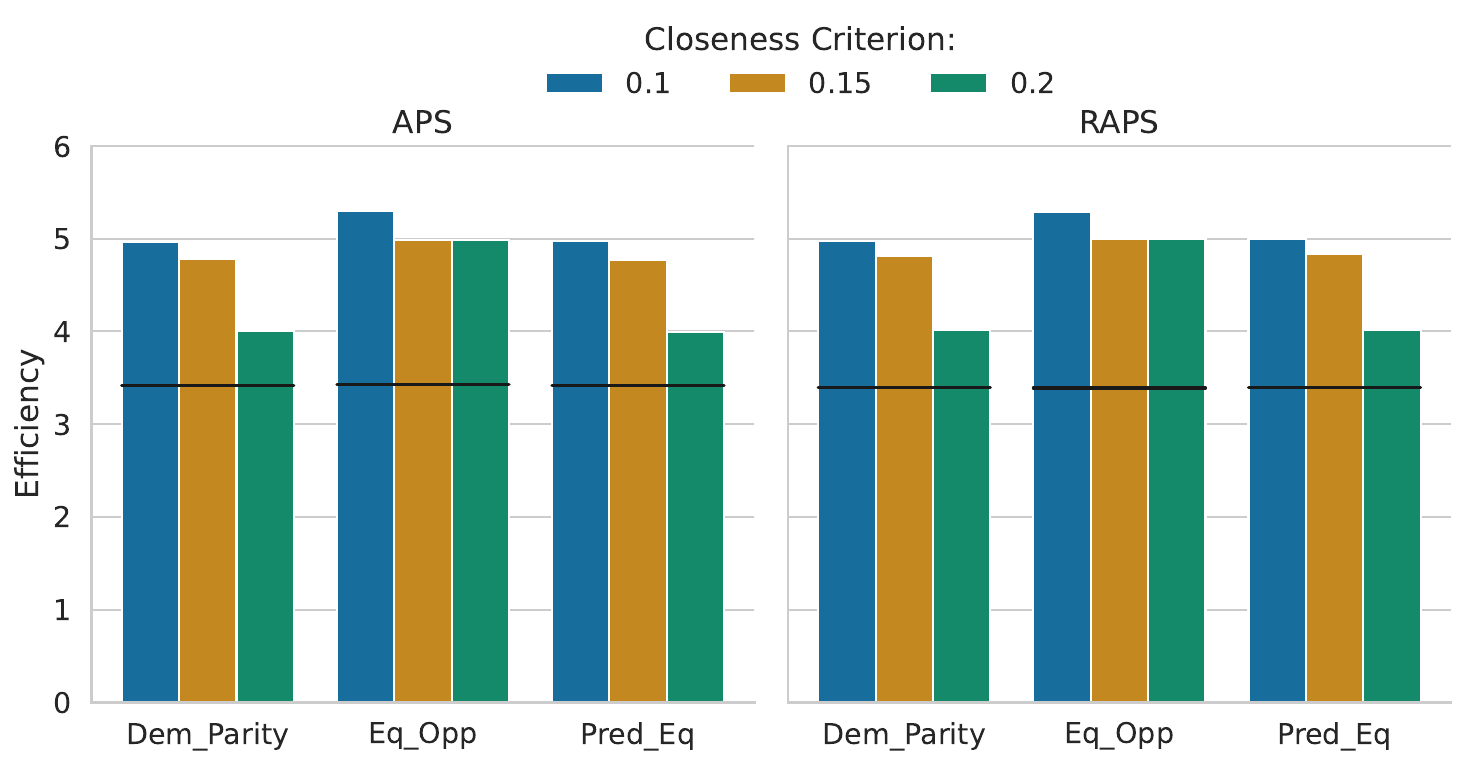}
    \end{subfigure}
    \begin{subfigure}{\textwidth}
        \centering
        \includegraphics[width=\linewidth]{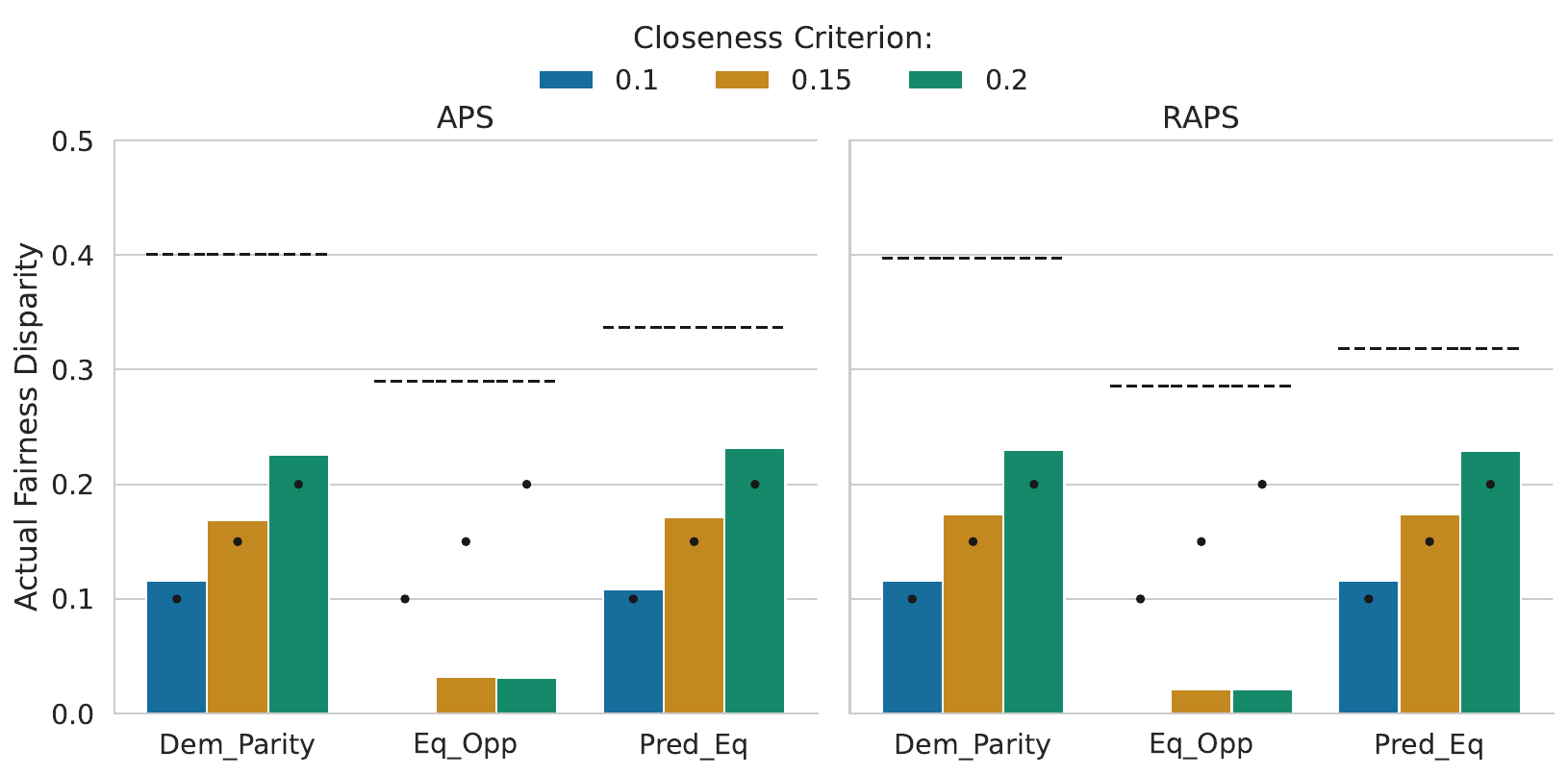}
    \end{subfigure}
    \caption{\textbf{Enhanced Privacy.}}
    \end{subfigure}
    \caption{\label{fig:acs_education_small_appendix}\textbf{ACSEducation, Small, Interval Bounds.} The plots in the top row indicate the efficiency with the corresponding fairness disparity plots in the bottom row. We observe that when all US states are included (and Puerto Rico), the closeness criterion is satisfied. However, the efficiency for Equal Opportunity is high for all closeness criterion values, especially compared to the continental US version of ACSEducation in Figure \ref{fig:acs_education_continental_small}. This result stems from a conservative coverage gap estimate during calibration due to limited covariate representation for some groups.}
\end{figure}

\begin{figure}[ht!]
    \centering
    \begin{subfigure}{0.475\textwidth}
        \begin{subfigure}{\textwidth}
        \centering
            \includegraphics[width=\linewidth]{figures/ACSEducation_small_False_Communication_Efficient_efficiency.pdf}
        \end{subfigure}
        \begin{subfigure}{\textwidth}
            \centering
            \includegraphics[width=\linewidth]{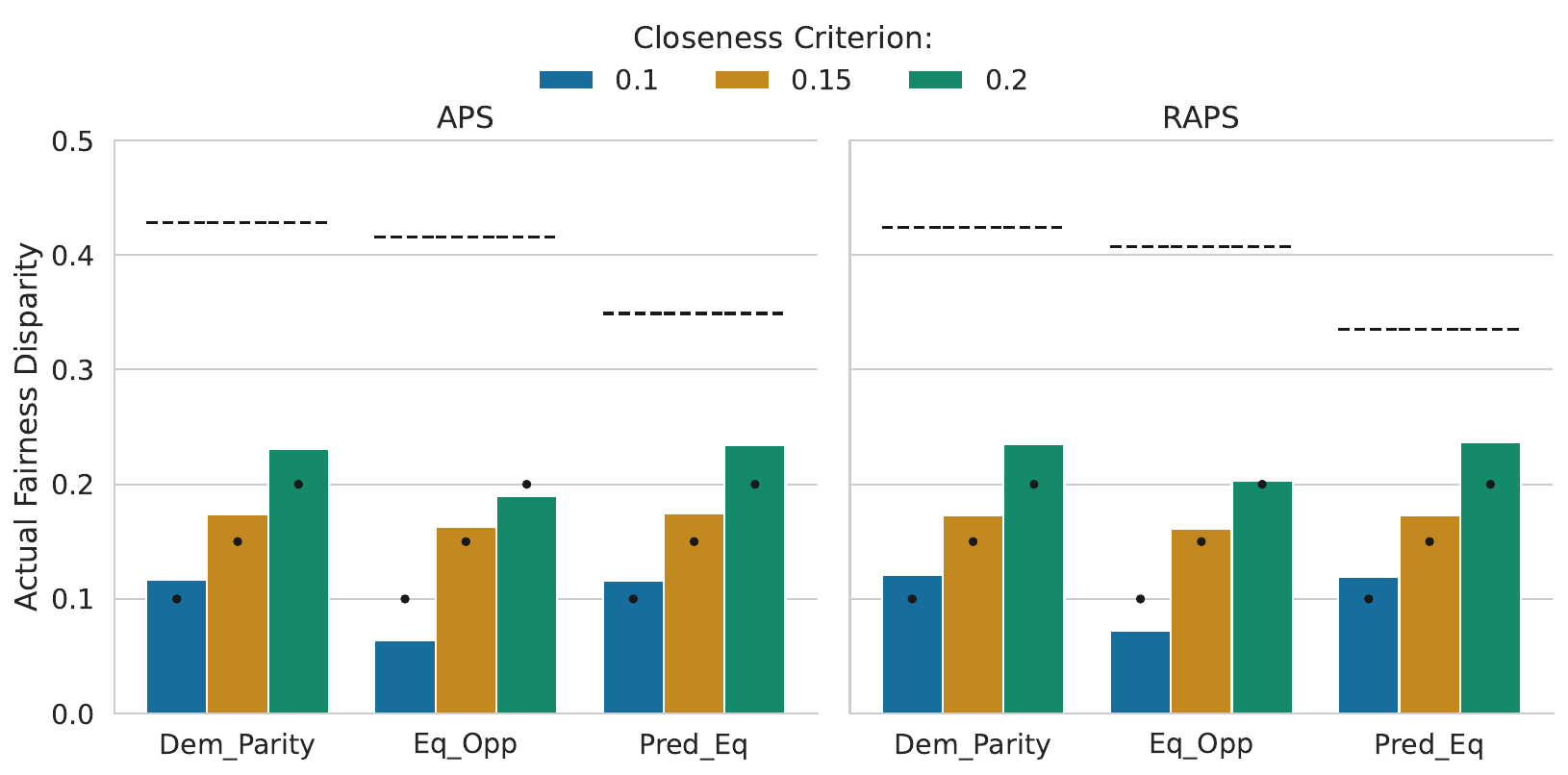}
        \end{subfigure}
        \caption{\textbf{Communication Efficient.}}
    \end{subfigure}%
    \hfill
    \begin{subfigure}{0.475\textwidth}
    \begin{subfigure}{\textwidth}
        \centering
        \includegraphics[width=\linewidth]{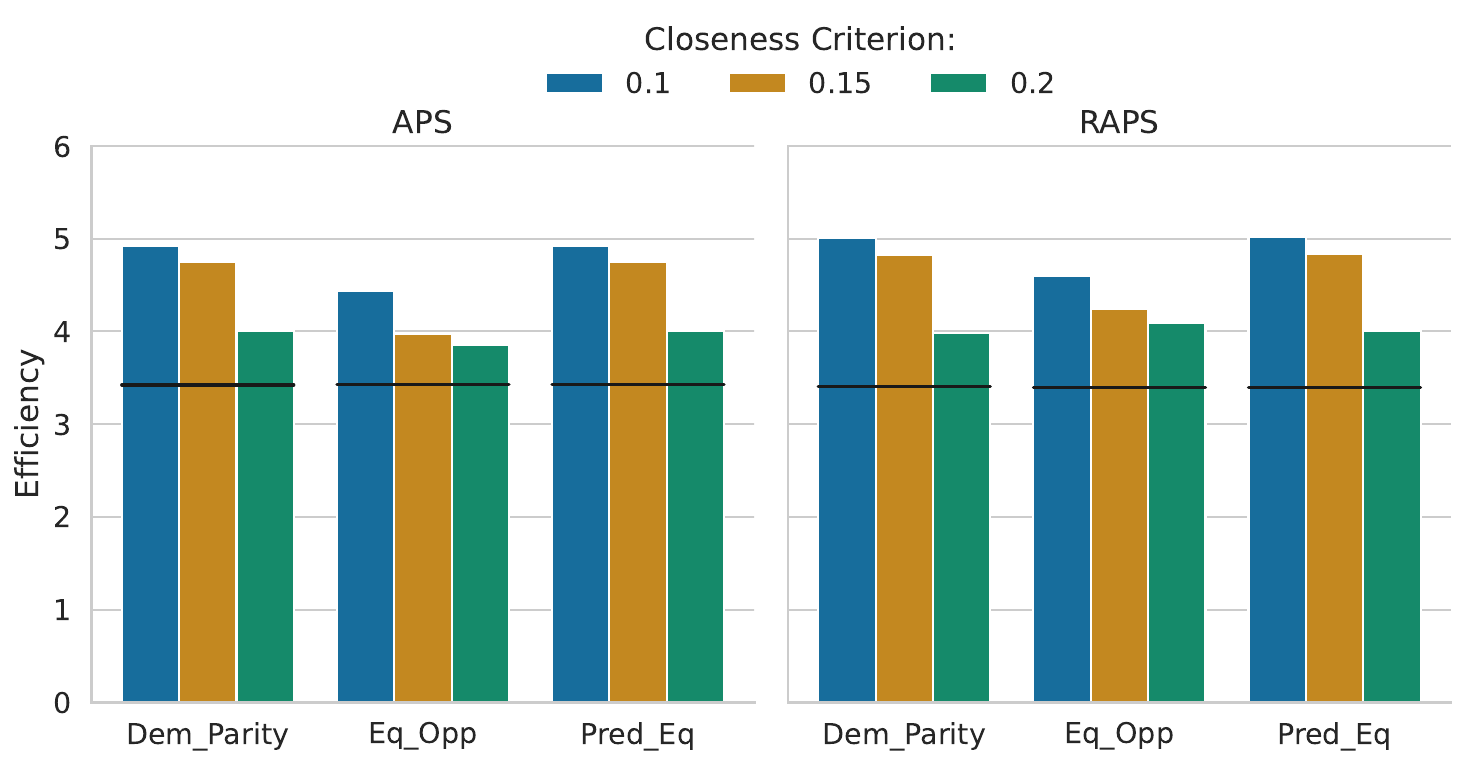}
    \end{subfigure}
    \begin{subfigure}{\textwidth}
        \centering
        \includegraphics[width=\linewidth]{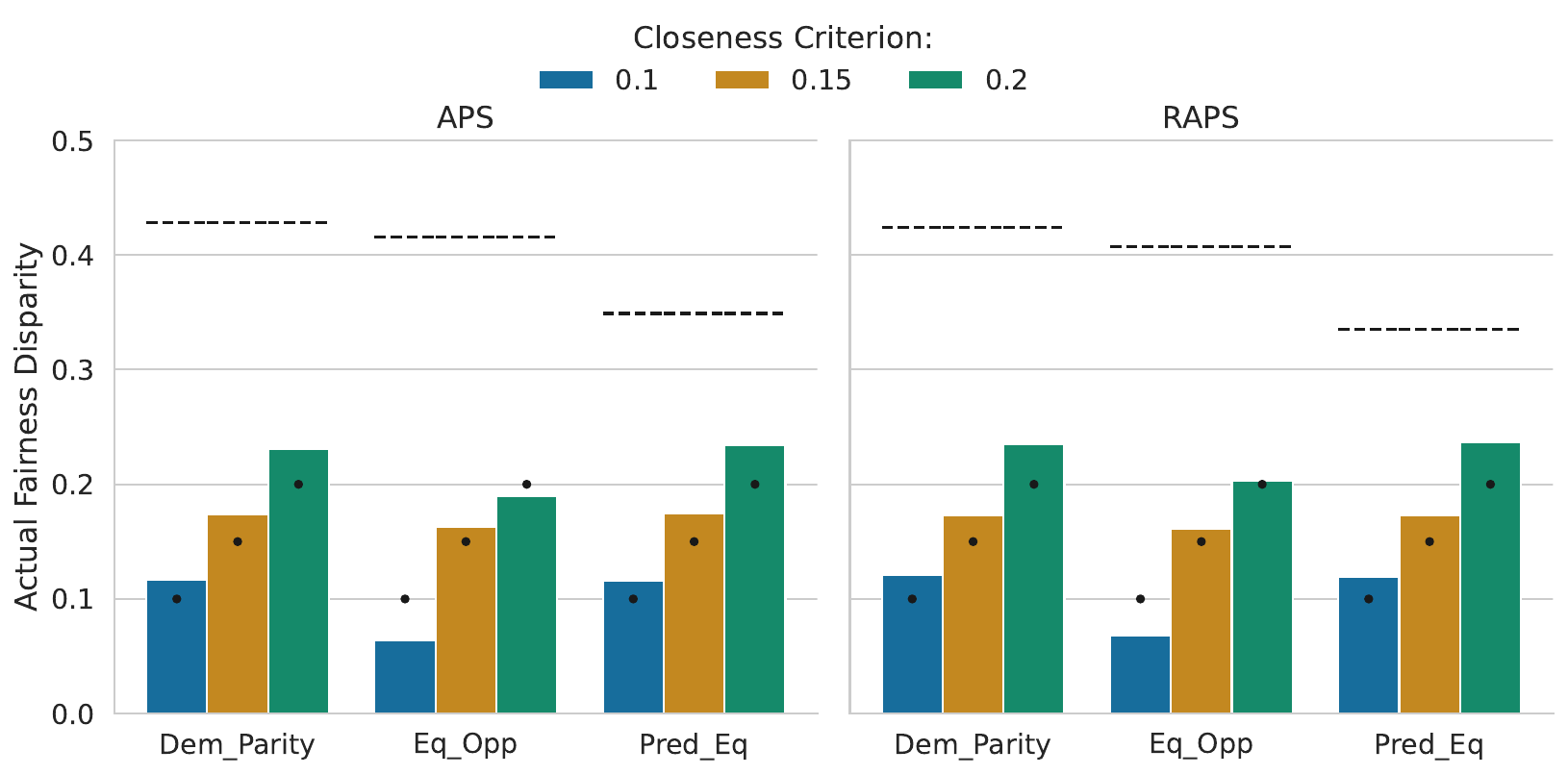}
    \end{subfigure}
    \caption{\textbf{Enhanced Privacy.}}
    
    \end{subfigure}
    \caption{\label{fig:acs_education_continental_small}\textbf{ACSEducation, Continental Small, Interval Bounds.} The top row demonstrates the efficiency of FedCF when using the continental version of ACSEducation, and its fairness disparity on the bottom row. Compared to Figure \ref{fig:acs_education_small_appendix}, the efficiencies improved (particularly for Equal Opportunity using RAPS), due to increased covariate representation for all sensitive groups.}
\end{figure}

\begin{figure}[ht!]
    \centering
    \begin{subfigure}{0.475\textwidth}
        \begin{subfigure}{\textwidth}
        \centering
            \includegraphics[width=\linewidth]{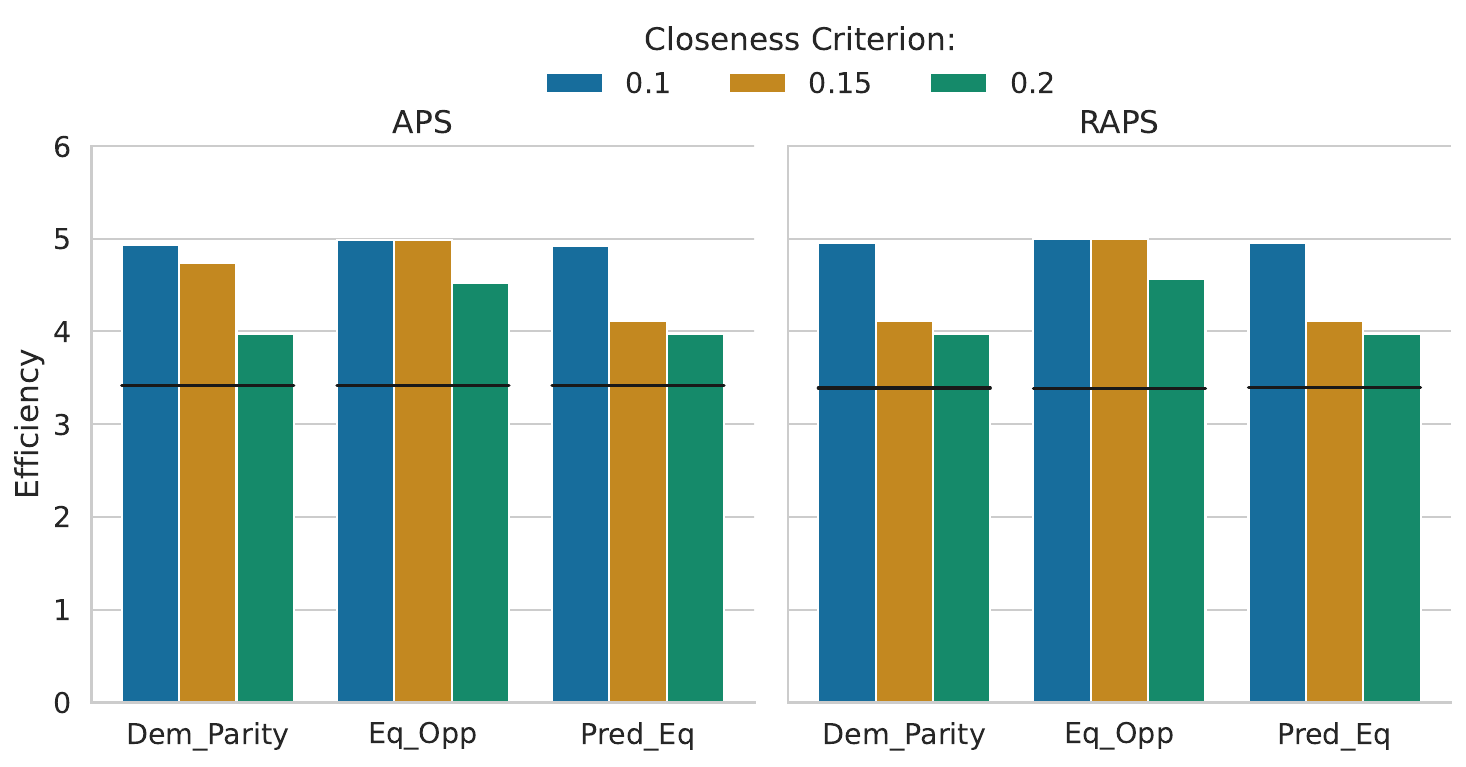}
        \end{subfigure}
        \begin{subfigure}{\textwidth}
            \centering
            \includegraphics[width=\linewidth]{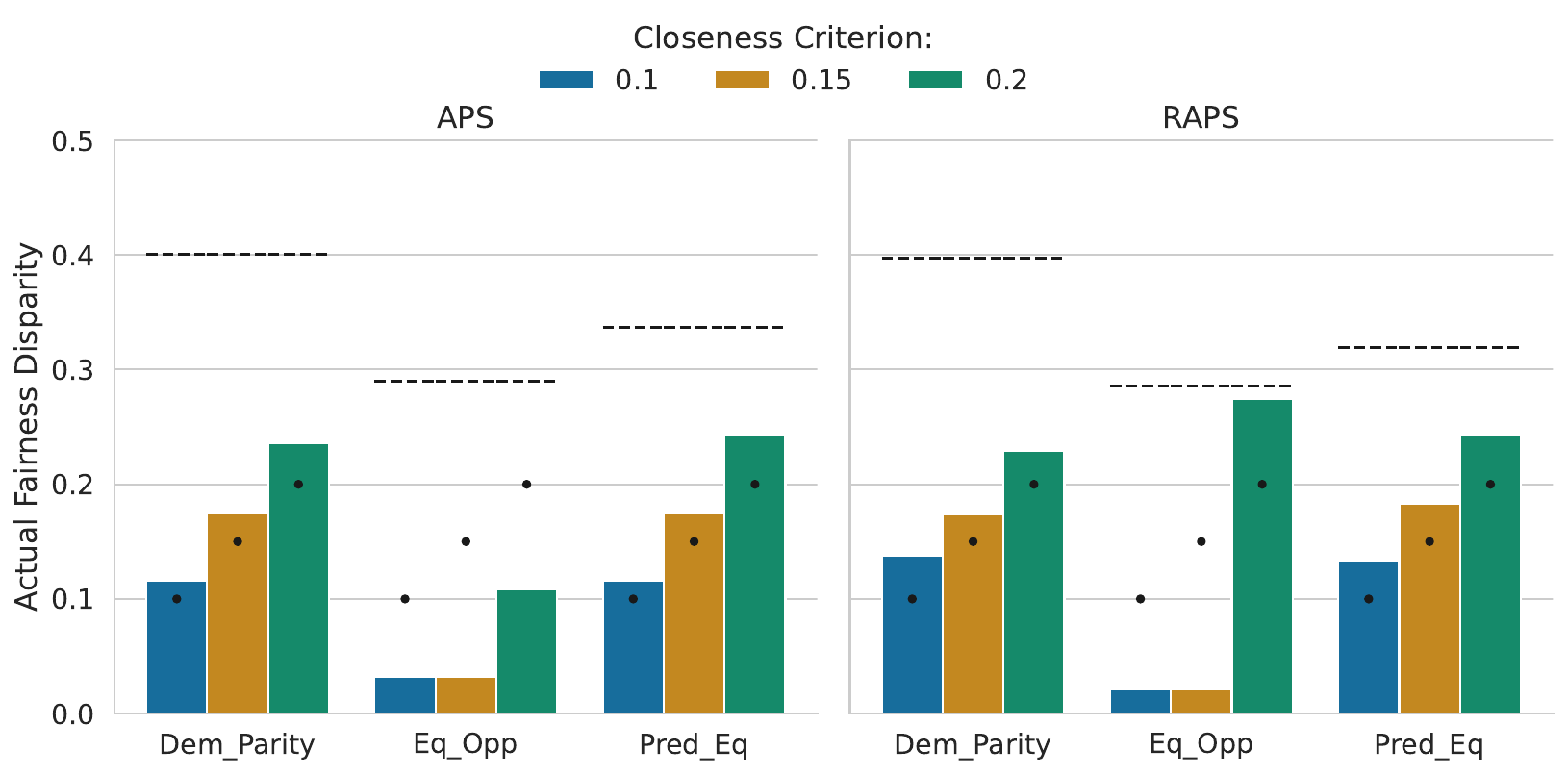}
        \end{subfigure}
        \caption{\textbf{Communication Efficient.}}
    \end{subfigure}%
    \hfill
    \begin{subfigure}{0.475\textwidth}
    \begin{subfigure}{\textwidth}
        \centering
        \includegraphics[width=\linewidth]{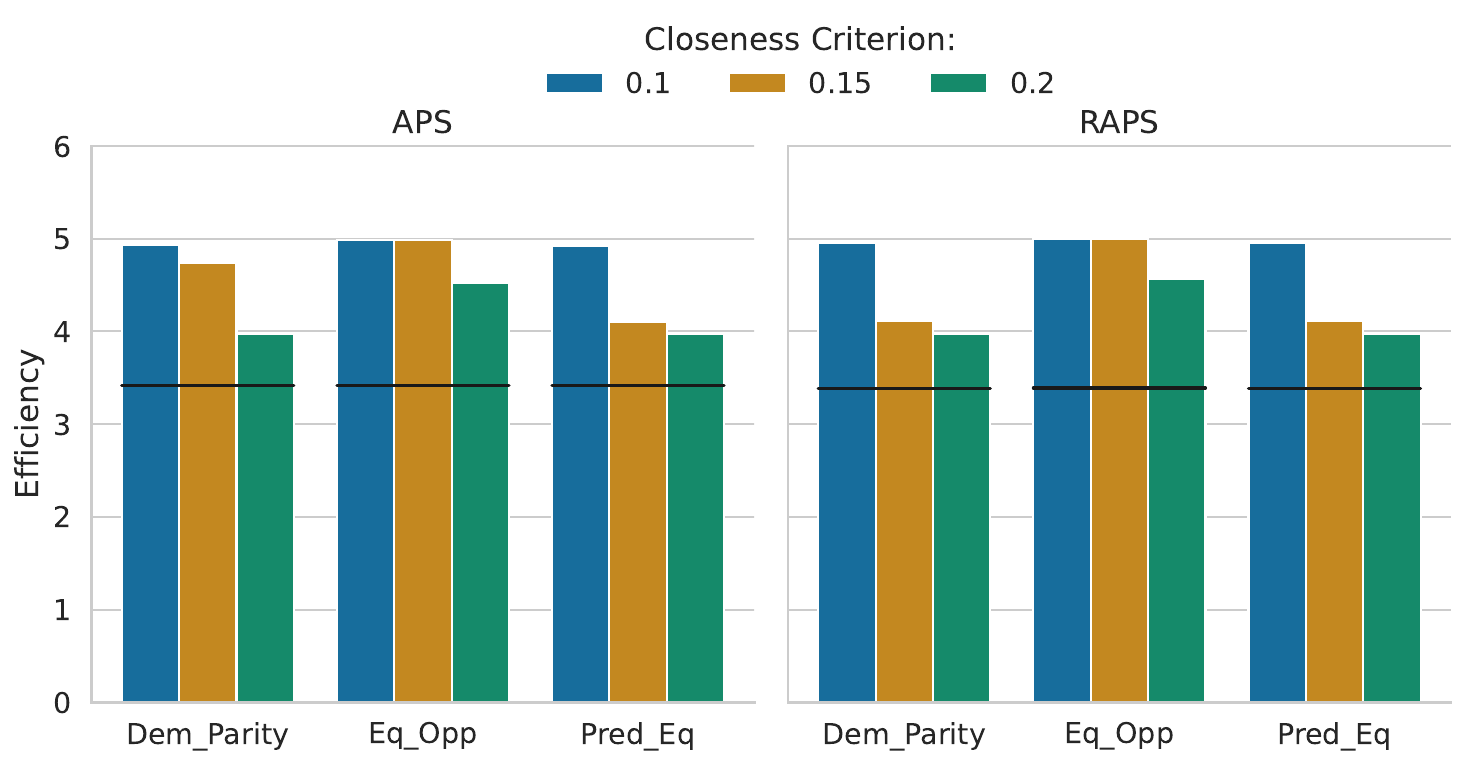}
    \end{subfigure}
    \begin{subfigure}{\textwidth}
        \centering
        \includegraphics[width=\linewidth]{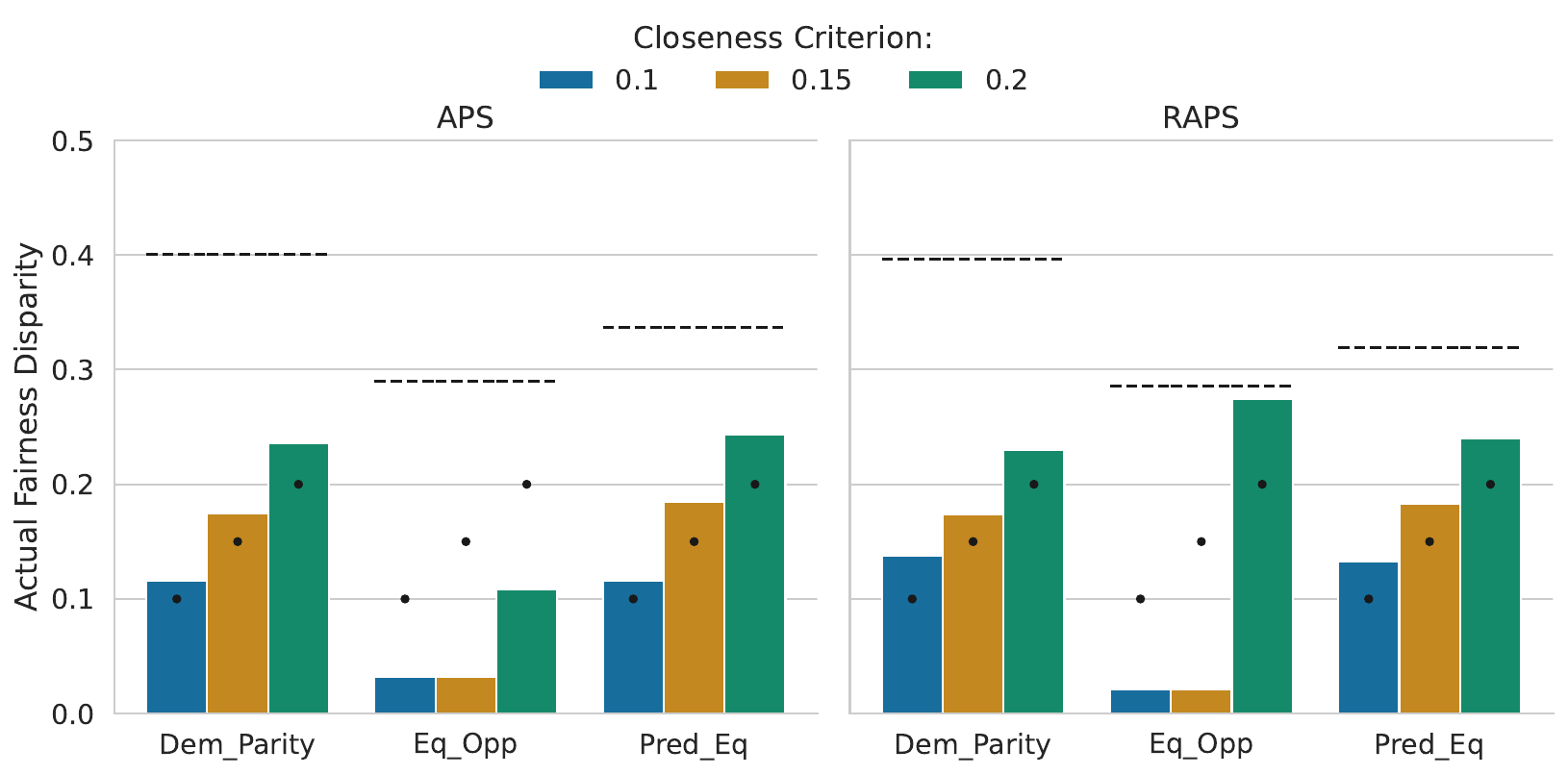}
    \end{subfigure}
    \caption{\textbf{Enhanced Privacy.}}
    \end{subfigure}
    \caption{\textbf{ACSEducation, Small, Point Estimates} The plots in the top row indicate the efficiency with the corresponding fairness disparity plots in the bottom row. We observe that using point estimates will result in a similar or lower efficiency than using the interval bounds approach in Figure \ref{fig:acs_education_small_appendix}, at the cost of a similar or higher fairness violation. Because the MLE does not provide a finite sample guarantee, the violation can exceed the desired closeness criterion, but will be lower than the baseline federated conformal predictor.}
\end{figure}

\begin{figure}[ht!]
    \centering
    \begin{subfigure}{0.475\textwidth}
        \begin{subfigure}{\textwidth}
        \centering
            \includegraphics[width=\linewidth]{figures/ACSEducation_small_True_Communication_Efficient_efficiency.pdf}
        \end{subfigure}
        \begin{subfigure}{\textwidth}
            \centering
            \includegraphics[width=\linewidth]{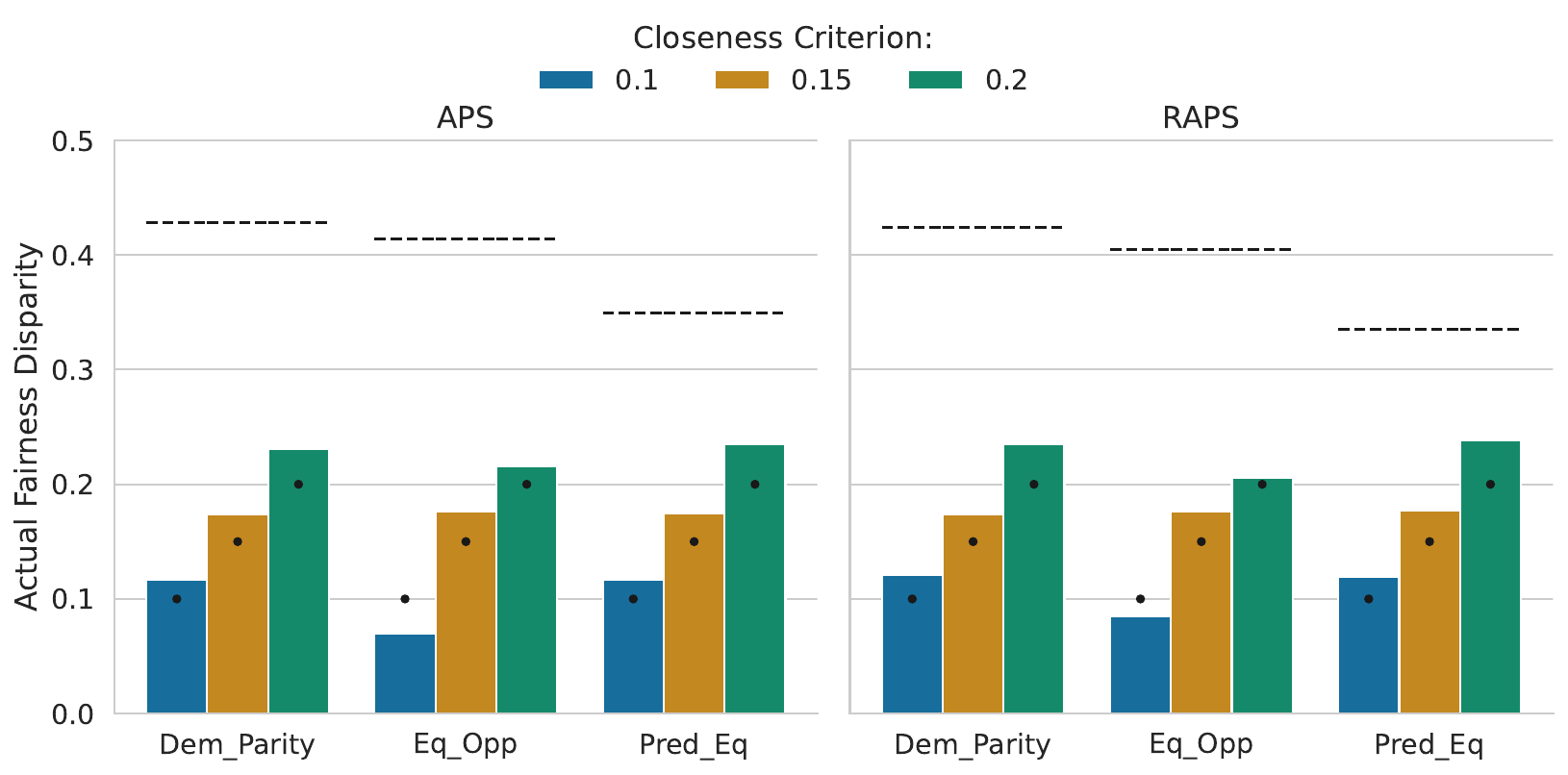}
        \end{subfigure}
        \caption{\textbf{Communication Efficient.}}
    \end{subfigure}%
    \hfill
    \begin{subfigure}{0.475\textwidth}
    \begin{subfigure}{\textwidth}
        \centering
        \includegraphics[width=\linewidth]{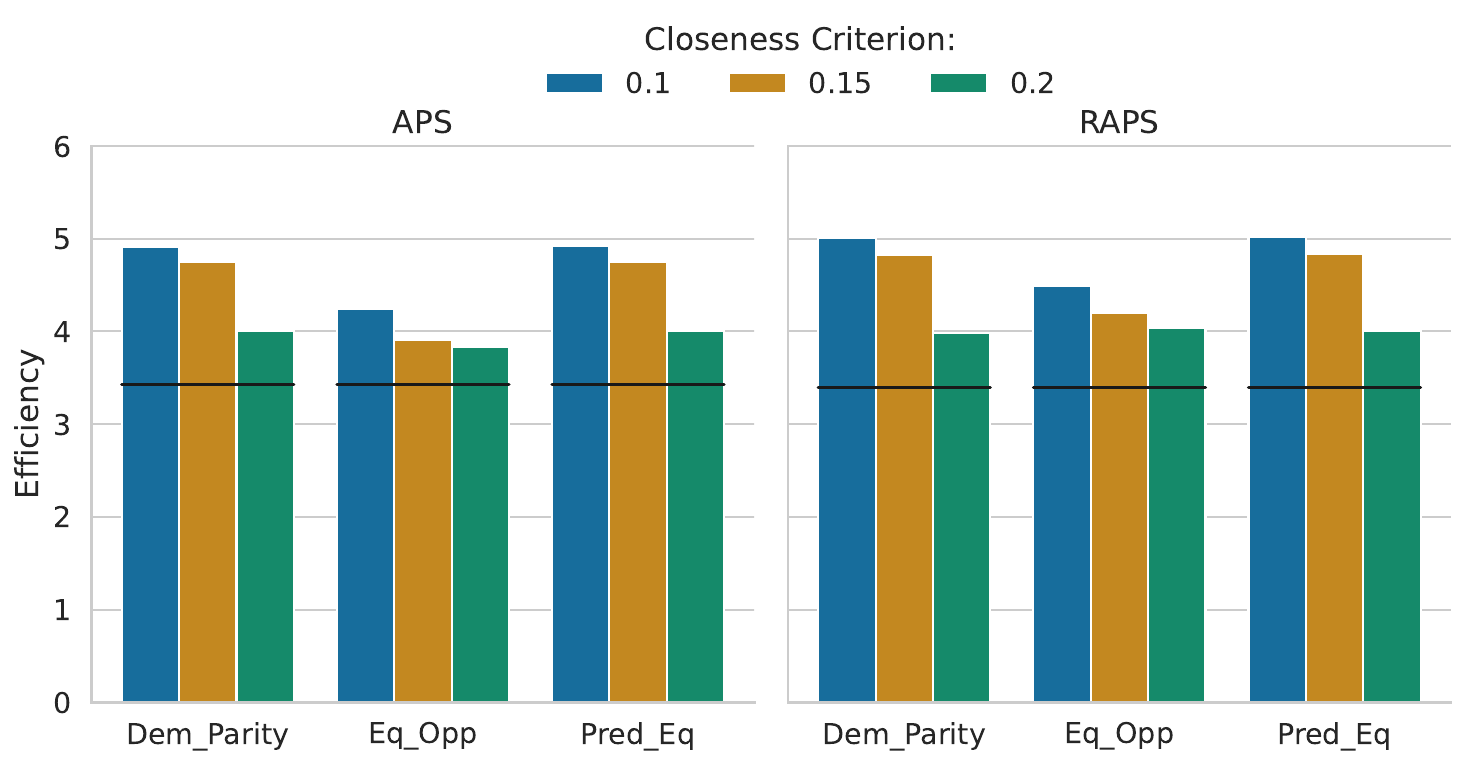}
    \end{subfigure}
    \begin{subfigure}{\textwidth}
        \centering
        \includegraphics[width=\linewidth]{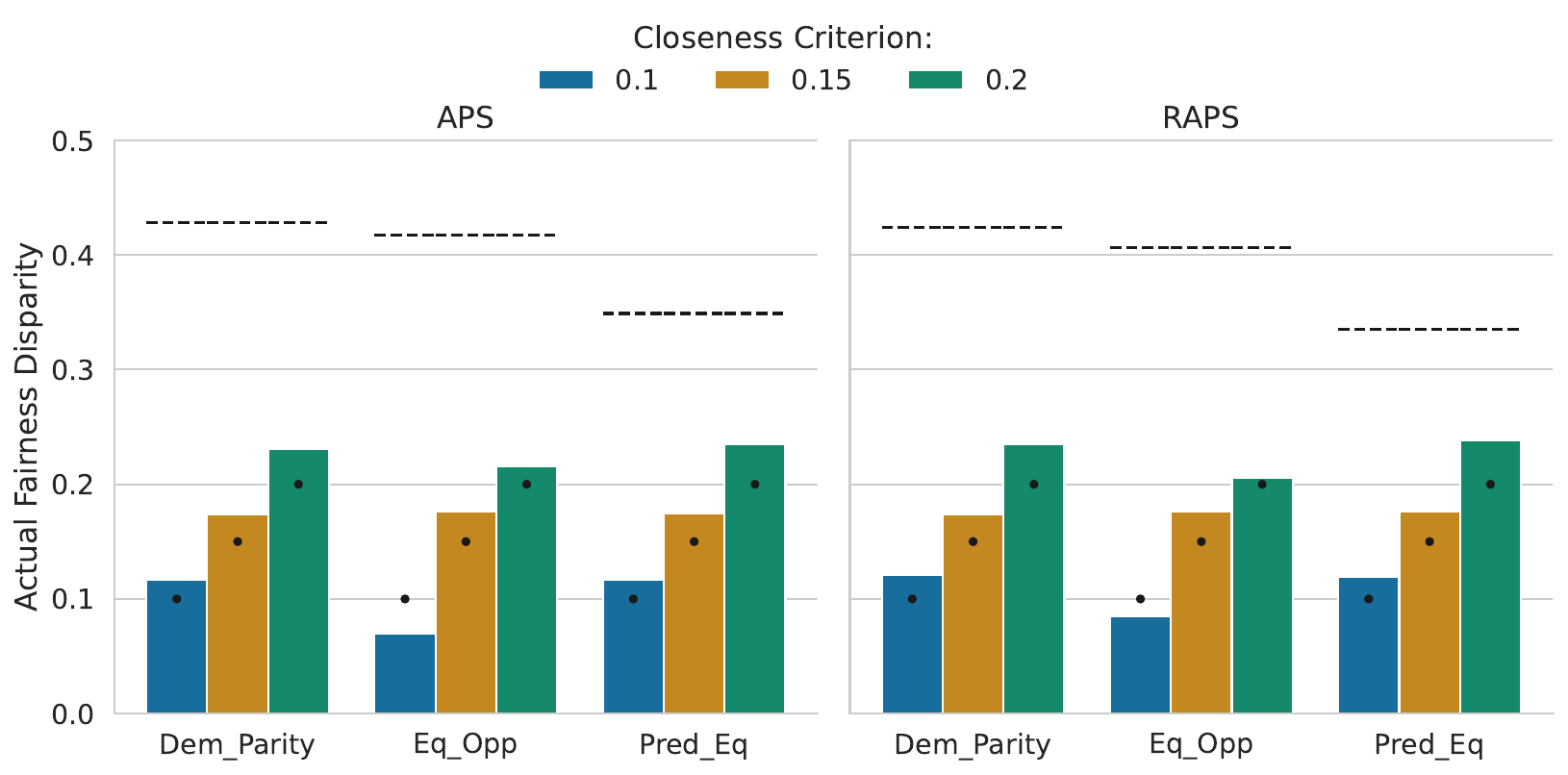}
    \end{subfigure}
    \caption{\textbf{Enhanced Privacy.}}
    \end{subfigure}
    \caption{\textbf{ACSEducation, Continental Small, Point Estimates.} The plots in the top row indicate the efficiency with the corresponding fairness disparity plots in the bottom row. We observe that using point estimates will result in a similar or lower efficiency than using the interval bounds approach in Figure \ref{fig:acs_education_continental_small}, at the cost of a similar or higher fairness violation. Because the MLE does not provide a finite sample guarantee, the violation can exceed the desired closeness criterion, but will be lower than the baseline federated conformal predictor.}
\end{figure}

\clearpage
\subsection{Impact of different sensitive attributes for Pokec-\{n,z\}}

\begin{figure}[ht!]
    \centering
    \begin{subfigure}{0.8\textwidth}
        \begin{subfigure}{\textwidth}
        \centering
            \includegraphics[width=\linewidth]{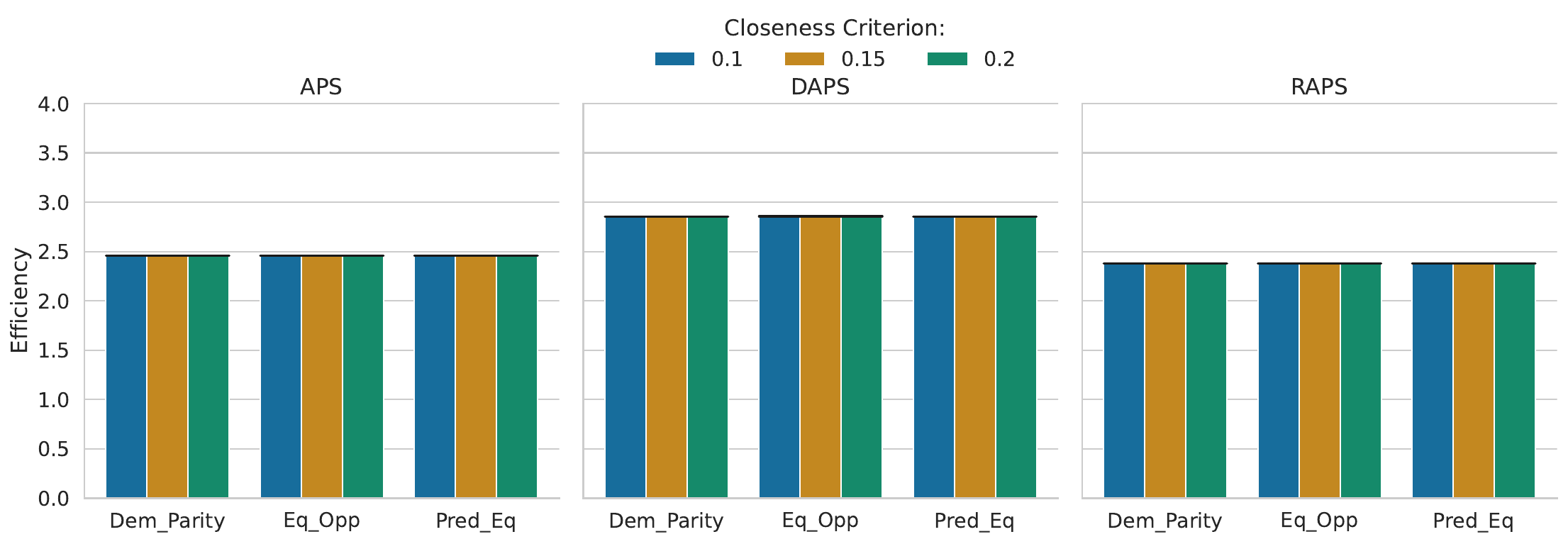}
        \end{subfigure}
        \begin{subfigure}{\textwidth}
            \centering
            \includegraphics[width=\linewidth]{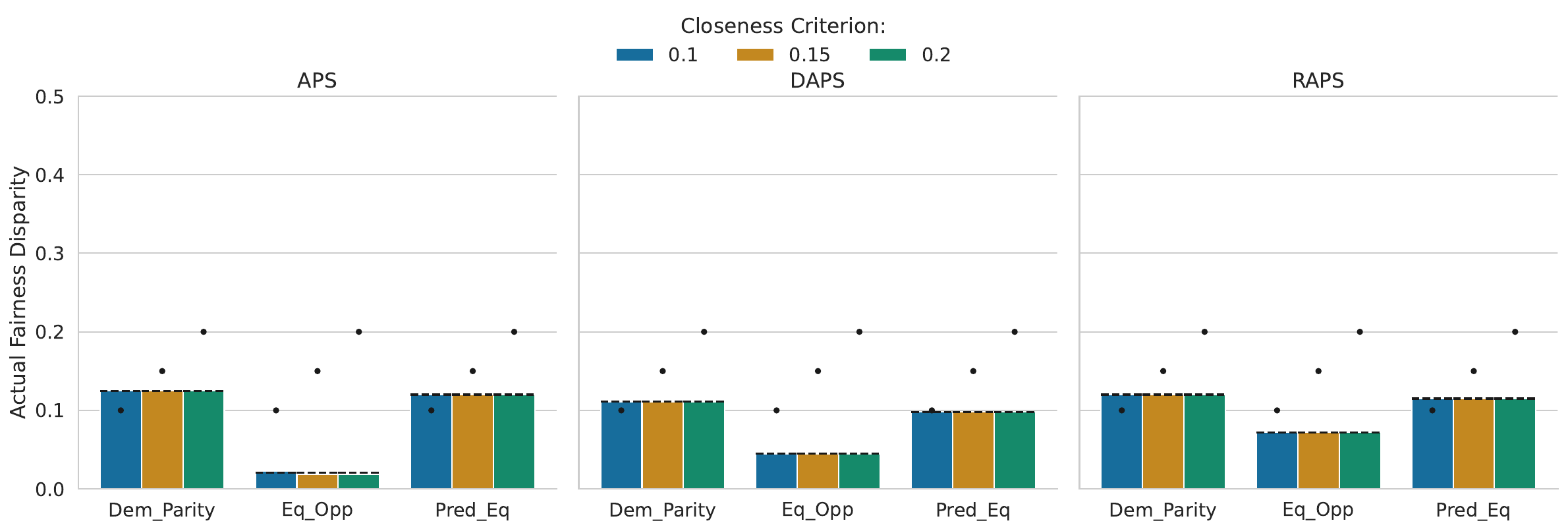}
        \end{subfigure}
        \caption{\textbf{Communication Efficient.}}
    \end{subfigure}
    \begin{subfigure}{0.8\textwidth}
        \begin{subfigure}{\textwidth}
            \centering
            \includegraphics[width=\linewidth]{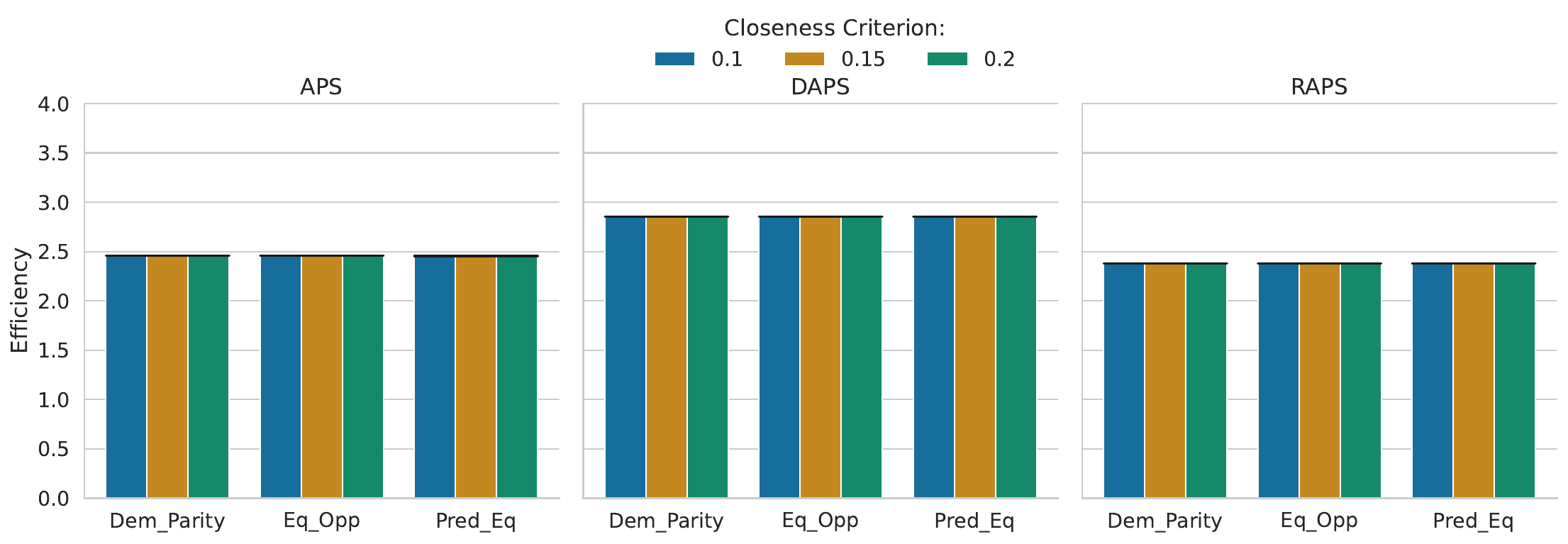}
        \end{subfigure}
        \begin{subfigure}{\textwidth}
            \centering
            \includegraphics[width=\linewidth]{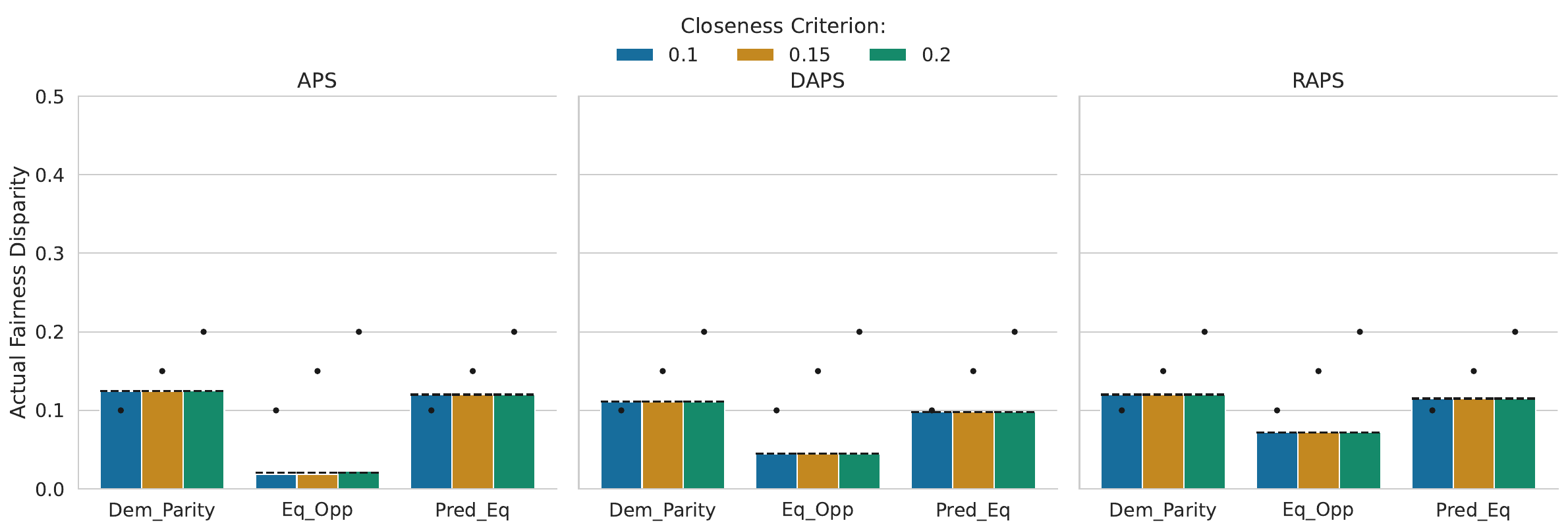}
        \end{subfigure}
        \caption{\textbf{Enhanced Privacy.}}
    \end{subfigure}
    \caption{\textbf{Pokec-\{n,z\}, \textit{gender}.} For each plot (a) and (b), the top plots are for the efficiency, and the bottom plots are for the fairness disparity. The baseline disparity is within the closeness criterion, so we see no changes in efficiency when using FedCF. This is the case when using either the \textit{communication efficient} and \textit{enhanced privacy} protocols.}
\end{figure}

\begin{figure}[ht!]
    \centering
    \begin{subfigure}{0.8\textwidth}
        \begin{subfigure}{\textwidth}
        \centering
            \includegraphics[width=\linewidth]{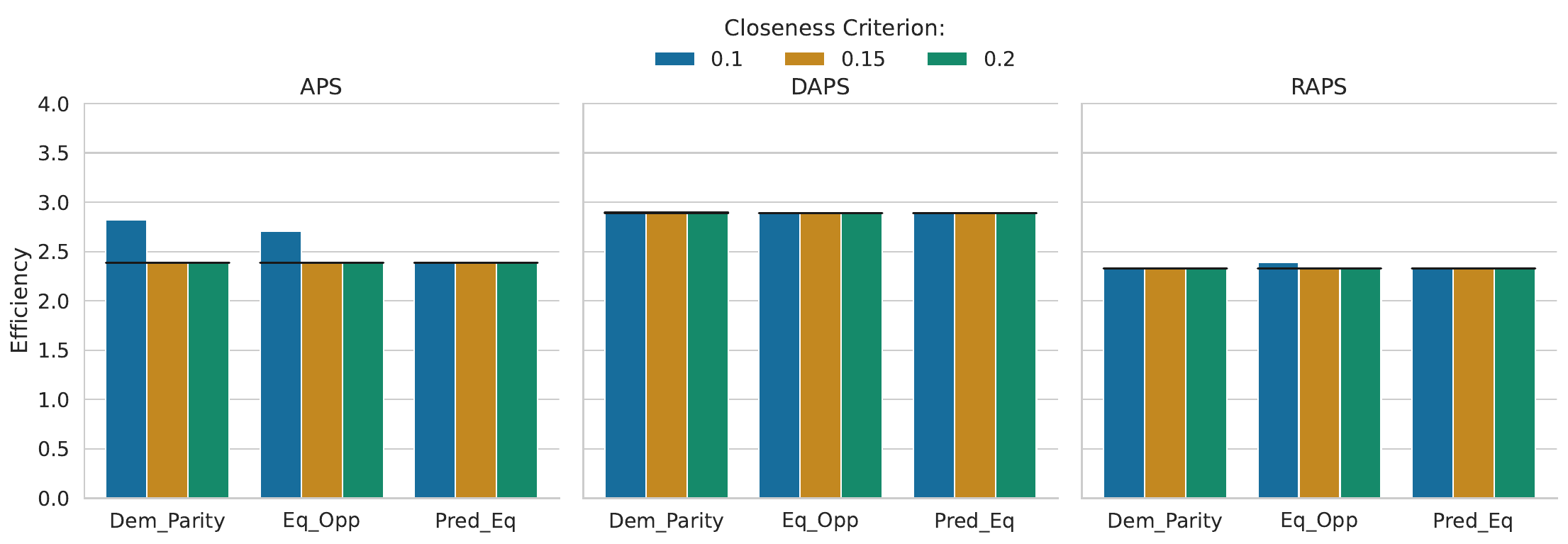}
        \end{subfigure}
        \begin{subfigure}{\textwidth}
            \centering
            \includegraphics[width=\linewidth]{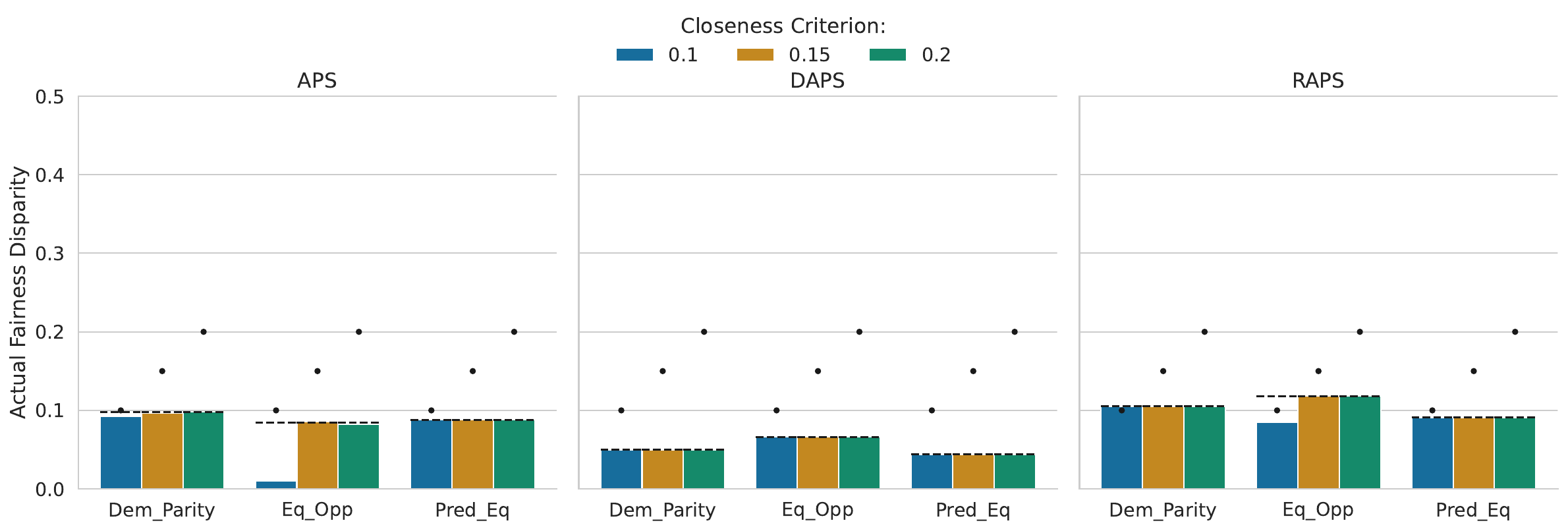}
        \end{subfigure}
        \caption{\textbf{Communication Efficient.}}
    \end{subfigure}
    \begin{subfigure}{0.8\textwidth}
        \begin{subfigure}{\textwidth}
            \centering
            \includegraphics[width=\linewidth]{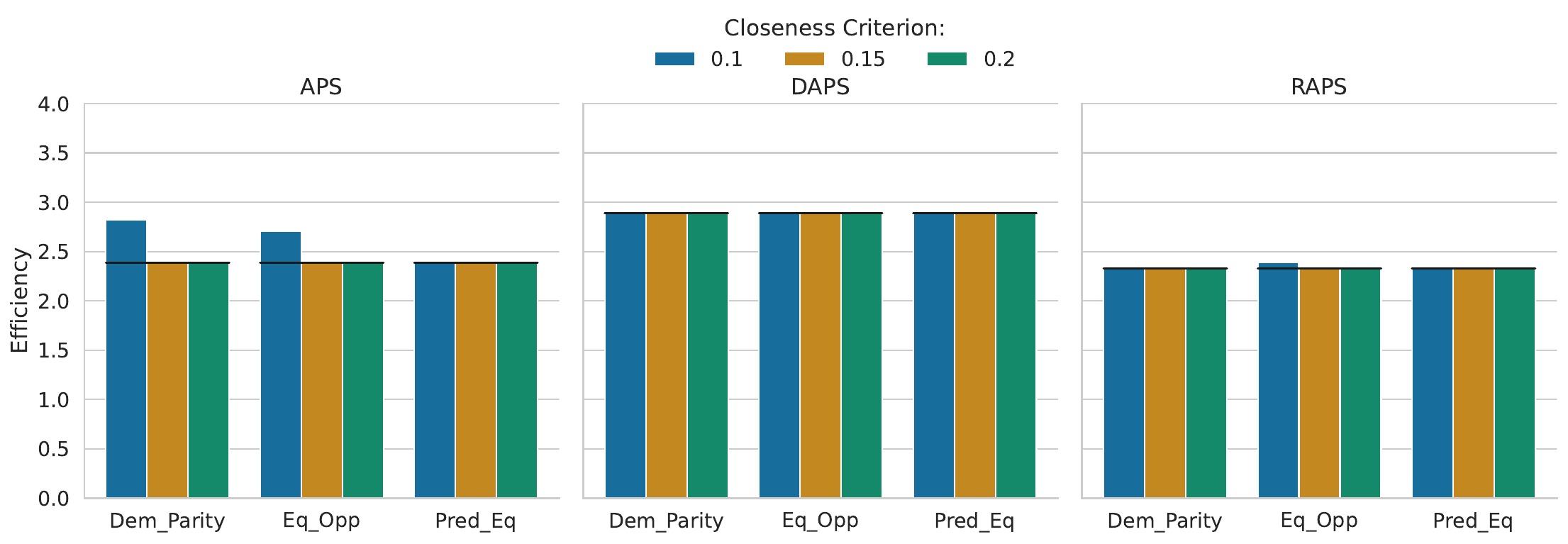}
        \end{subfigure}
        \begin{subfigure}{\textwidth}
            \centering
            \includegraphics[width=\linewidth]{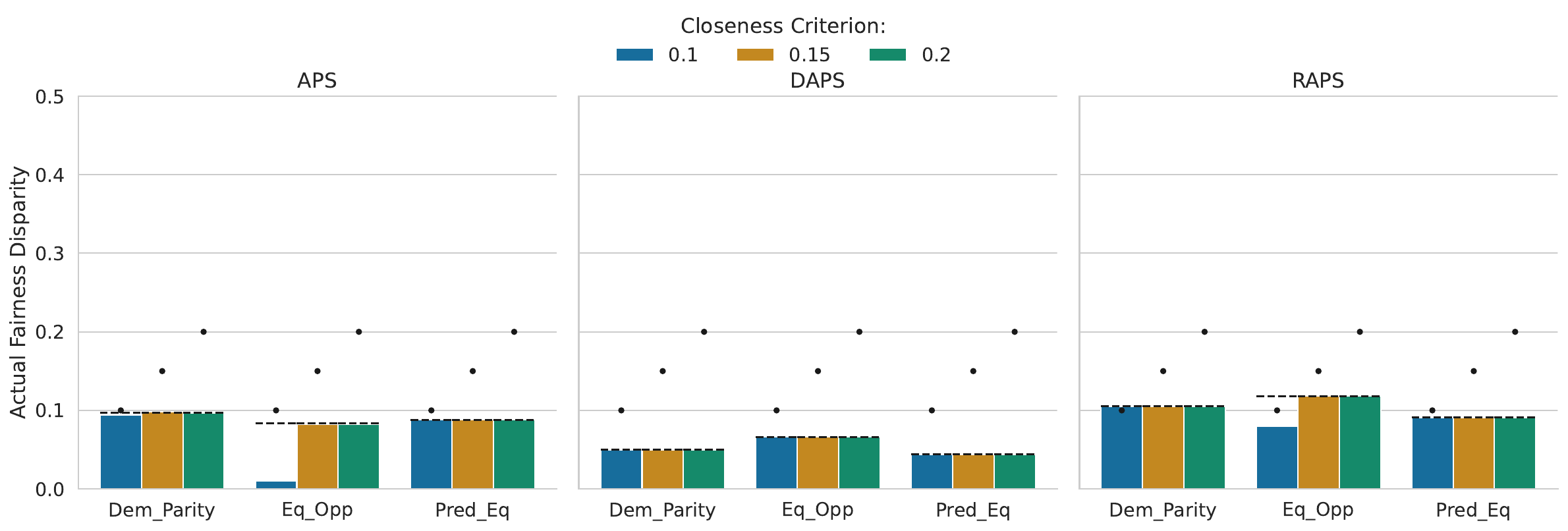}
        \end{subfigure}
        \caption{\textbf{Enhanced Privacy.}}
    \end{subfigure}
    \caption{\textbf{Pokec-\{n,z\}, \textit{region}.} For each plot (a) and (b), the top plots are for the efficiency, and the bottom plots are for the fairness disparity. Note that while the baseline disparity is within the closeness criterion for the test set, the finite-sample guarantee from using the interval bounds ensures FedCF looks for a better threshold, resulting in a smaller violation with a small cost to efficiency. This is the case when using either the \textit{communication efficient} and \textit{enhanced privacy} protocols.}
\end{figure}

\begin{figure}[ht!]
    \centering
    \begin{subfigure}{0.8\textwidth}
        \begin{subfigure}{\textwidth}
        \centering
            \includegraphics[width=\linewidth]{figures/Pokec_region_gender_2_clients_False_Communication_Efficient_efficiency.pdf}
        \end{subfigure}
        \begin{subfigure}{\textwidth}
            \centering
            \includegraphics[width=\linewidth]{figures/Pokec_region_gender_2_clients_False_Communication_Efficient_violation.pdf}
        \end{subfigure}
        \caption{\textbf{Communication Efficient.}}
    \end{subfigure}
    \begin{subfigure}{0.8\textwidth}
        \begin{subfigure}{\textwidth}
            \centering
            \includegraphics[width=\linewidth]{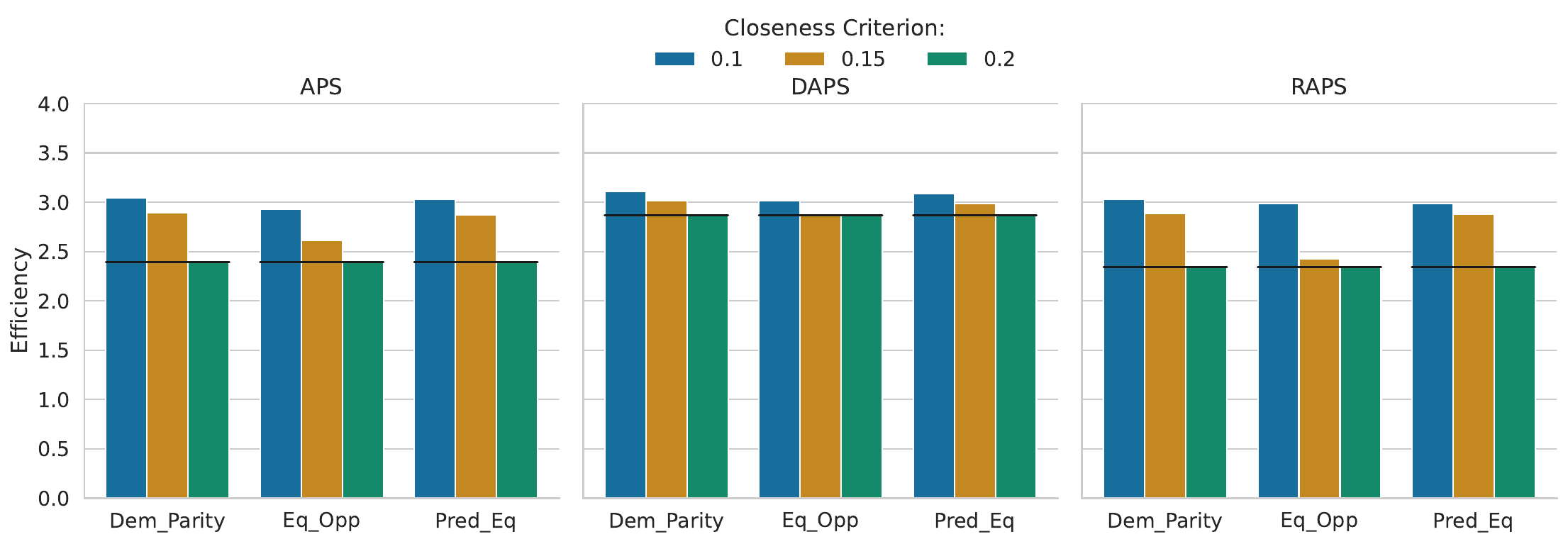}
        \end{subfigure}
        \begin{subfigure}{\textwidth}
            \centering
            \includegraphics[width=\linewidth]{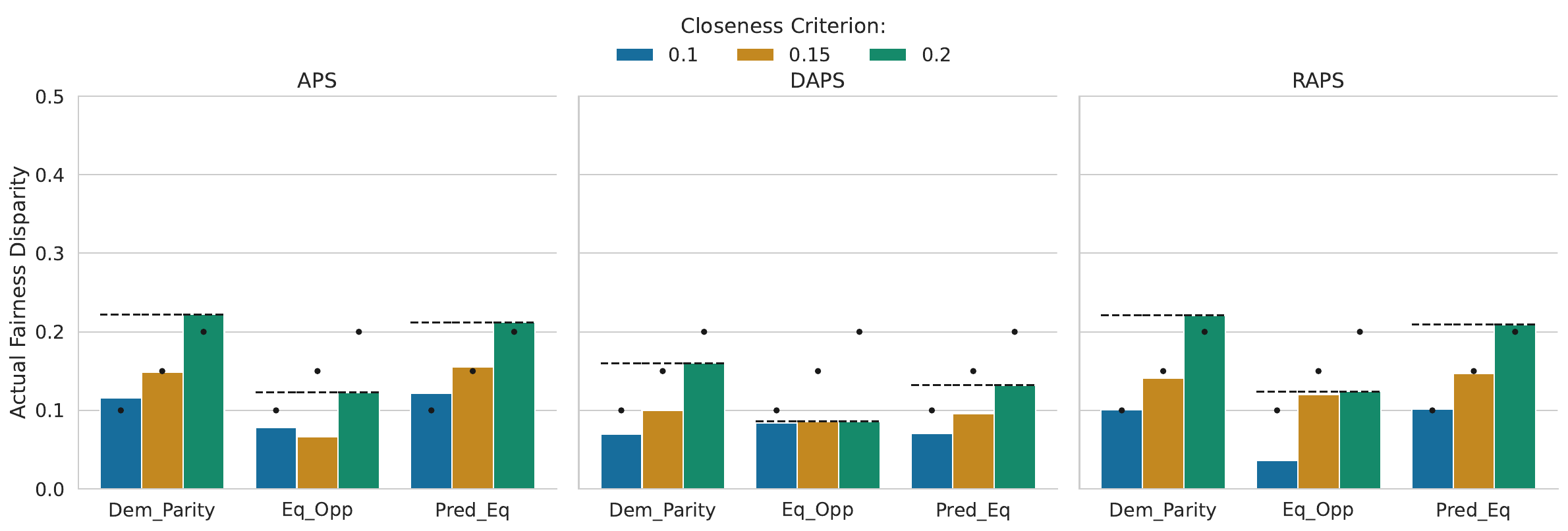}
        \end{subfigure}
        \caption{\textbf{Enhanced Privacy.}}
    \end{subfigure}
    \caption{\textbf{Pokec-\{n,z\}, \textit{region} and \textit{gender}.} For each plot (a) and (b), the top plots are for the efficiency, and the bottom plots are for the fairness disparity. In the case of intersectional fairness, since there are more groups, the violation will be worse than considering a single sensitive attribute. We observe that in all cases, FedCF produces a threshold that satisfies the closeness criterion, at a slight cost to efficiency. This is the case when using either the \textit{communication efficient} and \textit{enhanced privacy} protocols.}
\end{figure}
\clearpage
\section*{NeurIPS Paper Checklist}

\begin{enumerate}

\item {\bf Claims}
    \item[] Question: Do the main claims made in the abstract and introduction accurately reflect the paper's contributions and scope?
    \item[] Answer: \answerYes{} 
    \item[] Justification: {We provide the theoretical justification and implementation details in Section \ref{sec:methods} and empirical results in Section \ref{sec:experiments}. Further theory and proofs are available in Appendix \ref{app:fed_cf:descent_analysis}, \ref{app:fed_cf:proofs}, and \ref{app:additional_theory} with more results in Appendix \ref{app:fedcf:more_results}.}
    \item[] Guidelines:
    \begin{itemize}
        \item The answer \answerNA{} means that the abstract and introduction do not include the claims made in the paper.
        \item The abstract and/or introduction should clearly state the claims made, including the contributions made in the paper and important assumptions and limitations. A \answerNo{} or \answerNA{} answer to this question will not be perceived well by the reviewers. 
        \item The claims made should match theoretical and experimental results, and reflect how much the results can be expected to generalize to other settings. 
        \item It is fine to include aspirational goals as motivation as long as it is clear that these goals are not attained by the paper. 
    \end{itemize}

\item {\bf Limitations}
    \item[] Question: Does the paper discuss the limitations of the work performed by the authors?
    \item[] Answer: \answerYes{} 
    \item[] Justification: We have a Limitations paragraph in the conclusion.
    \item[] Guidelines:
    \begin{itemize}
        \item The answer \answerNA{} means that the paper has no limitation while the answer \answerNo{} means that the paper has limitations, but those are not discussed in the paper. 
        \item The authors are encouraged to create a separate ``Limitations'' section in their paper.
        \item The paper should point out any strong assumptions and how robust the results are to violations of these assumptions (e.g., independence assumptions, noiseless settings, model well-specification, asymptotic approximations only holding locally). The authors should reflect on how these assumptions might be violated in practice and what the implications would be.
        \item The authors should reflect on the scope of the claims made, e.g., if the approach was only tested on a few datasets or with a few runs. In general, empirical results often depend on implicit assumptions, which should be articulated.
        \item The authors should reflect on the factors that influence the performance of the approach. For example, a facial recognition algorithm may perform poorly when image resolution is low or images are taken in low lighting. Or a speech-to-text system might not be used reliably to provide closed captions for online lectures because it fails to handle technical jargon.
        \item The authors should discuss the computational efficiency of the proposed algorithms and how they scale with dataset size.
        \item If applicable, the authors should discuss possible limitations of their approach to address problems of privacy and fairness.
        \item While the authors might fear that complete honesty about limitations might be used by reviewers as grounds for rejection, a worse outcome might be that reviewers discover limitations that aren't acknowledged in the paper. The authors should use their best judgment and recognize that individual actions in favor of transparency play an important role in developing norms that preserve the integrity of the community. Reviewers will be specifically instructed to not penalize honesty concerning limitations.
    \end{itemize}

\item {\bf Theory assumptions and proofs}
    \item[] Question: For each theoretical result, does the paper provide the full set of assumptions and a complete (and correct) proof?
    \item[] Answer: \answerYes{} 
    \item[] Justification: {All theorems that are presented in Section \ref{sec:methods} have proofs in Appendix \ref{app:fed_cf:proofs} and \ref{app:additional_theory}, with assumptions explicilty detailed.}
    \item[] Guidelines:
    \begin{itemize}
        \item The answer \answerNA{} means that the paper does not include theoretical results. 
        \item All the theorems, formulas, and proofs in the paper should be numbered and cross-referenced.
        \item All assumptions should be clearly stated or referenced in the statement of any theorems.
        \item The proofs can either appear in the main paper or the supplemental material, but if they appear in the supplemental material, the authors are encouraged to provide a short proof sketch to provide intuition. 
        \item Inversely, any informal proof provided in the core of the paper should be complemented by formal proofs provided in appendix or supplemental material.
        \item Theorems and Lemmas that the proof relies upon should be properly referenced. 
    \end{itemize}

    \item {\bf Experimental result reproducibility}
    \item[] Question: Does the paper fully disclose all the information needed to reproduce the main experimental results of the paper to the extent that it affects the main claims and/or conclusions of the paper (regardless of whether the code and data are provided or not)?
    \item[] Answer: \answerYes{} 
    \item[] Justification: {All details needed to reproduce results are presented in Section \ref{sec:experiments} and Appendix \ref{app:fed_cf:experimental_details}. The code used for the experiments is available in the supplemental material.}
    \item[] Guidelines:
    \begin{itemize}
        \item The answer \answerNA{} means that the paper does not include experiments.
        \item If the paper includes experiments, a \answerNo{} answer to this question will not be perceived well by the reviewers: Making the paper reproducible is important, regardless of whether the code and data are provided or not.
        \item If the contribution is a dataset and\slash or model, the authors should describe the steps taken to make their results reproducible or verifiable. 
        \item Depending on the contribution, reproducibility can be accomplished in various ways. For example, if the contribution is a novel architecture, describing the architecture fully might suffice, or if the contribution is a specific model and empirical evaluation, it may be necessary to either make it possible for others to replicate the model with the same dataset, or provide access to the model. In general. releasing code and data is often one good way to accomplish this, but reproducibility can also be provided via detailed instructions for how to replicate the results, access to a hosted model (e.g., in the case of a large language model), releasing of a model checkpoint, or other means that are appropriate to the research performed.
        \item While NeurIPS does not require releasing code, the conference does require all submissions to provide some reasonable avenue for reproducibility, which may depend on the nature of the contribution. For example
        \begin{enumerate}
            \item If the contribution is primarily a new algorithm, the paper should make it clear how to reproduce that algorithm.
            \item If the contribution is primarily a new model architecture, the paper should describe the architecture clearly and fully.
            \item If the contribution is a new model (e.g., a large language model), then there should either be a way to access this model for reproducing the results or a way to reproduce the model (e.g., with an open-source dataset or instructions for how to construct the dataset).
            \item We recognize that reproducibility may be tricky in some cases, in which case authors are welcome to describe the particular way they provide for reproducibility. In the case of closed-source models, it may be that access to the model is limited in some way (e.g., to registered users), but it should be possible for other researchers to have some path to reproducing or verifying the results.
        \end{enumerate}
    \end{itemize}

\item {\bf Open access to data and code}
    \item[] Question: Does the paper provide open access to the data and code, with sufficient instructions to faithfully reproduce the main experimental results, as described in supplemental material?
    \item[] Answer: \answerYes{} 
    \item[] Justification: {The code is provided in the supplementary material, while all datasets are available publicly (details on procuring data can be found in the code).}
    \item[] Guidelines:
    \begin{itemize}
        \item The answer \answerNA{} means that paper does not include experiments requiring code.
        \item Please see the NeurIPS code and data submission guidelines (\url{https://neurips.cc/public/guides/CodeSubmissionPolicy}) for more details.
        \item While we encourage the release of code and data, we understand that this might not be possible, so \answerNo{} is an acceptable answer. Papers cannot be rejected simply for not including code, unless this is central to the contribution (e.g., for a new open-source benchmark).
        \item The instructions should contain the exact command and environment needed to run to reproduce the results. See the NeurIPS code and data submission guidelines (\url{https://neurips.cc/public/guides/CodeSubmissionPolicy}) for more details.
        \item The authors should provide instructions on data access and preparation, including how to access the raw data, preprocessed data, intermediate data, and generated data, etc.
        \item The authors should provide scripts to reproduce all experimental results for the new proposed method and baselines. If only a subset of experiments are reproducible, they should state which ones are omitted from the script and why.
        \item At submission time, to preserve anonymity, the authors should release anonymized versions (if applicable).
        \item Providing as much information as possible in supplemental material (appended to the paper) is recommended, but including URLs to data and code is permitted.
    \end{itemize}

\item {\bf Experimental setting/details}
    \item[] Question: Does the paper specify all the training and test details (e.g., data splits, hyperparameters, how they were chosen, type of optimizer) necessary to understand the results?
    \item[] Answer: \answerYes{} 
    \item[] Justification: {This information is provided in Section \ref{sec:experiments} and Appendix \ref{app:fed_cf:experimental_details}.}
    \item[] Guidelines:
    \begin{itemize}
        \item The answer \answerNA{} means that the paper does not include experiments.
        \item The experimental setting should be presented in the core of the paper to a level of detail that is necessary to appreciate the results and make sense of them.
        \item The full details can be provided either with the code, in appendix, or as supplemental material.
    \end{itemize}

\item {\bf Experiment statistical significance}
    \item[] Question: Does the paper report error bars suitably and correctly defined or other appropriate information about the statistical significance of the experiments?
    \item[] Answer: \answerNo{} 
    \item[] Justification: {We ran the experiments with different seeds for random data splitting of the calibration and test sets, and found the deviations to be too small to be legibly plotted in the figures.}
    \item[] Guidelines:
    \begin{itemize}
        \item The answer \answerNA{} means that the paper does not include experiments.
        \item The authors should answer \answerYes{} if the results are accompanied by error bars, confidence intervals, or statistical significance tests, at least for the experiments that support the main claims of the paper.
        \item The factors of variability that the error bars are capturing should be clearly stated (for example, train/test split, initialization, random drawing of some parameter, or overall run with given experimental conditions).
        \item The method for calculating the error bars should be explained (closed form formula, call to a library function, bootstrap, etc.)
        \item The assumptions made should be given (e.g., Normally distributed errors).
        \item It should be clear whether the error bar is the standard deviation or the standard error of the mean.
        \item It is OK to report 1-sigma error bars, but one should state it. The authors should preferably report a 2-sigma error bar than state that they have a 96\% CI, if the hypothesis of Normality of errors is not verified.
        \item For asymmetric distributions, the authors should be careful not to show in tables or figures symmetric error bars that would yield results that are out of range (e.g., negative error rates).
        \item If error bars are reported in tables or plots, the authors should explain in the text how they were calculated and reference the corresponding figures or tables in the text.
    \end{itemize}

\item {\bf Experiments compute resources}
    \item[] Question: For each experiment, does the paper provide sufficient information on the computer resources (type of compute workers, memory, time of execution) needed to reproduce the experiments?
    \item[] Answer: \answerYes{} 
    \item[] Justification: We include this information in Appendix~\ref{app:fed_cf:experimental_details} in Section \ref{par:fed_cf:compute_resources}.
    \item[] Guidelines:
    \begin{itemize}
        \item The answer \answerNA{} means that the paper does not include experiments.
        \item The paper should indicate the type of compute workers CPU or GPU, internal cluster, or cloud provider, including relevant memory and storage.
        \item The paper should provide the amount of compute required for each of the individual experimental runs as well as estimate the total compute. 
        \item The paper should disclose whether the full research project required more compute than the experiments reported in the paper (e.g., preliminary or failed experiments that didn't make it into the paper). 
    \end{itemize}
    
\item {\bf Code of ethics}
    \item[] Question: Does the research conducted in the paper conform, in every respect, with the NeurIPS Code of Ethics \url{https://neurips.cc/public/EthicsGuidelines}?
    \item[] Answer: \answerYes{} 
    \item[] Justification: {The research adheres to the code of ethics.}
    \item[] Guidelines:
    \begin{itemize}
        \item The answer \answerNA{} means that the authors have not reviewed the NeurIPS Code of Ethics.
        \item If the authors answer \answerNo, they should explain the special circumstances that require a deviation from the Code of Ethics.
        \item The authors should make sure to preserve anonymity (e.g., if there is a special consideration due to laws or regulations in their jurisdiction).
    \end{itemize}

\item {\bf Broader impacts}
    \item[] Question: Does the paper discuss both potential positive societal impacts and negative societal impacts of the work performed?
    \item[] Answer: \answerYes{} 
    \item[] Justification: {We discuss the impacts in Appendix \ref{app:fedcf:impact_statement}.}
    \item[] Guidelines:
    \begin{itemize}
        \item The answer \answerNA{} means that there is no societal impact of the work performed.
        \item If the authors answer \answerNA{} or \answerNo, they should explain why their work has no societal impact or why the paper does not address societal impact.
        \item Examples of negative societal impacts include potential malicious or unintended uses (e.g., disinformation, generating fake profiles, surveillance), fairness considerations (e.g., deployment of technologies that could make decisions that unfairly impact specific groups), privacy considerations, and security considerations.
        \item The conference expects that many papers will be foundational research and not tied to particular applications, let alone deployments. However, if there is a direct path to any negative applications, the authors should point it out. For example, it is legitimate to point out that an improvement in the quality of generative models could be used to generate Deepfakes for disinformation. On the other hand, it is not needed to point out that a generic algorithm for optimizing neural networks could enable people to train models that generate Deepfakes faster.
        \item The authors should consider possible harms that could arise when the technology is being used as intended and functioning correctly, harms that could arise when the technology is being used as intended but gives incorrect results, and harms following from (intentional or unintentional) misuse of the technology.
        \item If there are negative societal impacts, the authors could also discuss possible mitigation strategies (e.g., gated release of models, providing defenses in addition to attacks, mechanisms for monitoring misuse, mechanisms to monitor how a system learns from feedback over time, improving the efficiency and accessibility of ML).
    \end{itemize}
    
\item {\bf Safeguards}
    \item[] Question: Does the paper describe safeguards that have been put in place for responsible release of data or models that have a high risk for misuse (e.g., pre-trained language models, image generators, or scraped datasets)?
    \item[] Answer: \answerNA{} 
    \item[] Justification: {The paper and described methods do not have such risks.}
    \item[] Guidelines:
    \begin{itemize}
        \item The answer \answerNA{} means that the paper poses no such risks.
        \item Released models that have a high risk for misuse or dual-use should be released with necessary safeguards to allow for controlled use of the model, for example by requiring that users adhere to usage guidelines or restrictions to access the model or implementing safety filters. 
        \item Datasets that have been scraped from the Internet could pose safety risks. The authors should describe how they avoided releasing unsafe images.
        \item We recognize that providing effective safeguards is challenging, and many papers do not require this, but we encourage authors to take this into account and make a best faith effort.
    \end{itemize}

\item {\bf Licenses for existing assets}
    \item[] Question: Are the creators or original owners of assets (e.g., code, data, models), used in the paper, properly credited and are the license and terms of use explicitly mentioned and properly respected?
    \item[] Answer: \answerYes{} 
    \item[] Justification: {We provide citations and licenses for the datasets we used in Appendix \ref{app:datasets}, and adhered to their respective licenses.}
    \item[] Guidelines:
    \begin{itemize}
        \item The answer \answerNA{} means that the paper does not use existing assets.
        \item The authors should cite the original paper that produced the code package or dataset.
        \item The authors should state which version of the asset is used and, if possible, include a URL.
        \item The name of the license (e.g., CC-BY 4.0) should be included for each asset.
        \item For scraped data from a particular source (e.g., website), the copyright and terms of service of that source should be provided.
        \item If assets are released, the license, copyright information, and terms of use in the package should be provided. For popular datasets, \url{paperswithcode.com/datasets} has curated licenses for some datasets. Their licensing guide can help determine the license of a dataset.
        \item For existing datasets that are re-packaged, both the original license and the license of the derived asset (if it has changed) should be provided.
        \item If this information is not available online, the authors are encouraged to reach out to the asset's creators.
    \end{itemize}

\item {\bf New assets}
    \item[] Question: Are new assets introduced in the paper well documented and is the documentation provided alongside the assets?
    \item[] Answer: \answerNA{} 
    \item[] Justification: {We have not released any assets. Once our code is made public, we will release it under the Apache-2.0 license.}
    \item[] Guidelines:
    \begin{itemize}
        \item The answer \answerNA{} means that the paper does not release new assets.
        \item Researchers should communicate the details of the dataset\slash code\slash model as part of their submissions via structured templates. This includes details about training, license, limitations, etc. 
        \item The paper should discuss whether and how consent was obtained from people whose asset is used.
        \item At submission time, remember to anonymize your assets (if applicable). You can either create an anonymized URL or include an anonymized zip file.
    \end{itemize}

\item {\bf Crowdsourcing and research with human subjects}
    \item[] Question: For crowdsourcing experiments and research with human subjects, does the paper include the full text of instructions given to participants and screenshots, if applicable, as well as details about compensation (if any)? 
    \item[] Answer: \answerNA{} 
    \item[] Justification: N/A
    \item[] Guidelines:
    \begin{itemize}
        \item The answer \answerNA{} means that the paper does not involve crowdsourcing nor research with human subjects.
        \item Including this information in the supplemental material is fine, but if the main contribution of the paper involves human subjects, then as much detail as possible should be included in the main paper. 
        \item According to the NeurIPS Code of Ethics, workers involved in data collection, curation, or other labor should be paid at least the minimum wage in the country of the data collector. 
    \end{itemize}

\item {\bf Institutional review board (IRB) approvals or equivalent for research with human subjects}
    \item[] Question: Does the paper describe potential risks incurred by study participants, whether such risks were disclosed to the subjects, and whether Institutional Review Board (IRB) approvals (or an equivalent approval/review based on the requirements of your country or institution) were obtained?
    \item[] Answer: \answerNA{} 
    \item[] Justification: N/A
    \item[] Guidelines:
    \begin{itemize}
        \item The answer \answerNA{} means that the paper does not involve crowdsourcing nor research with human subjects.
        \item Depending on the country in which research is conducted, IRB approval (or equivalent) may be required for any human subjects research. If you obtained IRB approval, you should clearly state this in the paper. 
        \item We recognize that the procedures for this may vary significantly between institutions and locations, and we expect authors to adhere to the NeurIPS Code of Ethics and the guidelines for their institution. 
        \item For initial submissions, do not include any information that would break anonymity (if applicable), such as the institution conducting the review.
    \end{itemize}

\item {\bf Declaration of LLM usage}
    \item[] Question: Does the paper describe the usage of LLMs if it is an important, original, or non-standard component of the core methods in this research? Note that if the LLM is used only for writing, editing, or formatting purposes and does \emph{not} impact the core methodology, scientific rigor, or originality of the research, declaration is not required.
    \item[] Answer: \answerNA{} 
    \item[] Justification: {LLMs were \textbf{not} used during development.}
    \item[] Guidelines:
    \begin{itemize}
        \item The answer \answerNA{} means that the core method development in this research does not involve LLMs as any important, original, or non-standard components.
        \item Please refer to our LLM policy in the NeurIPS handbook for what should or should not be described.
    \end{itemize}

\end{enumerate}

\end{document}